\theoremstyle{thmstyleone}%
\newtheorem{theorem}{Theorem}
\theoremstyle{thmstyletwo}%
\newtheorem{remark}{Remark}%
\theoremstyle{thmstylethree}%
\newtheorem{definition}{Definition}%
\newcommand{\ignore}[1]{}
\newtheorem{lemma}{Lemma}
\newcounter{relctr} 
\everydisplay\expandafter{\the\everydisplay\setcounter{relctr}{0}} 
\newcommand\labelrel[2]{%
  \begingroup
    \refstepcounter{relctr}%
    \stackrel{\textnormal{(\alph{relctr})}}{\mathstrut{#1}}%
    \originallabel{#2}%
  \endgroup
}
\newcommand\mydots{\ifmmode\ldots\else\makebox[1em][c]{.\hfil.\hfil.}\fi}
\newcommand{\W}{{\mathcal{W}}}
\newcommand{\WU}[1]{\W^{#1}}
\newcommand{\Wdist}[1]{{\W^{#1}}}
\newcommand{\Dist}{{d}}
\newcommand{\DistU}[1]{\Dist^{#1}}
\newcommand{\DistUL}[2]{{\DistU{#1}_{#2}}}
\newcommand{\NumDist}{n}
\newcommand{\X}{{\Omega}}
\newcommand{\XU}[1]{\X^{#1}}
\newcommand{\XUL}[2]{\XU{#1}_{#2}}
\newcommand{\SizeDist}{\vert \Omega \vert }
\newcommand{\SizeDistU}[1]{{\vert \Omega ^{#1}\vert}}
\newcommand{\optsymbol}{*}
\newcommand{\PI}{{\bm p}}
\newcommand{\PIU}[1]{\PI^{#1}}
\newcommand{\PIUL}[2]{\PIU{#1}_{#2}}
\newcommand{\PIs}{{\PI}^{\optsymbol}}
\newcommand{\PIsU}[1]{{\PIs}^{#1}}
\newcommand{\PIsUL}[2]{\PIsU{#1}_{#2}}
\newcommand{\QI}{{\bm q}}
\newcommand{\QIU}[1]{\QI^{#1}}
\newcommand{\QIUL}[2]{\QIU{#1}_{#2}}
\newcommand{\RI}{{\bm r}}
\newcommand{\RIU}[1]{\RI^{#1}}
\newcommand{\RIUL}[2]{\RIU{#1}_{#2}}
\newcommand{\RIs}{{\RI}^{\optsymbol}}
\newcommand{\RIsU}[1]{{\RIs}^{#1}}
\newcommand{\RIsUL}[2]{\RIsU{#1}_{#2}}
\newcommand{\G}{{\mathcal{G}}}
\newcommand{\Gmap}[1]{{\G\left(#1 \right)}}
\newcommand{\HH}{{\mathcal{H}}}
\newcommand{\HU}[1]{\HH^{#1}}
\newcommand{\HUL}[2]{\HU{#1}_{#2}}
\newcommand{\inner}[2]{\left \langle #1,#2 \right \rangle}
\newcommand{\inlineitem}[1][]{%
\ifnum\enit@type=\tw@
    {\descriptionlabel{#1}}
  \hspace{\labelsep}%
\else
  \ifnum\enit@type=\z@
       \refstepcounter{\@listctr}\fi
    \quad\@itemlabel\hspace{\labelsep}%
\fi}
\begin{document}

\title[A Family of Pairwise MMOT that Define a Generalized Metric]{A Family of Pairwise Multi-Marginal Optimal
Transports that Define a Generalized Metric}

\author[1]{\fnm{Liang} \sur{Mi}}\email{icemiliang@gmail.com}

\author[2]{\fnm{Azadeh} \sur{Sheikholeslami}}\email{asheikholeslami@suffolk.edu}

\author*[3]{\fnm{Jos\'e} \sur{Bento}}\email{jose.bento@bc.edu}

\affil[1]{\orgdiv{Imaging and Computer Vision Research}, \orgname{Apple Inc.}, \orgaddress{\street{1 Infinite Loop}, \city{Cupertino}, \postcode{95014}, \state{CA}, \country{USA}}}

\affil[2]{\orgdiv{Computer Science Department}, \orgname{Suffolk University}, \orgaddress{\street{73 Tremont St}, \city{Boston}, \postcode{02108}, \state{MA}, \country{USA}}}

\affil*[3]{\orgdiv{Computer Science Department}, \orgname{Boston College}, \orgaddress{\street{140 Commonwealth Ave.}, \city{Chestnut Hill}, \postcode{02467}, \state{MA}, \country{USA}}}

\abstract{The Optimal transport (OT) problem is rapidly finding its way into machine learning. Favoring its use are its metric properties. Many problems admit solutions with guarantees only for objects embedded in metric spaces, and the use of non-metrics can complicate  solving them. Multi-marginal OT (MMOT) generalizes OT to simultaneously transporting multiple distributions. It captures important relations that are missed if the transport only involves two distributions. Research on MMOT, however, has been focused on its existence, uniqueness, practical algorithms, and the choice of cost functions. There is a lack of discussion on the metric properties of MMOT, which limits its theoretical and practical use. Here, we prove new generalized metric properties for a  family of pairwise MMOTs. We first explain the difficulty of proving this via two negative results. Afterward, we prove the
MMOTs' metric properties. Finally, we show that the generalized triangle inequality of this family of MMOTs cannot be improved. We illustrate the superiority of our MMOTs over other generalized metrics, and over non-metrics in both synthetic and real tasks.}

\keywords{Optimal Transport, Multi-Marginal Optimal Transport, Metric Spaces, Generalized Metric Spaces, Clustering, Hypergraph Clustering}

\maketitle

\section{Introduction}\label{sec1}

 Let $(\XU{1}{,\mathcal{F}^1},\PIU{1})$ and $(\XU{2}{,\mathcal{F}^2},\PIU{2})$ be two  probability spaces. Given a cost function  $\DistU{}: \XU{1} \times \XU{2} \rightarrow \mathbb{R}^{\geq 0}$, and $\ell \geq 1$, the (Kantorovich) \emph{Optimal Transport} (OT) problem  \cite{kantorovich1942translocation} seeks:
%
%
\begin{equation}\label{eq:ot}
    \inf_{\PIU{}} \bigg(\;{\scaleobj{.9}{\int}}_{\XU{1} \times \XU{2}} \Dist^{\ell} {\rm d}\PIU{}\bigg)^{\frac{1}{\ell}} 
    \text{, subject to: } 
    {\scaleobj{.9}{\int}}_{\XU{1}} {\rm d}\PIU{} = \PIU{2} \text{ and } {\scaleobj{.9}{\int}}_{\XU{2}} {\rm d}\PIU{} = \PIU{1},
\end{equation}
%
where the $\inf$ is over measures $\PIU{}$ on $\XU{1} \times \XU{2}$. 
Problem \eqref{eq:ot} is typically studied under the assumptions that $\XU{1}$ and $\XU{2}$ are in a Polish space on which $d$ is a metric, in which case the minimum of \eqref{eq:ot} is the \emph{Wasserstein distance} (WD). The WD is popular in many applications including shape interpolation~\cite{solomon2015convolutional}, generative modeling~\cite{arjovsky2017wasserstein,fan2017point}, domain adaptation~\cite{damodaran2018deepjdot}, and dictionary learning~\cite{schmitz2018wasserstein}.
The WD is a metric on the space of probability measures \cite{ambrosio2013user}, and this property is useful in many ML tasks, e.g., clustering \cite{xing2003distance,hartigan1975clustering}, nearest-neighbor search \cite{clarkson2006nearest,clarkson1999nearest,beygelzimer2006cover}, and outlier detection \cite{angiulli2002fast}. Indeed, some of these tasks are tractable, or allow theoretical guarantees, when done on a metric space. E.g., finding the nearest neighbor \cite{clarkson2006nearest,clarkson1999nearest,beygelzimer2006cover} or the diameter \cite{indyk1999sublinear} of a dataset requires a polylogarithimic computational effort under metric assumptions;  approximation algorithms for clustering rely on metric assumptions, whose absence worsens known bounds \cite{ackermann2010clustering}; also, \cite{memoli2011gromov} uses the metric properties of the WD to study object matching via metric invariants.

Recently, a generalization of OT to multiple marginal measures has gained attention. Given probability spaces $(\XU{i}{,\mathcal{F}^i},\PIU{i})$, $ i  = 1,\mydots,\NumDist$, a function $d:\XU{1}\times\mydots\times\XU{\NumDist}\mapsto \mathbb{R}^{\geq 0}$, a number $\ell \geq 1$, and $\XU{ \backslash i}\triangleq\XU{1}\times\mydots\times\XU{i-1}\times \XU{i+1}\times\mydots\times \XU{\NumDist}$, Multi-Marginal Optimal Transport (MMOT) seeks:
%
\begin{align}\label{eq:multi_ot}
    \inf_{\PIU{}} \left({\scaleobj{.9}{\int}}_{\XU{1} \times\mydots\times \XU{\NumDist}} \hspace{0cm} \Dist^{\ell} {\rm d}\PIU{} \right)^{\frac{1}{\ell}}
    \text{, subject to: }
    {\scaleobj{.9}{\int}}_{\XU{\backslash i}} \hspace{0cm} {\rm d}\PIU{} = \PIU{i},  i  = 1,\mydots,\NumDist,
\end{align}
%
 where the infimum is taken over measures $\PIU{}$ on $\XU{1} \times\mydots\times \XU{\NumDist}$. The term MMOT was coined in~\cite{pass2012local}, and was surveyed by the same authors in \cite{pass2015multi}.

Unfortunately, there is a lack of discussion about the (generalized) metric properties of MMOT. 
Much of the discussion on MMOT has focused on the existence of a minimizer, the uniqueness and structure of
both Monge and Kantorovich solutions, applications, practical algorithms, and the choice of the cost function \cite{pass2012multi,peyre2019computational,gerolin2019duality,moameni2017solutions}.
Since the metric property of the WD is useful in  many applications, understanding when the (potential) minimum of \eqref{eq:multi_ot}, $\W(\PIU{1},\mydots,\PIU{\NumDist})$, a \emph{multi-way distance}, has metric-like properties is critical.
For example, metric properties can improve distance-based clustering,
so too can generalized metrics improve clustering based on multi-way distances.
A precise definition of generalized metrics and of the generalized triangle inequality that these must satisfy is presented in Section \ref{sec:def}.
Later, in Section \ref{sec:num_exp}, we illustrate the improvement in using metrics on a task of clustering chemical compounds.
Importantly, several algorithms in
\cite{xing2003distance, hartigan1975clustering, clarkson2006nearest, clarkson1999nearest, beygelzimer2006cover, angiulli2002fast, indyk1999sublinear, ackermann2010clustering}, and more,
which use distances including WD as input, have guarantees if the distances are metrics. They extend to feeding off multi-distances, and hence can use MMOT, and have guarantees under generalized metrics similar to those under classic metrics. We now exemplify these extensions, and their potential applications:



{\bf Example 1:} Given a set $S$ with $\NumDist$ distinct distributions we can find its $3$-diameter $\Delta \triangleq \max_{\PIU{1},\PIU{2},\PIU{3} \in S} \W(\PIU{1},\PIU{2},\PIU{3})$ with  $\NumDist \choose 3$ evaluations of $\W$.
What if $\W$ satisfies the generalized triangle inequality  $\W^{1,2,3}\leq \W^{4,2,3} + \W^{1,4,3} + \W^{1,2,4}$, where $\W^{i,j,k} \triangleq \W(\PIU{i},\PIU{j},\PIU{k})$?
Let $\Delta = \W(\PIsU{1},\PIsU{2},\PIsU{3})$ for some $\PIsU{1},\PIsU{2},\PIsU{3}$ and let us extend the notation $\W^{i,j,k}$ such that e.g. $\W^{i,*j,k} \triangleq \W(\PIU{i},\PIsU{j},\PIU{k})$.
We can now show that for at least $\NumDist/3$ distribution triplets $\W \geq \Delta/3$. 
Indeed, for all $\PIU{4} \in S$, we cannot simultaneously have $\W^{4,*2,*3},\W^{*1,*2,4},\W^{*1,4,*3}< \Delta/3$. Therefore, if we evaluate $\W$ on random distribution triplets, the probability of finding a triple for which $\W > \Delta/3$ is at least $\Omega(1/\NumDist^2)$\footnote{The asymptotic notation used in this paper is as follows:
$f(n)=\mathcal{O}(g(n))$ denotes an asymptotically tight upper-bound on $f(n)$, i.e.,  there exist $M, n_0>0$ such that $0 \leq f(n) \leq M g(n)$ for all $n\geq n_0$;
$f(n)=\Omega(g(n))$ denotes an asymptotic lower-bound on $f(n)$, i.e., there exists $m, n_0>0$ such that $0 \leq m g(n)\leq f(n)$ for all $n\geq n_0$; $f(n)=\Theta(g(n))$ denotes an asymptotically tight bound on $f(n)$, i.e., there exist $m, M, n_0>0$,  such that $0 \leq m g(n)  \leq f(n)\leq M g(n)$ for all $n\geq n_0$. This notation is standard and can be found, e.g., in \cite[Chapter 3]{cormen2009introduction}.}. 
Hence,
we are guaranteed to find a approximation $\Delta'$ of $\Delta$ that satisfies $\Delta \geq \Delta' > \Delta/3$ (also called a $(1/3)$-approximation) with only $\mathcal{O}(\NumDist^2)$ evaluations of $\W$ on average, an improvement over $\NumDist \choose 3$. Diameter estimation relates to outlier detection \cite{angiulli2002fast} which is critical e.g. in cybersecurity \cite{singh2012outlier}.

{\bf Example 2:} Let $S$ be as above. We can compute the average $A \triangleq   \sum_{\PIU{1},\PIU{2},\PIU{3} \in S} \Wdist{}({\PIU{1},\PIU{2},\PIU{3}})/{ n \choose 3}$ with $n \choose 3$ evaluations of $\Wdist{}$. We can estimate $A$ by averaging $\W$ over a set with $\mathcal{O}(n^2)$ distinct triplets randomly sampled from $S$,  improving over $n \choose 3$.

If $\Wdist{}$ is a generalized metric, an argument as in Example 1 shows that with high probability we do not miss triplets with large $\Wdist{}$, which is the critical step to prove that we approximate $A$ well.
Average estimation is critical e.g. in differential privacy \cite{dwork2009differential}.

{\bf Example 3:} Let $S$ be as above. Consider building an hypergraph with nodes $S$ and hyperedges defined as follows. For each distinct triplet $(\PIU{1},\PIU{2},\PIU{3})$ for which $\Wdist{}(\PIU{1},\PIU{2},\PIU{3}) < thr$, a constant threshold, include it as an hyperedge. Hypergraphs are increasingly important in modern ML, specially for clustering using multiwise relationships among objects \cite{ghoshdastidar2015provable, purkait2016clustering}. Let $\Wdist{}$ satisfy the triangle inequality in Example 1 and be invariant under arguments permutations. Let us again define $\Wdist{}(\PIU{i},\PIU{j},\PIU{k}) = \Wdist{i,j,k}$.
One can prove that, for any $\PIU{1},\PIU{2},\PIU{3}$, and $\PIU{4}$, $\Wdist{1,2,3} \geq \max\{ \Wdist{1,2,4}-\Wdist{1,3,4}-\Wdist{2,3,4} , \Wdist{2,3,4}-\Wdist{1,3,4}-\Wdist{1,2,4}, \Wdist{1,3,4}- \Wdist{1,2,4}-\Wdist{2,3,4}\}$. We can use this inequality to quickly ignore triplets with large $\Wdist{1,2,3}$ without evaluating them: plug into it already computed values for $\Wdist{1,2,4}$, $\Wdist{1,3,4}$, $\Wdist{2,3,4}$, do this for multiple $\PIU{4}$ to try to maximize the r.h.s, and check if the r.h.s. is larger than $thr$.

In this paper, we show that an important family of pairwise MMOTs defines a generalized metric. 
The idea and advantages of using pairwise-type MMOTs is not new. Previous work \cite{altschuler2020polynomial, altschuler2021hardness, benamou2015iterative, benamou2016numerical, fan2021complexity, haasler2021multi, haasler2021multimarginal}, however, uses them to make a point about concepts other than their metric properties. Most often, the focus is on the polynomial computability of MMOT under extra assumptions. To the best of our knowledge, we are the first to study the generalized metric properties of families of pairwise MMOT.
There are many applications of MMOT, and of pairwise-type MMOT more specifically. 
In addition to the applications referenced in the aforementioned papers, the survey \cite{pass2015multi} lists applications in economics, physics, and financial mathematics.
Specific examples of applications that use pairwise MMOT and that may benefit from the generalized metric properties we prove are locality sensitive hashing, labeling problem in classification, image registration, multi-agent matching \cite{li2019pairwise}, and optimal coupling of multiple random variables \cite{angel2019pairwise}.

The rest of this paper is organized as follows. We first explain the difficulty of proving this via two negative results (Sections \ref{sec:no_gluing} and \ref{sec:d_n_metric_not_enough}). We present our main results on the generalized metric properties of this family of pairwise MMOT's in Section \ref{sec:MMOT_metric_properties_main}. We present a proof for a simple case to illustrate the key ideas of the main proof in Section \ref{sec:proof_n_3_ell_1}. Finally, with various numerical examples we show that a MMOT from our family, and which defines an n-metric, improves the task of clustering graphs.
%
\section{Definitions and setup}\label{sec:def}
\textbf{Lists.} We write $s_1,\mydots,s_k$ as $s_{1:k}$, $\XU{1},\mydots,\XU{k}$ as $\XU{1:k}$, and $A_{s_1,\mydots,s_k}$ as $A_{s_{1:k}}$. 
Note that $A_{s_{1}:s_{k}}$ differs from $A_{s_{1:k}}$. Assuming $s_k > s_1$, then we have $A_{s_{1}:s_{k}} \equiv A_{s_{1}},A_{s_{1}+1},A_{s_{1}+2},\mydots,A_{s_{k}}$. By itself, $1:i$ has no meaning, and it does not mean $1,\mydots,i$. 
For $i\in\mathbb{N}$, we let $[i] \triangleq \{1,\mydots,i\}$.\\ 

 \textbf{Generalized inner-product.} Given two equi-multidimensional arrays $A$ and $B$, and $\ell\in\mathbb{N}$, we define \[\inner{A}{B}_{\ell} 
 \triangleq \sum_{s_{1:k}} (A_{s_{1:k}})^\ell B_{s_{1:k}} 
 =  \sum^{\XU{1}}_{s_1=1}\dots  \sum^{\XU{k}}_{s_k=1}(A_{s_1,\dots,s_k})^\ell B_{s_1,\dots,s_k},\] where $(\cdot)^\ell$ is element-wise  $\ell$th power. Later on, $A$ will be an array of distances and $B$ will be an array of probabilities.
 
 \textbf{Probability spaces.} To facilitate exposition, we state our main contributions for probability spaces with a  finite sample space in $\X$, an event set $\sigma$-algebra which is the power set of the sample space, and a probability measure described by a probability mass function\ignore{, but they extend to more general settings}. We refer to probability mass functions using bold letters, e.g. $\PI$, $\QI$, $\RI$, etc.

When talking about $\NumDist$ probability spaces, the $i$th space has sample space $\XU{i} = \{\XUL{i}{1:\SizeDistU{i}}\} \subseteq \X $, an event space $2^{\XU{i}}$, 
and a probability mass function $\PIU{i}$, or $\QIU{i}$, or $\RIU{i}$, etc. Variable $\SizeDistU{i}$ is the number of atoms in $\XU{i}$.
Symbol $\PIUL{i}{s}$ denotes the probability of the atomic event $\{\XUL{i}{s}\}$. Without loss of generality (w.l.o.g.) we assume $\PIUL{i}{s} > 0, \forall i\in[\NumDist], \forall s\in[\SizeDistU{i}]$.
Our notation assumes that atoms can be indexed, but our results extend beyond this assumption.
W.l.o.g., we assume that $\XUL{i}{s} = \XUL{i}{t}$ if and only if $s = t$.

Symbol $\PIU{i_{1:k}}$ denotes a mass function for the probability space with sample space $\XU{i_1}\times\mydots\times\XU{i_k}$ and event space $2^{\XU{i_1}\times\mydots\times\XU{i_k}}$. In particular, $\PIUL{i_{1:k}}{s_{1:k}}$ (i.e. $\PIUL{i_{1},\mydots,i_k}{s_1,\mydots,s_{k}}$) is the probability of the atomic event $\{(\XUL{i_1}{s_1},\mydots,\XUL{i_k}{s_k})\}$.
We use $\PIU{i_{1:k} \mid j_{1:r}}$ to denote a probability mass function for the probability space with sample space $\XU{i_1}\times\mydots\times\XU{i_k}$, and event space $2^{\XU{i_1}\times\mydots\times\XU{i_k}}$, such that $\PIUL{i_{1:k} \mid j_{1:r}}{s_{1:k} \mid t_{1:r}} \triangleq {\PIUL{i_1,\mydots,i_k , j_1,\mydots,j_r}{s_1,\mydots,s_k , t_1,\mydots,t_r} }/{\PIUL{j_1,\mydots,j_r}{ t_1,\mydots,t_r} }$, i.e. a conditional probability. 

\begin{definition}[Gluing map]\label{def:new_gluing_map} Consider a mass function $\QIU{k}$ over  $\XU{k}$ and $\NumDist-1$ conditional mass functions $\{\QIU{i \mid k}\}_{i \in [\NumDist]\backslash\{k\}}$ over $\{\XU{i}\}_{i \in [\NumDist]\backslash\{k\}}$. The map $\G$,  that is called a gluing map, defines the  mass function $\PI$ over $\XU{1}\times\mydots \times\XU{\NumDist}$ as
\vspace{-0.15cm}
\begin{align} \label{eq:def_of_G_1}
 \PI&=\Gmap{\QIU{k},\{\QIU{i \mid k}\}_{i \in [\NumDist]\backslash \{k\}}} 
 . 
 \end{align}
%
To be more specific,
$
\PIUL{}{s_1,\mydots,s_n} = \QIUL{k}{s_k} \prod_{i \in [\NumDist]\backslash \{k\}} \QIUL{i\mid k}{s_i }.
$
\end{definition}

\textbf{Distances and metrics.} We use ``distance'' to refer to an object that, depending on extra assumptions, might, or might not,  have the properties of a (generalized) metric. For a metric we use the standard definition, and for generalized metrics we use the definitions in \cite{kiss2018generalization}.

The definition in \cite{kiss2018generalization} is the same as ours but is expressed in a slightly different form. In particular, the $C(n)$ in our Definition \ref{def:gen_metric_d} will correspond to $1/K_n$ in \cite{kiss2018generalization}. Also, \cite{kiss2018generalization} expresses their generalized triangle inequality by replacing the $r$th variable inside a multi-distance with a new variable, while in our definition we remove the $r$th variable and append a new variable at the end of the argument's list. Given that both definitions also require permutation invariance of arguments to the multi-distance, these two definitions are equivalent.

Finally, our notions of metric and generalized metric are more general than usual in the sense that they support the use of different distance functions depending on the spaces from  where we are drawing elements. This grants an extra layer of generality to our results.

\begin{definition}[Metric]\label{def:classic_metric}
Let $\ignore{\Dist  = }\{\DistU{i,j}\}_{i,j}$ be a set of distances of the form $\DistU{i,j}:\XU{i}\times\XU{j}\mapsto \mathbb{R}$ and $\DistU{i,j}(\XUL{i}{s},\XUL{j}{t}) \triangleq \DistUL{i,j}{s,t}$.
We say that \emph{$\DistU{}$ is a metric} when, for any $i,j,k$, and any $s\in [\SizeDistU{{i}}], r\in [\SizeDistU{{j}}], t\in [\SizeDistU{{k}}]$, we have:  1) $\DistU{i,j}_{r,s} \geq 0$; 2) $\DistU{i,j}_{r,s} =  \DistU{j,i}_{s,r}$; 3) $\DistU{i,j}_{s,r} = 0 \text{ iff } \XUL{i}{s} = \XUL{j}{r}$; 4) $\DistU{i,j}_{s,r} \leq \DistU{i,k}_{s,t} + \DistU{k,j}_{t,r}$.
\end{definition}

\begin{definition}[Generalized metric]\label{def:gen_metric_d}
Let $\ignore{\Dist  = }\{\DistU{i_{1:\NumDist}}\}_{i_{1:\NumDist}}$ be a set of distances of the form $\DistU{i_{1:\NumDist}}:\XU{i_1}\times\mydots\times\XU{i_\NumDist} \mapsto \mathbb{R}$ and $\DistU{i_{1:\NumDist}}(\XUL{i_1}{s_1},\mydots,\XUL{i_k}{s_\NumDist}) \triangleq \DistUL{i_{1:\NumDist}}{s_{1:\NumDist}}$. We say that $\Dist$ is an $(\NumDist,C(\NumDist))$-metric when, for any $i_{1:\NumDist+1}$ and $s_{1:\NumDist+1}$ with $s_r \in [\SizeDistU{i_r}]\forall r\in[\NumDist+1]$,  we have:\; 1) $\DistUL{i_{1:\NumDist}}{s_{1:\NumDist}} \geq 0$;\; 2) $\DistUL{i_{1:\NumDist}}{s_{1:\NumDist}} = \DistUL{\sigma(i_{1:\NumDist})}{\sigma(s_{1:\NumDist})}$ for any permutation  $\sigma$;\; 3) $\DistUL{i_{1:\NumDist}}{s_{1:\NumDist}} = 0 \text{ iff } \XUL{i_r}{s_r} = \XUL{i_t}{s_t},\; \forall  r,t\in[\NumDist]$;\; 4)  $C(\NumDist)\DistUL{i_{1:\NumDist}}{s_{1:\NumDist}} \leq \sum^{\NumDist}_{r=1} \DistUL{i_1,\mydots,i_{r-1},i_{r+1},\mydots,i_{\NumDist+1}}{s_1,\mydots,s_{r-1},s_{r+1},\mydots,s_{\NumDist+1}}$.
\end{definition}

\begin{definition}[Generalized metric on distributions]\label{def:gen_metric_prop_W}
Let $\Wdist{}$ be a map from $\NumDist$ probability spaces to $\mathbb{R}$ such that
$\Wdist{i_{1:\NumDist}} \triangleq \W(\PIU{i_{1:\NumDist}}) $ is the image of the probability spaces with indices $i_{1:\NumDist}$. For any $\NumDist+1$ probability spaces with samples  $\XU{1:\NumDist+1}$ and masses $\PIU{1:\NumDist+1}$, and any permutation $\sigma$,$\Wdist{}$ is an \emph{$(\NumDist,C(\NumDist))$-metric} if: \;
1) $\WU{1,\mydots,\NumDist} \geq 0$ \label{th:n_metric_i}; \;
2)
 $\WU{1,\mydots,\NumDist} = 0 \text{ iff } \PIU{i} = \PIU{j}$, $\XU{i}=\XU{j}, \ \forall i,j$\label{th:n_metric_iii};\;
3) $\WU{1,\mydots,\NumDist} = \WU{\sigma(1,\mydots,\NumDist)}$ \label{th:n_metric_ii};\;
4) $C(\NumDist) \WU{1,..,\NumDist} \leq \hspace{-0.14cm}\sum\limits^{\NumDist}_{r = 1} \WU{1,..,r-1,r+1,..,\NumDist+1}$. \label{th:n_metric_iv}

\end{definition}

\begin{remark}
Equalities $\PIU{i} = \PIU{j}$ and $\XU{i}=\XU{j}$, mean that
$\SizeDistU{i} = \SizeDistU{j}$, and that there exists a bijection $b^{i,j}(\cdot)$ from $[\SizeDistU{i}]$ to $[\SizeDistU{j}]$ such that 
$\PIUL{i}{s} = \PIUL{j}{b^{i,j}(s)}$ and $\XUL{i}{s}=\XUL{j}{b^{i,j}(s)}$, $\forall$ $s \in [\SizeDistU{i}]$.
\end{remark}
\begin{remark}
The inequality in the property 4 in Definition 4 is the generalized triangle inequality. Figure \ref{app:fig:geometric_analogy} depicts a geometric analogy for the generalized triangle inequality. Just like the measure (length) of one of the sides of
a triangle is always smaller than the sum of the measure of the two other sides, the measure (area/volume/etc) of one of the facets of a simplex is always smaller than the sum of the measure of the other
facets. The constant $C(n)$ is included in our definition so that we later can prove that a stronger inequality than $C(n) =1$ holds.
\end{remark}

\begin{figure}
\centering
{\includegraphics[scale=0.27]{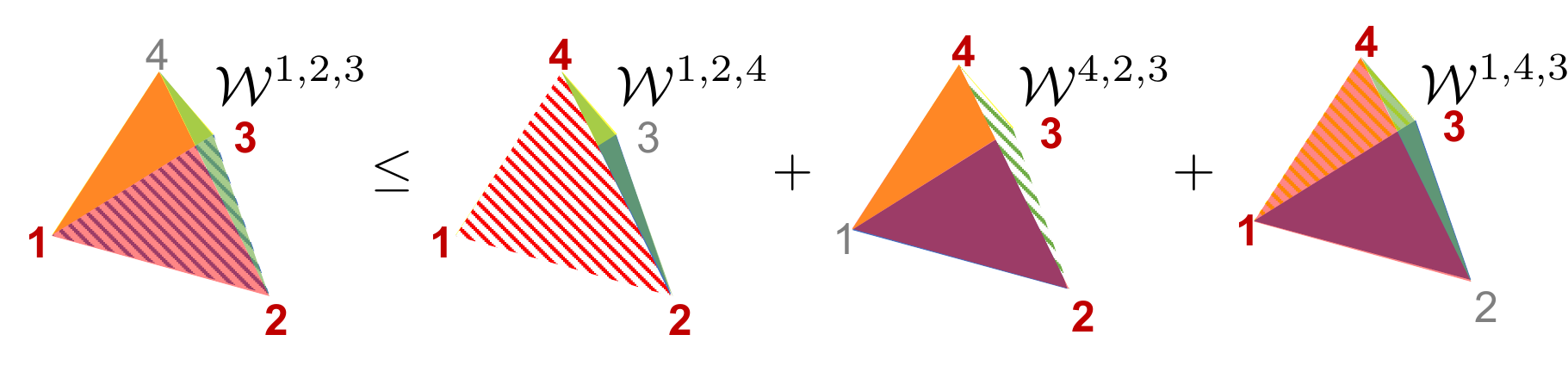}}
\caption{\small  Geometric analog of the generalized triangle inequality: the total area of any three faces in a tetrahedron is greater than that of the fourth face. 
} \label{app:fig:geometric_analogy}
\end{figure}
We abbreviate $(\NumDist,1)$-metric by $\NumDist$-metric. In our setup, the $\inf$ in \eqref{eq:multi_ot} is  always attained (recall, finite-spaces) and amounts to solving an LP. We refer to the minimizing  distributions by $\PIU{\optsymbol}, \QIU{\optsymbol}, \RIU{\optsymbol}$, etc. We define the following map from $\NumDist$ probability spaces to $\mathbb{R}$. The definition below amounts to \eqref{eq:multi_ot} when $\PI$'s are \emph{empirical measures}.
\begin{definition}[MMOT distance for finite spaces]\label{def:discrete_MMOT}
 Let $\ignore{\Dist = }\{\DistU{i_{1:\NumDist}}\}_{i_{1:\NumDist}}$ be a set of distances of the form $\DistU{i_{1:\NumDist}}:\XU{i_1}\times\mydots\times\XU{i_\NumDist}\mapsto\mathbb{R}$ and $\DistU{i_{1:\NumDist}}(\XUL{i_1}{s_1},\mydots,\XUL{i_k}{s_\NumDist}) \triangleq \DistUL{i_{1:\NumDist}}{s_{1:\NumDist}}$. 
The \emph{MMOT distance} associated with $\ignore{\Dist}\{\DistU{i_{1:\NumDist}}\}_{i_{1:\NumDist}}$  for $\NumDist$ probability spaces with masses $\PIU{i_{1:\NumDist}}$ over $\XU{i_{1:\NumDist}}$  is
\begin{equation} \label{eq:mmot_discrete}
\W(\PIU{i_{1:\NumDist}}) \triangleq \WU{i_{1:\NumDist}} = \min_{\RIU{}:\RIU{i_s} = \PIU{i_s} \forall s \in [\NumDist]} \inner{\DistU{i_{1:\NumDist}}}{\RI}_{\ell}^{\frac{1}{\ell}},
\end{equation}
where $\RIU{}$ is a mass function over $\XU{i_1}\times\mydots\times\XU{i_\NumDist}$, and  $\RIU{i_s}$ be the marginal probability of $\RI$ on $\XU{i_s}$.
\end{definition}
\begin{remark}
Solving \eqref{eq:mmot_discrete} amounts to solving a linear program when $\ell = 1$, and to solving a convex optimization problem when $\ell>1$ and $d$ is non-negative.
Solving a linear program can be done in polynomial time in the number of variables \cite{Karmarkar84anew}. However, for general MMOT, the number of
variables is exponential in the number of marginals $n$ and their support sizes $k$, which makes the overall run time be $\text{poly}(k^n)$. Even approximately solving MMOT in some general instances is NP-hard. However, under certain structural assumptions on the input costs/distances, we
can get tractable algorithms \cite{altschuler2021hardness,lin2019complexity}.
\end{remark}

\section{Main results}\label{sec:result}
To prove that MMOT leads to an $\NumDist$-metric, it is natural to extend the ideas in the classical proof that  WD is a metric. The hardest property to prove is the triangle inequality. 
Its proof for the WD follows from  a gluing lemma and the assumption that  $\Dist$ itself is a metric (Def. \ref{def:classic_metric}). Our hope is that we can prove the triangle inequality for MMOT if we can prove (a) a generalized gluing lemma and assume that (b) $\Dist$ is a generalized metric. Unfortunately, and as we explain in Sections \ref{sec:no_gluing} and \ref{sec:d_n_metric_not_enough}, (a) is not possible, and (b) is not sufficient. This requires developing completely new proofs.
%
\subsection{The gluing lemma does not generalize to higher dimensions}\label{sec:no_gluing}
The gluing lemma used to prove that WD is a metric is as follows. For its proof see  \cite{ambrosio2013user}, Lemma 2.1.
\begin{lemma}[Gluing lemma]\label{app:th:gluing_lemma}
Let $\PIU{1,3}$ and $\PIU{2,3}$ be arbitrary mass functions for $\XU{1}\times \XU{3}$ and $\XU{2}\times \XU{3}$, respectively, with the same  marginal, $\PIU{3}$, over $\XU{3}$. There exists a mass function 
 $\RIU{1,2,3}$ for $\XU{1}\times \XU{2}\times\XU{3}$ whose marginals over $\XU{1}\times\XU{3}$ and $\XU{2}\times\XU{3}$ equal $\PIU{1,3}$ and $\PIU{2,3}$ respectively.
\end{lemma}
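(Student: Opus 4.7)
The plan is to give an explicit construction of $\RIU{1,2,3}$ via the gluing map $\G$ of Def.~\ref{def:new_gluing_map}. Since the two given mass functions share $\PIU{3}$ as their marginal over $\XU{3}$, and since (by the WLOG assumption in Sec.~\ref{sec:def}) $\PIUL{3}{s_3}>0$ for every $s_3\in[\SizeDistU{3}]$, the conditionals $\PIUL{1\mid 3}{s_1\mid s_3}\triangleq \PIUL{1,3}{s_1,s_3}/\PIUL{3}{s_3}$ and $\PIUL{2\mid 3}{s_2\mid s_3}\triangleq \PIUL{2,3}{s_2,s_3}/\PIUL{3}{s_3}$ are well-defined probability mass functions over $\XU{1}$ and $\XU{2}$, respectively, for each fixed $s_3$. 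I would then set $\RIU{1,2,3}\triangleq \Gmap{\PIU{3},\{\PIU{1\mid 3},\PIU{2\mid 3}\}}$, so that $\RIUL{1,2,3}{s_1,s_2,s_3}=\PIUL{1\mid 3}{s_1\mid s_3}\,\PIUL{2\mid 3}{s_2\mid s_3}\,\PIUL{3}{s_3}$. Intuitively, this encodes the conditional independence $X_1\perp X_2\mid X_3$.

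The verification then splits into three short steps. First, $\RIUL{1,2,3}{}\ge 0$, because each of the three factors in the defining product is nonnegative. Second, summing in the order $s_1,s_2,s_3$ and using that both conditionals sum to $1$ for every fixed $s_3$ collapses the total mass to $\sum_{s_3}\PIUL{3}{s_3}=1$, so $\RIU{1,2,3}$ is a bona fide pmf. Third, summing $\RIUL{1,2,3}{s_1,s_2,s_3}$ over $s_2$ eliminates the $\PIUL{2\mid 3}{}$ factor and leaves $\PIUL{1\mid 3}{s_1\mid s_3}\,\PIUL{3}{s_3}=\PIUL{1,3}{s_1,s_3}$ by the very definition of $\PIUL{1\mid 3}{}$; the symmetric calculation, summing over $s_1$, recovers $\PIUL{2,3}{}$. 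These are precisely the two marginal identities the lemma asks for.

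The only subtle point — and the closest thing to an obstacle — is the well-definedness of the conditionals at atoms where $\PIUL{3}{s_3}=0$. The paper's standing positivity assumption sidesteps this entirely in our setting; if one wished to drop that assumption, one could define $\PIUL{i\mid 3}{\cdot\mid s_3}$ to be an arbitrary pmf whenever $\PIUL{3}{s_3}=0$, since the corresponding slice of $\RIU{1,2,3}$ would then contribute zero mass and the marginal identities would still hold.
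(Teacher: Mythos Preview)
Your construction is correct and is exactly the standard argument. The paper does not actually prove this lemma; it simply cites \cite{ambrosio2013user}, Lemma~2.1, whose proof is precisely the conditional-independence construction you wrote down. Your phrasing in terms of the paper's gluing map $\G$ (Def.~\ref{def:new_gluing_map}) is a nice touch, since it makes explicit that the same device the paper later deploys for the MMOT triangle inequality already subsumes the classical $\NumDist=2$ gluing lemma.
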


The way Lemma \ref{app:th:gluing_lemma} is used to prove WD's triangle inequality is as follows. Assume $\Dist$ is a metric (Def. \ref{def:classic_metric}). Let $\ell = 1$ for simplicity. Let $\PIsU{1,2}$, $\PIsU{1,3}$, and $\PIsU{2,3}$ be optimal transports such that $\WU{1,2} = \inner{\PIsU{1,2}}{\DistU{1,2}}$, $\WU{1,3} =\inner{\PIsU{1,3}}{\DistU{1,3}}$, and $\WU{2,3} =\inner{\PIsU{2,3}}{\DistU{2,3}}$.
Define $\RIU{1,2,3}$ as in Lemma \ref{app:th:gluing_lemma}, and let $\RIU{1,3}$, $\RIU{2,3}$, and $\RIU{1,2}$ be its bivariate marginals. We then have 
\begin{align*}
&\inner{\PIsU{1,2}}{\DistU{1,2}} \stackrel{\text{sub-optimal } \RI}{\leq} \inner{\RIU{1,2}}{\DistU{1,2}} \hspace{-0.cm}=\hspace{-0.cm}\sum_{s,t} \RIUL{1,2}{s,t}\DistUL{1,2}{s,t} \hspace{-0.cm}=\hspace{-0.cm} \sum_{s,t,l} \RIUL{1,2,3}{s,t,l}\DistUL{1,2}{s,t}   
\\
&\stackrel{\text{$d$ is metric}}{\leq}\hspace{-0.1cm}
\sum_{s,t,l} \RIUL{1,2,3}{s,t,l}(\DistUL{1,3}{s,l} + \DistUL{2,3}{t,l})
=
\inner{\RIU{1,3}}{\DistU{1,3}} + 
\inner{\RIU{2,3}}{\DistU{2,3}} \\
&\stackrel{\text{Lemma \ref{app:th:gluing_lemma}}}{=}\inner{\PIsU{1,3}}{\DistU{1,3}} + \inner{\PIsU{2,3}}{\DistU{2,3}}.
\end{align*}

Our first roadblock is that Lemma \ref{app:th:gluing_lemma} does not generalize to higher dimensions. For simplicity, we now omit the sample spaces on which mass functions are defined. When a set of mass functions have all their marginals over the same sample sub-spaces equal, we will say they are \emph{compatible}.

\begin{theorem}[No gluing]\label{thm:no_gluing}
There exists mass functions $\PIU{1,2,4}$, $\PIU{1,3,4}$, and $\PIU{2,3,4}$ with compatible marginals such that there is no mass function $\RIU{1,2,3,4}$ compatible with them.
\end{theorem}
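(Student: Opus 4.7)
The plan is to reduce the higher-dimensional gluing question to the classical obstruction for three bivariate marginals. First I would take $\XU{4}$ to be a singleton, $\XU{4}=\{\XUL{4}{1}\}$ with $\PIUL{4}{1}=1$. Under this choice every triplet mass function $\PIU{i,j,4}$ collapses to a bivariate mass function $\PIU{i,j}$ on $\XU{i}\times\XU{j}$, and every cross-marginal involving index $4$ is entirely determined by the 1D marginal on the remaining coordinate; consequently all pairwise compatibility conditions among $\PIU{1,2,4}, \PIU{1,3,4}, \PIU{2,3,4}$ reduce to the requirement that the 1D marginals $\PIU{1},\PIU{2},\PIU{3}$ agree across the three triplets.

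Next I would exhibit three bivariate distributions with uniform 1D marginals that are pairwise compatible but admit no common trivariate extension. Set $\XU{1}=\XU{2}=\XU{3}=\{0,1\}$ and take $\PIUL{1,2}{s,t}=\tfrac{1}{2}\mathbb{I}(s=t)$ and $\PIUL{2,3}{s,t}=\tfrac{1}{2}\mathbb{I}(s=t)$ (perfect correlation on those pairs), together with $\PIUL{1,3}{s,t}=\tfrac{1}{2}\mathbb{I}(s\ne t)$ (perfect anti-correlation). Every 1D marginal extracted from these three bivariates is uniform on $\{0,1\}$, so the compatibility requirements on $\PIU{1,4},\PIU{2,4},\PIU{3,4}$ are immediately satisfied.

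Finally I would derive the contradiction. Suppose a compatible $\RIU{1,2,3,4}$ existed; marginalizing out the singleton $\XU{4}$ yields a trivariate $\RIU{1,2,3}$ whose three bivariate marginals must equal $\PIU{1,2},\PIU{1,3},\PIU{2,3}$. Then $\PIU{1,2}$ forces coordinates $1$ and $2$ to agree $\RIU{}$-almost surely, $\PIU{2,3}$ forces coordinates $2$ and $3$ to agree $\RIU{}$-almost surely, and transitivity then forces coordinates $1$ and $3$ to agree $\RIU{}$-almost surely, contradicting $\PIU{1,3}$, which concentrates on the disagreement set. The main obstacle is purely conceptual rather than technical: one must realize that the apparent extra freedom afforded by a fourth variable is illusory, because the singleton trick reduces the problem to the classical impossibility of gluing three pairwise-consistent bivariate marginals into a trivariate joint. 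Once this reduction is seen, the perfectly-correlated/anti-correlated cycle on $\{0,1\}$ supplies the counterexample with no further work.
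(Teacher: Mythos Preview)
Your proof is correct and follows the same reduction as the paper: both argue that if four-way gluing always succeeded, then three-way gluing of compatible bivariate marginals would also always succeed, and both then exhibit a three-way counterexample. The paper leaves the reduction implicit (it simply asserts the implication), whereas you spell it out cleanly via the singleton $\XU{4}$.

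Where you genuinely diverge is in the choice of three-way counterexample. The paper works on a three-point space with $\PIU{1,2}=\PIU{1,3}=[1,0,1;0,1,0;0,0,0]/3$ and $\PIU{2,3}$ uniform, and then asserts that the resulting linear system for $\RIU{1,2,3}$ is infeasible. Your counterexample on $\{0,1\}$ with a correlated--correlated--anticorrelated cycle is smaller and yields a one-line almost-sure transitivity contradiction, so no linear-algebra verification is needed. Both are valid; yours is the more economical and self-contained of the two.
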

%
%
%
\vspace{-10pt}
\begin{proof}
If this were not the case, then, by computing marginals of the mass functions in the theorem's statement, it would be true that, given arbitrary mass functions $\PIU{1,2}$, $\PIU{1,3}$, and $\PIU{2,3}$ with compatible univariate marginals, we should be able to find $\RIU{1,2,3}$ whose bivariate marginals equal these three mass functions. But this is not the case. A counter example,  taken from Remark 3 in \cite{haasler2021multimarginal}, is given below.
%
For example, let
\begin{align*}
 \PIU{1,2} = \PIU{1,3} =
\frac{1}{2}\begin{bmatrix}
1 & 0\\
0 & 1
\end{bmatrix}\quad \text{and}\quad
\PIU{2,3}=
\frac{1}{2}\begin{bmatrix}
0 & 1\\
1 & 0
\end{bmatrix}.
\end{align*}
%
These marginals have compatible univariate marginals, namely, $\PIU{1} = \PIU{2} = \PIU{3} = [1, 1]/2$. Yet, 
 the following system of eqs. over $\{\RIUL{1,2,3}{i,j,k}\}_{i,j,k \in [2]}$ is easily checked to be infeasible
 \begin{align*}
     &\left(\RIUL{1,2,3}{i,j,k} \geq 0 \; \forall i,j,k\right) \land \Big(\sum_{i}\RIUL{1,2,3}{i,j,k} = \PIUL{2,3}{j,k} \; \forall j,k \Big) \land \Big( \sum_{j}\RIUL{1,2,3}{i,j,k} = \PIUL{1,3}{i,k}  \;\forall i,k \Big)\\
     &  \land \Big( \sum_{k}\RIUL{1,2,3}{i,j,k} = \PIUL{1,2}{i,j}  \;\forall i,j \Big).
 \end{align*}
\end{proof}
\begin{remark}
It might be possible to obtain a generalized gluing lemma if, in addition to requiring consistency of the univariate marginal distributions, we also require consistency of the $(n-1)$-variate marginal distributions. We leave  studying properties of MMOTs defined with additional $k$-variate marginal consistency constraints as future work.
\end{remark}
\subsection{Cost $d$ being an $\NumDist$-metric is not a sufficient condition for MMOT to be an $\NumDist$-metric} \label{sec:d_n_metric_not_enough}
%
Theorem \ref{thm:no_gluing} tells us that, even if we assume that $\Dist$ is an $\NumDist$-metric, we cannot adapt the classical proof showing WD is a metric to MMOT leading to an $\NumDist$-metric. The question remains, however, whether there exists such a proof at all only under the assumption that $\Dist$ is $\NumDist$-metric. Theorem \ref{app:th:counter_example} settles this question in the negative.
\begin{theorem} \label{app:th:counter_example}
Let $\W$ be  MMOT distance as in Def. \ref{def:discrete_MMOT} with $\ell =1$. There exists a sample space $\XU{}$, mass functions $\PIU{1}$, $\PIU{2}$, $\PIU{3}$, and $\PIU{4}$ over $\XU{}$, and distance $\Dist:\XU{}\times\XU{}\times\XU{} \mapsto \mathbb{R}$ such that $\Dist$ is an $\NumDist$-metric ($\NumDist = 3$), but 
\begin{align}\label{eq:triang_ineq_viol}
\WU{1,2,3} > \WU{1,2,4} + \WU{1,3,4} + \WU{2,3,4}.  
\end{align}
\end{theorem}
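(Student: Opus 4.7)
The statement is existential, so I would proceed by explicit construction. Fix a finite sample space $\XU{}$, parameterize the cost $\Dist$ by the values it takes on the orbits of the symmetric group acting on its three arguments, and parameterize each $\PIU{i}$ by its mass function. Under these reductions, each of the four MMOTs entering \eqref{eq:triang_ineq_viol} becomes a small linear program in the sense of Def.~\ref{def:discrete_MMOT}, whose optimum is a piecewise-linear function of the parameters and can be written down in closed form. Violating \eqref{eq:triang_ineq_viol} is then an arithmetic comparison of four LP optima.

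The design principle is to make $\PIU{1},\PIU{2},\PIU{3}$ \emph{triple-wise incompatible} while letting $\PIU{4}$ mediate each pair. Concretely, I would support $\PIU{1},\PIU{2},\PIU{3}$ on three two-element subsets of $\XU{}$ arranged cyclically (each pair shares exactly one atom, but the three have empty common intersection). Marginal constraints then force the minimizing joint in \eqref{eq:mmot_discrete} to put mass on triples whose three coordinates are not all equal, so the cost paid for $\WU{1,2,3}$ is essentially the cost $\Dist$ assigns to those "expensive" triples. In parallel, each pair $(\PIU{i},\PIU{j})$ does share an atom, so if $\PIU{4}$ places mass on each of the three pairwise intersections then every $\WU{i,j,4}$ admits a coupling concentrating on $\Dist$-cheap (ideally $\Dist$-zero) triples.

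To certify that $\Dist$ is a $3$-metric I would verify the four axioms of Def.~\ref{def:gen_metric_d} by enumerating the orbits of $3$- and $4$-tuples; on a small $\XU{}$ this is a finite case analysis, and the tetrahedron inequality collapses to a short list of numerical inequalities that I would satisfy with strict slack. Tuning the relative magnitudes of $\Dist$ on two-distinct versus three-distinct triples, together with the exact probabilities in $\PIU{1:4}$, then reduces to a small numerical search, and the final verification is plugging those values into the four LPs.

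\textbf{Main obstacle.} The delicate step is that the tetrahedron inequality in Def.~\ref{def:gen_metric_d} actively limits how much larger a three-distinct cost can be made relative to nearby two-distinct costs: whenever $\XU{}$ contains a fourth atom, instantiating the inequality with $x_4$ equal to that atom forces the expensive triple to be dominated by three cheaper ones. I therefore expect to either restrict $\XU{}$ to exactly the atoms needed by the supports of $\PIU{1:3}$ (so that no "spoiling" $x_4$ exists in the tetrahedron inequality), or design $\Dist$ asymmetrically so the expensive triples are precisely those that any coupling for $\WU{1,2,3}$ must hit while remaining avoidable for the couplings realizing $\WU{i,j,4}$. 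This asymmetric design, rather than any cleverness in the LP computations, is where the construction stands or falls, and it is precisely the place where the failure of higher-dimensional gluing (Sec.~\ref{sec:no_gluing}) manifests: the pointwise tetrahedron inequality cannot be "integrated" against a shared joint, leaving room for the MMOT-level inequality to break.
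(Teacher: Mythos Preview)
Your high-level strategy (explicit construction, verify the four axioms of Def.~\ref{def:gen_metric_d} by enumeration, evaluate the four small LPs, compare) is exactly what the paper does. The specific construction, however, is quite different, and your combinatorial route runs into a hidden obstacle that the paper's geometric route sidesteps.

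\textbf{What the paper does.} The paper takes $\XU{}$ to be six points in $\mathbb{R}^2$ in a carefully chosen near-degenerate position (a small parameter $\epsilon$), lets $\PIU{1},\PIU{2}$ be Dirac masses and $\PIU{3},\PIU{4}$ uniform on two points each, and sets $\Dist(x,y,z)$ to be the \emph{area} of the triangle $xyz$ (with a small constant $\gamma$ substituted when exactly two of the three points coincide). The tetrahedron inequality for $\Dist$ is then the elementary planar fact that the area of any face of a (possibly degenerate) tetrahedron is at most the sum of the other three. With this $\Dist$, each $\WU{\cdot,\cdot,\cdot}$ is the expected area over a small number of triangles, and a direct comparison gives $\tfrac12 > \tfrac18+(\tfrac18+\tfrac{\epsilon}{4})+(\tfrac18+\tfrac{\epsilon}{4})$.

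\textbf{Where your cyclic plan may stall.} On three atoms $\{a,b,c\}$ with supports $\{a,b\},\{b,c\},\{c,a\}$, the joint for $\WU{1,2,3}$ lives on the cube $\{a,b\}\times\{b,c\}\times\{c,a\}$; the only three-distinct triples are the two antipodal corners $(a,b,c)$ and $(b,c,a)$. For balanced masses those corners can be \emph{avoided} entirely (e.g.\ put mass $1/6$ on each of the six two-distinct corners), so $\WU{1,2,3}$ collapses to the cheap two-distinct cost and no violation is possible. Pushing the masses toward Diracs $\delta_a,\delta_b,\delta_c$ does force the expensive corner, giving $\WU{1,2,3}=M$, but then for any $\PIU{4}$ on $\{a,b,c\}$ one has $\WU{1,2,4}+\WU{1,3,4}+\WU{2,3,4}\ge (\PIU{4}(a)+\PIU{4}(b)+\PIU{4}(c))\,M = M$, again no violation. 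So the ``triple-wise incompatible but pairwise compatible'' intuition, while correct in spirit, does not by itself separate the two sides on three atoms; one needs an additional degree of freedom. The paper obtains that freedom from planar geometry: different triangles have genuinely different areas, and the placement is tuned so that the triangles unavoidable in $\WU{1,2,3}$ are large while those available to $\WU{i,j,4}$ are small. Your option~(b) (``design $\Dist$ asymmetrically'') is in fact what succeeds, but you would need a concrete asymmetric design; the area cost is one such, and it has the bonus that the tetrahedron inequality comes for free rather than via case analysis.
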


\begin{proof}
Let $\X$ be the six points in Figure \ref{app:fig:counter_example}, where we assume that $0 <  \epsilon \ll 1$, and hence that there are no three co-linear points, and no two equal points. Let $\PIU{1},\PIU{2},\PIU{3}$, and $\PIU{4}$ be as in Figure \ref{app:fig:counter_example}, each is represented by a unique color and is uniformly distributed over the points of the same color.
Given any $x,y,z \in \X$ let $\Dist(x,y,z) = \gamma$ if exactly two points are equal, and let $\Dist(x,y,z)$ be the area of the corresponding triangle otherwise, where $\gamma$ lower bounds the area of the triangle formed by any three non-co-linear points, e.g. $\gamma = \epsilon / 4$. 
A few geometric considerations (see Appendix \ref{app:proof_details_of_counter_example}) show that $\Dist$ is an $\NumDist$-metric ($\NumDist=3, C(n) = 1$) and that \eqref{eq:triang_ineq_viol} holds as $\frac{1}{2} > \frac{1}{8} + \left(\frac{1}{8} + \frac{\epsilon}{4} \right)+ \left(\frac{1}{8} + \frac{\epsilon}{4}\right)$. 
\end{proof}

\begin{figure}
%
\centering
{\includegraphics[scale=0.32,trim={0.0cm 0cm 0cm 0.cm},clip]{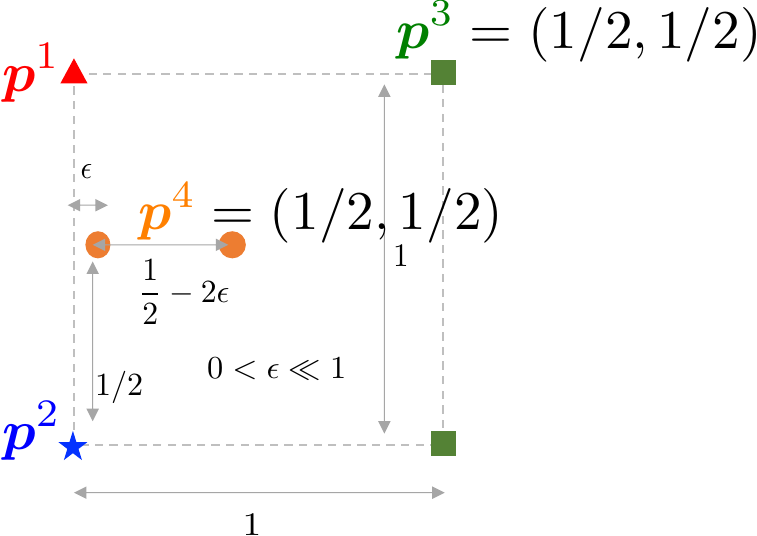}}\quad\quad
{\includegraphics[scale=0.32,trim={0.0cm -3.5cm 0cm 9cm},clip]{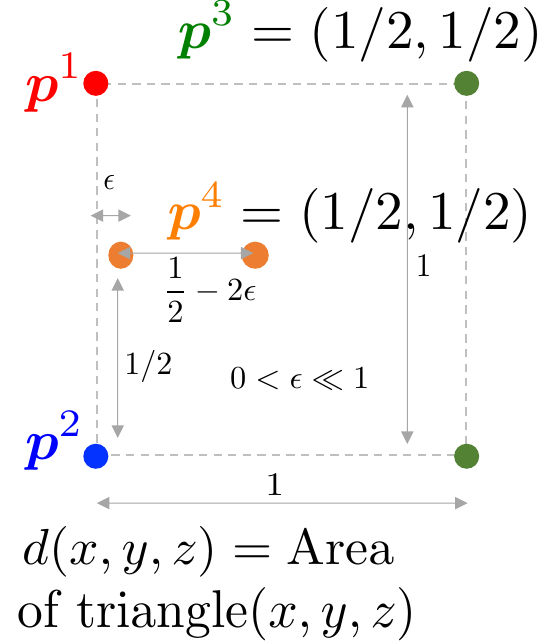}}
%
%
\caption{\small  Sample space $\X$, mass functions $\{\PIU{i}\}^4_{i=1}$, and cost function $\Dist$ that lead to violation \eqref{eq:triang_ineq_viol}. 
Red triangle is ${\bm p}^1$. Blue star is ${\bm p}^2$.  Green squares are ${\bm p}^3$. Yellow circles are ${\bm p}^4$.}
\label{app:fig:counter_example}
\end{figure}
%
\begin{remark}
Theorem \ref{app:th:counter_example} can be generalized to spaces of dimension $> 2$, and to $\NumDist > 3$, and $\ell > 1$.
\end{remark}
%
\subsection{A new family of pairwise MMOTs that are generalized metrics}\label{sec:MMOT_metric_properties_main}
We will prove that the properties of a generalized metric (Def. \ref{def:gen_metric_prop_W}) hold for the novel family of MMOT distances defined as follows.

\begin{definition}[Pairwise MMOT distance]\label{def:pair_wise_MMOT}
 Let $\{\DistU{i,j}\}_{i,j}$ be a set of distances of the form $\DistU{i,j}:\XU{i}\times\XU{j}\mapsto\mathbb{R}$ and $\DistU{i,j}(\XUL{i}{s},\XUL{j}{t}) \triangleq \DistUL{i,j}{s,t}$. 
The \emph{Pairwise MMOT distance} associated with $\ignore{\Dist}\{\DistU{i,j}\}_{i,j}$ for $\NumDist$ probability spaces with masses $\PIU{i_{1:\NumDist}}$ over $\XU{i_{1:\NumDist}}$  is $\W(\PIU{i_{1:\NumDist}})  \triangleq \WU{i_{1:\NumDist}}$ with
%
%
\begin{align} \label{eq:main_def_of_W}
\WU{i_{1:\NumDist}} 
= \hspace{-0.3cm}\min_{\substack{\RIU{}:\RIU{i_s}  = \PIU{i_s}\\ \forall s \in [\NumDist]}} \sum_{ 1 \leq s < t \leq \NumDist}\hspace{-0.3cm} \inner{\DistU{i_s,i_t}}{\RIU{i_s,i_t}}_{\ell}^{\frac{1}{\ell}}, 
\end{align}
where $\RI$ is a mass over $\XU{i_1}\times\mydots\times\XU{i_\NumDist}$, with
marginals
$\RIU{i_s}$ and $\RIU{i_s,i_t}$ over $\XU{i_s}$ and  $\XU{i_s}\times\XU{i_t}$.
\end{definition}
\begin{remark}
Note that each choice of $\{\DistU{i,j}\}_{i,j}$ results in a different MMOT. Our results in this section hold for the whole family of such MMOTs.  \end{remark}
\begin{remark} \label{rmk:comment_on_sum_of_inde_pairwise_sum}
Swapping $\min$ and $\sum$ gives a new definition $\WU{i_{1:\NumDist}}_{\text{pairs}} = \sum_{1 \leq s<t\leq\NumDist} \WU{i_s,i_t}$, where $\WU{i_s,i_t}$ is the WD between \ignore{the $i_s$th and $i_t$th spaces}$\XU{i_s}$ and $\XU{i_t}$. This is trivially an $\NumDist$-metric (cf. \cite{kiss2018generalization}) but is different from eq. \eqref{eq:main_def_of_W}. In particular, it does not provide a joint optimal transport, which is important to many applications such as color transfer \cite{strossner2022low}, tomographic reconstruction \cite{abraham2017tomographic}, and robust localization and sensor fusion \cite{elvander2020multi}.
\end{remark}

If $\NumDist = 2$, Def. \ref{def:pair_wise_MMOT} reduces to the Wasserstein distance.
%
Our definition is a special case of the Kantorovich formulation for the general MMOT problem discussed in \cite{pass2015multi}.
Furthermore, if $\ell = 1$, we can get pairwise MMOT (Def. \ref{def:pair_wise_MMOT}) from general MMOT (Def. \ref{def:discrete_MMOT}), by  defining $\DistU{i_{1:\NumDist}}:\XU{i_1}\times\mydots\times\XU{i_\NumDist} \mapsto \mathbb{R}$ such that
\begin{equation} \label{eq:choice_to_relate_both_MMOTs}
 \DistU{i_{1:\NumDist}}(w^{1:\NumDist}) = \left(\sum_{1\leq s<t\leq\NumDist}(\DistU{i_s,i_t}(w^s,w^t))^{\ell}\right)^{\frac{1}{\ell}}, 
\end{equation}
for some set of distances $\{\DistU{i,j}\}_{i,j}$.
However, if $\ell > 1$, choosing $\DistU{i_{1:\NumDist}}$ as in eq. \eqref{eq:choice_to_relate_both_MMOTs} in Def. \ref{def:discrete_MMOT} does not lead to Def. \ref{def:pair_wise_MMOT}, but rather to an upper bound of it.
Indeed, with this choice we get 
\begin{align}
 &\min_{{ \boldsymbol r}:{ \boldsymbol r}^{i_s} = { \boldsymbol p}^{i_s},\; \forall s \in [n]} \langle  d^{i_{1:n}} , { \boldsymbol r}\rangle^{1/\ell}_{\ell} = \\
 &\min_{{ \boldsymbol r}:{ \boldsymbol r}^{i_s}={ \boldsymbol p}^{i_s},\; \forall s \in [n]} \left( \sum_{w^{1:n}}  \sum_{1 \leq s < t \leq n } (d^{i_s,i_t}(w^s , w^t))^{\ell} { \boldsymbol r}(w^{1:n})\right)^{1/\ell}=\\
 &\min_{{ \boldsymbol r}:{ \boldsymbol r}^{i_s}={ \boldsymbol p}^{i_s},\; \forall s \in [n]} \left(  \sum_{1 \leq s < t \leq n } \sum_{w^{1:n}}  (d^{i_s,i_t}(w^s , w^t))^{\ell} { \boldsymbol r}(w^{1:n})\right)^{1/\ell}=\\
 &\min_{{ \boldsymbol r}:{ \boldsymbol r}^{i_s}={ \boldsymbol p}^{i_s} \forall s \in [n]} \left(  \sum_{1 \leq s < t \leq n } \sum_{w^{s},w^{t}}  (d^{i_s,i_t}(w^s , w^t))^{\ell} { \boldsymbol r}^{i_s,i_t}(w^{s},w^{t})\right)^{1/  \ell}=\\
 &\min_{{ \boldsymbol r}:{ \boldsymbol r}^{i_s}={ \boldsymbol p}^{i_s} \forall s \in [n]} \left(  \sum_{1 \leq s < t \leq n } \langle d^{i_s,i_t} , { \boldsymbol r}^{i_s,i_t}\rangle_{\ell} \right)^{1/\ell}\\
 & \leq \min_{{ \boldsymbol r}:{ \boldsymbol r}^{i_s}={ \boldsymbol p}^{i_s} \forall s \in [n]}  \sum_{1 \leq s < t \leq n } \langle d^{i_s,i_t} , { \boldsymbol r}^{i_s,i_t}\rangle^{1/\ell}_{\ell},
\end{align}
where the last inequality follows from Hölder's inequality.

It is easy to prove that if $\{\DistU{i,j}\}_{i,j}$ is a metric (Def. \ref{def:classic_metric}), then $\Dist$ is an $\NumDist$-metric (Def. \ref{def:gen_metric_d}). However, because of Theorem \ref{app:th:counter_example}, we know that this is not sufficient to guarantee that the pairwise MMOT distance is an $\NumDist$-metric, which only makes the proof of the next theorem all the more interesting.

\begin{theorem}\label{th:n_metric}
If $\Dist$ is a metric (Def. \ref{def:classic_metric}), then the pairwise MMOT distance (Def. \ref{def:pair_wise_MMOT}) associated with $\Dist$ is an $(\NumDist,C(\NumDist))$-metric, with $C(n) \geq 1$. 
\end{theorem}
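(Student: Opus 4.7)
The plan is to check all four properties in Def.~\ref{def:gen_metric_prop_W}. Properties 1--3 are direct. Non-negativity is immediate since every summand in \eqref{eq:main_def_of_W} is non-negative. Permutation invariance holds because the objective sums symmetrically over unordered pairs $\{i_s,i_t\}$ and the marginal constraints on $\RI$ are invariant under relabeling. For the identity, Def.~\ref{def:classic_metric}-iii) gives $\DistUL{i,j}{s,t}=0$ iff $\XUL{i}{s}=\XUL{j}{t}$; so if all $(\XU{i},\PIU{i})$ coincide, placing $\RI$ on the diagonal makes every pairwise cost vanish and $\WU{1,\mydots,\NumDist}=0$, while conversely $\WU{1,\mydots,\NumDist}=0$ forces every optimal bivariate marginal $\RIU{i,j}$ to be supported on $\{(s,t):\XUL{i}{s}=\XUL{j}{t}\}$, and combining this with the marginal constraints $\RIU{i}=\PIU{i}$, $\RIU{j}=\PIU{j}$ (strictly positive by assumption) yields $\XU{i}=\XU{j}$ and $\PIU{i}=\PIU{j}$ for all $i,j$.

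The main effort is the triangle inequality. As Sec.~\ref{sec:no_gluing} and \ref{sec:d_n_metric_not_enough} show, no multidimensional gluing is available and assuming $\Dist$ is an $\NumDist$-metric is not enough, so the classical proof cannot be imitated. My idea is to bypass $\NumDist$-variate gluing entirely by working through \emph{bivariate} optimal transports, using space $\NumDist+1$ as a single hub. Write $\WU{i,\NumDist+1}$ for the Wasserstein distance between $\PIU{i}$ and $\PIU{\NumDist+1}$ (this is exactly \eqref{eq:main_def_of_W} at $\NumDist=2$), and let $\bm{q}^{i}$ denote its optimizer. Build a joint $\QI$ on $\XU{1}\times\mydots\times\XU{\NumDist+1}$ by drawing $X^{\NumDist+1}\sim\PIU{\NumDist+1}$ and, conditionally on it, drawing the $X^{i}$'s \emph{independently} using the conditionals induced by each $\bm{q}^{i}$; this is the hub construction of Def.~\ref{def:new_gluing_map} applied to a specific set of conditionals. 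Its marginal on $\XU{1}\times\mydots\times\XU{\NumDist}$ is feasible for $\WU{1,\mydots,\NumDist}$. Combining the pointwise triangle inequality $\DistUL{i,j}{s,t}\leq\DistUL{i,\NumDist+1}{s,u}+\DistUL{\NumDist+1,j}{u,t}$ with Minkowski's inequality in $L^{\ell}(\QI)$ will give $\inner{\DistU{i,j}}{\QIU{i,j}}_{\ell}^{1/\ell}\leq\WU{i,\NumDist+1}+\WU{j,\NumDist+1}$; summing over the $\binom{\NumDist}{2}$ pairs in $[\NumDist]$, where each index appears $\NumDist-1$ times, yields
\vspace{-0.1cm}
\begin{equation*}
\WU{1,\mydots,\NumDist}\leq(\NumDist-1)\sum_{i=1}^{\NumDist}\WU{i,\NumDist+1}.
\end{equation*}

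To finish I will lower-bound the right-hand side of property 4 by the same expression. For every $r\in[\NumDist]$ and every $i\in[\NumDist]\setminus\{r\}$, the $(i,\NumDist+1)$-marginal of the optimizer of $\WU{1,\mydots,r-1,r+1,\mydots,\NumDist+1}$ is a valid coupling of $\PIU{i}$ and $\PIU{\NumDist+1}$, so its pairwise contribution to that MMOT is at least $\WU{i,\NumDist+1}$. Dropping the remaining non-negative pairwise terms and summing over $r$, each $i\in[\NumDist]$ is missing only for $r=i$ and so appears $\NumDist-1$ times, giving $\sum_{r=1}^{\NumDist}\WU{1,\mydots,r-1,r+1,\mydots,\NumDist+1}\geq(\NumDist-1)\sum_{i=1}^{\NumDist}\WU{i,\NumDist+1}$. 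The factors of $\NumDist-1$ cancel, establishing property 4 with $C(\NumDist)=1$. The main obstacle is precisely the one the paper flagged: one cannot lift the joint optimizers of the $r$-omit MMOTs into a joint for $\WU{1,\mydots,\NumDist}$; routing everything through bivariate Wasserstein couplings introduces a uniform overcount $(\NumDist-1)$ that, fortunately, is matched on both sides of the inequality and therefore cancels.
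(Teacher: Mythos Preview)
Your proof is correct and takes a genuinely different, more elementary route than the paper's.

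The key difference lies in how the feasible coupling $\QI$ for $\WU{1,\mydots,\NumDist}$ is built. The paper glues through the hub $\NumDist+1$ using conditionals extracted from the \emph{MMOT optimizers} $\PI^{(\optsymbol r)}$: it defines $\QI=\Gmap{\PIU{\NumDist+1},\{(\PI^{(\optsymbol h(i))})^{i\mid \NumDist+1}\}_{i\in[\NumDist]}}$ for a carefully chosen hash $h$, so that after applying the bivariate triangle inequality the resulting terms $\langle \DistU{a,b},(\PI^{(\optsymbol c)})^{a,b}\rangle_\ell^{1/\ell}$ can be \emph{injected} into the expansion of $\sum_r \WU{\backslash r}$ without collisions (Lemma~\ref{th:collision_simple_map}). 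You instead glue through $\NumDist+1$ using the \emph{bivariate Wasserstein} optimizers $\bm{q}^i$ for $\WU{i,\NumDist+1}$, which immediately yields $\WU{1,\mydots,\NumDist}\leq(\NumDist-1)\sum_i \WU{i,\NumDist+1}$, and then lower-bound each $\WU{\backslash r}$ by dropping all pairwise terms except those of the form $(i,\NumDist+1)$, each of which dominates $\WU{i,\NumDist+1}$. The overcount factor $(\NumDist-1)$ appears symmetrically and cancels.

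What each buys: your argument is shorter, needs no hash map or collision analysis, and makes the role of the hub $\NumDist+1$ transparent. The paper's machinery, by contrast, is built to be reused: the same template with the more intricate hash $h'$ of Def.~\ref{def:complex_map_H_prime} (and Lemma~\ref{th:collision_complex_map} bounding collisions by $5$) is what yields the sharper $C(\NumDist)\geq(\NumDist-1)/5$ of Theorem~\ref{th:n_metric_c}. Your route discards all non-hub pairs on the right-hand side, so it saturates at $C(\NumDist)=1$ and does not obviously extend to the linear-in-$\NumDist$ constant.
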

%
\begin{proof}
The proof is presented in Appendix \ref{app:proof_of_n_metric}.
\end{proof}
%
We currently do not know the most general conditions under which Def. \ref{def:gen_metric_d} is an $\NumDist$-metric. However, working with Def. \ref{def:pair_wise_MMOT} allows us sharply bound the best possible $C(n)$, which would unlikely be possible in a general setting. As Theorem \ref{th:n_metric_c} shows, the best $C(n)$ is $C(n) = \Theta(n)$.
\begin{theorem}\label{th:n_metric_c}
In Theorem \ref{th:n_metric}, the constant $C(\NumDist)$ can be made larger than $(\NumDist-1)/5$ for $\NumDist > 7$, and there exists sample spaces $\XU{1:\NumDist}$, mass functions $\PIU{1:\NumDist}$, and a metric $d$ over $\XU{1:\NumDist}$ such that  $C(\NumDist) \leq n-1$.
\end{theorem}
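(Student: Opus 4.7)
The theorem has two parts: (a) showing $C(\NumDist) \geq (\NumDist-1)/5$ for $\NumDist > 7$, and (b) exhibiting an explicit example witnessing $C(\NumDist) \leq \NumDist - 1$. I will handle them separately, sharpening the argument behind Theorem~\ref{th:n_metric} in one direction and constructing a tight witness in the other. For brevity, abbreviate $\WU{1,\ldots,r-1,r+1,\ldots,\NumDist+1}$ as $\WU{-r}$.

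For part (a), the key step is to construct, for each pivot $c \in [\NumDist+1]$, a joint coupling $\tilde{\RI}^{(c)}$ on $\XU{1}\times\cdots\times\XU{\NumDist}$ that is feasible for the LHS of the triangle inequality. Take the Wasserstein-optimal bivariate couplings $\{\RIU{c,s,\optsymbol}\}_{s\neq c}$ and iteratively glue them along the common marginal $\PIU{c}$ via the bivariate form of Lemma~\ref{app:th:gluing_lemma}, which does survive despite the failure of multi-way gluing. If $c = \NumDist+1$, marginalize out index $c$. A Minkowski-triangle argument, using that $\Dist$ is a metric, yields for each non-star pair $(s,t)$ that $\inner{\DistU{s,t}}{\tilde{\RI}^{(c),s,t}}_\ell^{1/\ell} \leq \WU{s,c} + \WU{c,t}$; summing over pairs and noting each $\WU{c,s}$ appears $\NumDist - 1$ times gives the pivot-dependent bound $\WU{1,\ldots,\NumDist} \leq (\NumDist-1)\sum_{s \in [\NumDist],\, s \neq c}\WU{c,s}$.

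For the RHS, the bivariate marginals of the joint coupling realizing $\WU{-r}$ are valid pairwise couplings, so $\WU{-r} \geq \sum_{s<t,\, s,t\neq r}\WU{s,t}$. Summing over $r \in [\NumDist]$ and counting how many $r$ each pair avoids gives $\sum_r \WU{-r} \geq (\NumDist - 2)A + (\NumDist-1)B$, with $A = \sum_{s<t,\, s,t \in [\NumDist]}\WU{s,t}$ and $B = \sum_{s\in[\NumDist]}\WU{s,\NumDist+1}$. Averaging the $\NumDist+1$ pivot bounds yields an LHS upper bound $(\NumDist-1)(2A+B)/(\NumDist+1)$, and optimizing the worst-case ratio over $(A,B)\geq 0$ already gives $C(\NumDist) = \Theta(\NumDist)$; a more careful weighting of pivots should extract exactly $(\NumDist-1)/5$ for $\NumDist > 7$. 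For part (b), I exhibit the family $\X = \mathbb{R}$, $\Dist(x,y) = |x-y|$, $\ell = 1$, with $\PIU{1} = \delta_0$, $\PIU{2} = \delta_1$, and $\PIU{3} = \cdots = \PIU{\NumDist+1} = \delta_{1/2}$. All couplings are deterministic, so $\WU{1,\ldots,\NumDist} = 1 + 2(\NumDist-2)(1/2) = \NumDist - 1$; likewise $\WU{-1} = \WU{-2} = (\NumDist-1)/2$ and $\WU{-r} = \NumDist - 1$ for $r \in \{3,\ldots,\NumDist\}$, giving $\sum_r \WU{-r} = (\NumDist-1)^2$ and hence a ratio of exactly $\NumDist - 1$.

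The main obstacle is pinning down the sharp constant $1/5$ in part (a): the natural pivot-and-average argument above produces a $\Theta(\NumDist)$ lower bound on $C(\NumDist)$ but with a coefficient that depends on how the $\NumDist+1$ pivot bounds are combined and on how tightly one uses the RHS lower bound. Obtaining exactly $(\NumDist-1)/5$ most likely requires either a non-uniform weighting of pivots that treats pairs involving index $\NumDist+1$ differently from ``interior'' pairs, or a slightly richer family of candidate couplings beyond simple star gluings (e.g., combining gluings through different pivots for different pairs and taking a convex combination). Part (b), in contrast, reduces to the clean deterministic computation sketched above.
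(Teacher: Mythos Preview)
Your route is genuinely different from the paper's in both parts, and for part (a) it is simpler and stronger than you seem to realize. Your averaging argument is already complete: from $\WU{1,\ldots,\NumDist}\le (\NumDist-1)(2A+B)/(\NumDist+1)$ and $\sum_r \WU{-r}\ge (\NumDist-2)A+(\NumDist-1)B$, the worst ratio over $(A,B)\ge 0$ is $(\NumDist+1)(\NumDist-2)/\bigl(2(\NumDist-1)\bigr)$, attained as $B/A\to 0$. This exceeds $(\NumDist-1)/5$ for every $\NumDist\ge 3$, so no ``more careful weighting'' is needed---you have already proved strictly more than the statement asks. Your hesitation seems to come from reading the theorem as requiring the exact constant $(\NumDist-1)/5$; it only asks for $C(\NumDist)>(\NumDist-1)/5$.

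The paper's proof of part (a) is quite different. It never reduces to pairwise Wasserstein distances. Instead, for $\NumDist-1$ pivot indices $r$, it glues along $r$ the \emph{bivariate marginals of the optimal MMOT couplings} ${\PI^{(\optsymbol h'(i,r))}}$ (not pairwise-optimal couplings), producing terms of the form $v_{(a,b,c)}=\langle \DistU{a,b},({\PI^{(\optsymbol c)}})^{a,b}\rangle_\ell^{1/\ell}$ that must be matched against the expansion of $\sum_r\WU{-r}$. The matching is governed by a purpose-built hash $\HU{'\NumDist}$ (Def.~\ref{def:complex_map_H_prime}), and the factor $5$ arises from a collision bound proved via a maximum-independent-set argument on a ten-vertex auxiliary graph (Lemma~\ref{th:collision_complex_map}). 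Your pairwise-WD route bypasses all of this combinatorics and yields a sharper constant; the paper's route, in exchange, stays entirely inside the MMOT framework and never invokes the separate pairwise problems.

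For part (b), your Dirac example is correct and more explicit than the paper's construction. The paper instead uses a metric equal to $|\XUL{i}{s}-\XUL{j}{t}|$ on the diagonal $s=t$ and infinite off it, which forces all optimal couplings to be diagonal; this reduces $\WU{1:\NumDist}$ to $\sum_{i<j}\|\XU{i}-\XU{j}\|_\ell$ and then cites a general perimeter-type result from \cite{kiss2018generalization}. Your Dirac computation hits the same ratio $\NumDist-1$ directly without the external reference.
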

%
\begin{proof}
The proof is presented in Appendix \ref{app:proof_of_n_metric_C_upper_bound}.
\end{proof}
%
\begin{remark}\label{rmk:barycenter}
Note that if $\XU{i} = \X,\ \forall\ i$ and $\Dist: \X\times\dots\times\X \mapsto \mathbb{R}$ such that $\Dist(w^{1:\NumDist}) = \min_{w \in \X}\sum_{s \in [\NumDist]} \DistU{1,2}(w^s,w)$ and $\DistU{1,2}$ is a metric, then $\Dist$ is an $\NumDist$-metric \cite{kiss2018generalization}. One can then prove, see \cite{carlier2010matching}, that Def. \ref{eq:mmot_discrete} is equivalent to $\Wdist{}(\PIU{{1:\NumDist}}) =\min_{\PI} \sum_{s\in [\NumDist]}\Wdist{}(\PIU{s},\PI)$, which is also called the \emph{Wasserstein barycenter distance} (WBD)
\cite{agueh2011barycenters}. 
The later definition makes $\Wdist{}(\PIU{{1:\NumDist}})$
a Fermat $n$-distance as defined in \cite[Proposition 3.1]{kiss2018generalization}, from which it follows immediately via this same proposition that it is an
$n$-metric with $C(n) = \Theta(n)$. The pairwise MMOT is not a Fermat distance, and Theorems \ref{th:n_metric} and \ref{th:n_metric_c} do not follow from \cite{kiss2018generalization}. Hence, a novel proof strategy is required.
\end{remark}

In the next section, we give a self contained
 proof that the generalized triangle inequality holds with $C(\NumDist) = 1$ for $\NumDist = 3$ 
when $\Dist$ is a metric. This proof contains the
key ideas behind the proof of the triangle inequality in Theorems \ref{th:n_metric} and \ref{th:n_metric_c}. Proving the generalized triangle inequality for a general $\NumDist$, and a large $C(\NumDist)$, is the hardest part of these theorems, compared to proving the other properties required by generalized metrics.
%
\section{Proof of the generalized triangle inequality for $\NumDist = 3$, $\ell=1$, and $C(\NumDist) = 1$}\label{sec:proof_n_3_ell_1}
 %
We will prove that for any mass functions $\PIU{1},\dots,\PIU{4}$ over
$\XU{1},\dots,\XU{4}$, respectively,
if $\DistU{i,j}:\XU{i}\times\XU{j}\mapsto \mathbb{R}$ is a metric for
any $i,j\in\{1,\dots,4\}, i \neq j$, then $\WU{1,2,3} \leq \WU{1,2,4} + \WU{1,3,4} + \WU{2,3,4}$.

We write this inequality more succinctly
as
\begin{equation}\label{eq:simple_proof_first_step}
\WU{1,2,3} \leq \WU{\backslash 3} + \WU{\backslash 2} + \WU{\backslash 1},
\end{equation}
 using a symbol $\WU{\backslash r}$ whose meaning is obvious.
We begin by expanding all  the terms in  \eqref{eq:simple_proof_first_step},
\begin{align*}
    &\WU{1,2,3} = \inner{\DistU{1,2}}{\PIsU{1,2}}+    \inner{\DistU{1,3}}{\PIsU{1,3}}+\inner{\DistU{2,3}}{\PIsU{2,3}},
    \end{align*}
    and,
    \begin{align*}
      &\WU{\backslash 3} + \WU{\backslash 2} + \WU{\backslash 1} \\&
      =\inner{\DistU{1,2}}{\PIsU{(3)^{1,2}}}+    \inner{\DistU{1,4}}{\PIsU{(3)^{1,4}}}+\inner{\DistU{2,4}}{\PIsU{(3)^{2,4}}}\\
    &+\inner{\DistU{1,3}}{\PIsU{(2)^{1,3}}}+    \inner{\DistU{1,4}}{\PIsU{(2)^{1,4}}}+\inner{\DistU{3,4}}{\PIsU{(2)^{3,4}}}\\
    &+ \inner{\DistU{2,3}}{\PIsU{(1)^{2,3}}}+    \inner{\DistU{2,4}}{\PIsU{(1)^{2,4}}}+\inner{\DistU{3,4}}{\PIsU{(1)^{3,4}}},
\end{align*}

where $\{\PIsU{i,j}\}$ are the bivariate
marginals of the optimal
joint distribution $\PIsU{1,2,3}$ for $\WU{1,2,3}$,
and $\{\PIsU{(r)^{i,j}}\}$ are the
bivariate marginals of the optimal
joint distribution for $\WU{\backslash r}$.

Now we define the following
probability mass function
on $\XU{1}\times\dots\times\XU{4}$, namely $\PIU{1,2,3,4}$,  such that
\begin{align}\label{eq:joint_probability_mass_function}
\PIUL{1,2,3,4}{s,t,l,u}
= \PIUL{4}{u} \frac{\PIsUL{(3)^{1,4}}{s,u}}{\PIUL{4}{u}}\frac{\PIsUL{(2)^{3,4}}{l,u}}{\PIUL{4}{u}}\frac{\PIsUL{(1)^{2,4}}{t,u}}{\PIUL{4}{u}}.    
\end{align}
Notice that this definition is such that its bivariate marginals match the optimal bivariate marginals of each of the terms in the generalized triangle inequality.
Recall that w.l.o.g. we assume that no element in $\XU{i}$ has zero mass, so the denominators are not zero.
Since the bivariate probability mass functions  
$\PIU{1,2},\PIU{1,3}$, and $\PIU{2,3}$ of
$\PIU{1,2,3,4}$ are  feasible but sub-optimal choices
of minimizers in eq. \eqref{eq:main_def_of_W} in Def. \ref{def:pair_wise_MMOT}, we have that
\begin{align} \label{eq:simple_proof_mid_step}
\WU{1,2,3}\hspace{-0.cm}\leq\hspace{-0.cm}
\inner{\DistU{1,2}}{\PIU{1,2}}\hspace{-0.05cm}+\hspace{-0.05cm}    \inner{\DistU{1,3}}{\PIU{1,3}}\hspace{-0.05cm}+\hspace{-0.05cm}\inner{\DistU{2,3}}{\PIU{2,3}}.
\end{align}
It is convenient to introduce the following more compact notation 
$w_{i,j} = \inner{\DistU{i,j}}{\PIU{i,j}}$ and
$w^*_{i,j,(r)} = \inner{\DistU{i,j}}{\PIsU{(r)^{i,j}}}$.
Notice that, for any $i,j,k$ and $r$,  we have
 $w_{i,j}\leq w_{i,k} + w_{j,k}$ and  $w^*_{i,j,(r)}\leq w^*_{i,k,(r)} + w^*_{j,k,(r)}$. 
This follows directly from the assumption
that $\{\DistU{i,j}\}$ are metrics.
Let us prove that $w_{1,2}\leq w_{1,4} + w_{2,4}$:
\begin{align}
&w_{1,2} = \sum_{s,t} \DistUL{1,2}{s,t}\PIUL{1,2}{s,t}=\sum_{s,t,l} \DistUL{1,2}{s,t}\PIUL{1,2,4}{s,t,l}\leq
\sum_{s,t,l} (\DistUL{1,4}{s,l}+\DistUL{2,4}{t,l})\PIUL{1,2,4}{s,t,l}=w_{1,4}+w_{2,4}.
\end{align}
Similarly, $w_{1,3}\leq w_{1,4} + w_{3,4}$, and $w_{2,3}\leq w_{2,4} + w_{3,4}$. 
Combining these inequalities and \eqref{eq:simple_proof_mid_step} we can write
$\WU{1,2,3}\leq
w_{1,2}+    w_{1,3}+w_{2,3} 
\leq(w_{1,4} + w_{2,4}) +    (w_{1,4} + w_{3,4})+ (w_{2,4} + w_{3,4})$.
Note that by construction, the bivariate marginals of $\PIU{1,2,3,4}$ in (\ref{eq:joint_probability_mass_function}) satisfy
 $\PIU{1,4}=\PIsU{(3)^{1,4}}$, $\PIU{3,4}=\PIsU{(2)^{3,4}}$, and $\PIU{2,4}=\PIsU{(1)^{2,4}}$. Hence, 
 \begin{align}
 \label{eq:simple_proof_third_step}
&\WU{1,2,3} \leq
(w^*_{1,4,(3)} + w^*_{2,4,(1)}) +  (w^*_{1,4,(3)} + w^*_{3,4,(2)})+ 
(w^*_{2,4,(1)} + w^*_{3,4,(2)}).
\end{align}
Using the
new notation, we can re-write the r.h.s. of \eqref{eq:simple_proof_first_step} as
\begin{align}
\label{eq:simple_proof_fourth_step}
&\WU{\backslash 3} + \WU{\backslash 2} + \WU{\backslash 1} =\nonumber(w^*_{1,2,(3)}+w^*_{1,4,(3)}+w^*_{2,4,(3)}) \\
&+ (w^*_{1,3,(2)}+w^*_{1,4,(2)}+w^*_{3,4,(2)}) + (w^*_{2,3,(1)}+w^*_{2,4,(1)}+w^*_{3,4,(1)}).
 \end{align}

To finish the proof we show that the
r.h.s. of \eqref{eq:simple_proof_third_step} can be upper bounded by
the r.h.s. of \eqref{eq:simple_proof_fourth_step}.
We use the triangular inequality of $w^*_{i,j,(k)}$ and apply it to the 1st, 4th, and 5th terms on the r.h.s. of \eqref{eq:simple_proof_third_step} as specified by the parenthesis:
 \begin{align*}
&(w^*_{1,4,(3)} + w^*_{2,4,(1)}) +    (w^*_{1,4,(3)} + w^*_{3,4,(2)})+ (w^*_{2,4,(1)} + w^*_{3,4,(2)})\\
&\leq ( (w^*_{1,2,(3)} +  w^*_{2,4,(3)})+  w^*_{2,4,(1)}) + 
(w^*_{1,4,(3)}+  (w^*_{1,3,(2)} +  w^*_{1,4,(2)}))\\
&+ ((w^*_{2,3,(1)} +  w^*_{3,4,(1)})+  w^*_{3,4,(2)}),\nonumber
\end{align*}
and observe that the terms in the r.h.s. of this last inequality are
accounted for on the r.h.s. of \eqref{eq:simple_proof_fourth_step}.

Note that this last step, figuring out to which terms we should apply the triangular
inequality property of $w^*$
such that we can ``cover'' the r.h.s.
of \eqref{eq:simple_proof_third_step} with the r.h.s. of \eqref{eq:simple_proof_fourth_step}, is critical. Also,
 the fact that we want to prove 
the MMOT triangle inequality holds for
$C(\NumDist) = \Theta(\NumDist)$ makes
the last step even harder.
To address this, in  the proofs of Theorems \ref{th:n_metric} and \ref{th:n_metric_c} (presented in the Appendix) we develop a general procedure and special hash functions to expand (using the triangle inequality)
and to match terms. 

%
%
\section{Numerical experiments}
\label{sec:num_exp}
We show how using a MMOT that defines an $\NumDist$-metric ($\NumDist > 2$) improves two tasks of clustering graphs compared to using a non-$\NumDist$-metric MMOT or an optimal transport (OT) that defines a $2$-metric. 
One clustering task, Section \ref{sec:exp_synthetic}, is on synthetic graphs. The other one, Section \ref{sec:exp_real}, is a real task of clustering chemical molecules.

\subsection{Multi-distance based clustering} \label{sec:multi_distance_clustering_procedure}

We use the same clustering strategy for both tasks. 
We cluster graphs by i) computing their spectrum, ii) treating each spectrum as a probability distribution, iii) using WD and three different MMOT's to compute distances among these distributions, and iv) feeding these distances to different distance-based clustering algorithms to recover the true cluster memberships, as illustrated below. We give details of each step in our procedure next.

{\centering \includegraphics[trim=0cm -0.cm 0cm 0cm, clip=true,width=1\textwidth]{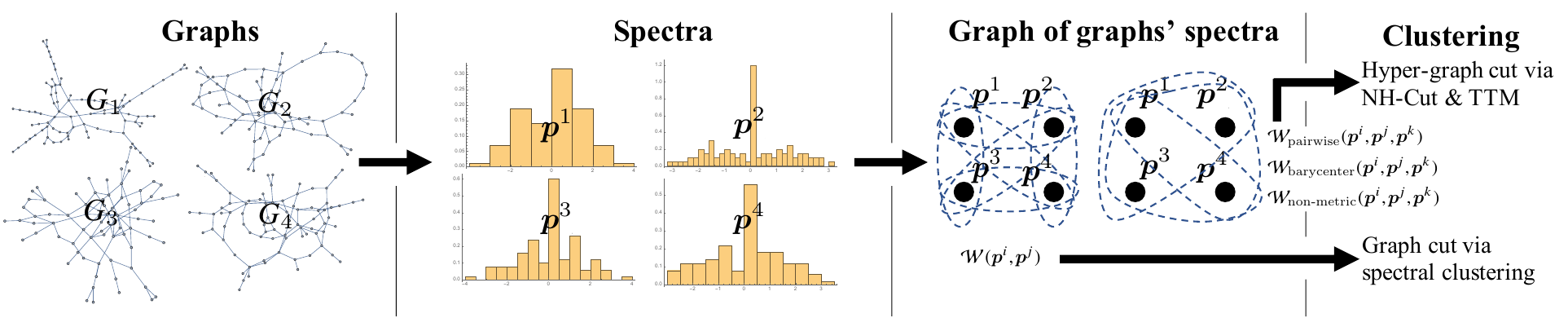}
  }

To produce the spectra, we transform the set of graphs $\{G^i\}_{i}$ into vectors $\{v^i\}_{i}$ to be clustered. Each $v^i$ is the (complex-valued) spectrum of a matrix $M^i$ representing \emph{non-backtracking walks} on $G^i$, which approximates the \emph{length spectrum} $\mu^i$ of $G^i$ \cite{torres2019non}. Object $\mu^i$ uniquely identifies (the 2-core \cite{batagelj2011fast} of)  $G^i$ (up to an isomorphism) \cite{constantine2019marked}, but is too abstract to be used directly. Hence, we use its approximation $v^i$.
The length of $v^i$ and $v^j$ for equal-sized $G^i$ and $G^j$ can be different, depending on how we approximate $\mu^i$. This motivates the use distance-based clustering and OT (multi) distances, since OT allows comparing objects of different lengths. Note that unlike the length spectrum, the classical spectrum of a graph (the eigenvalues of e.g. an adjacency matrix, Laplacian matrix, or random-walk matrix) has the advantage of having the same length for graphs with the same number of nodes. However, it does not uniquely identify a graph. For example, a star graph with $5$ nodes and the graph that is the union of a square with an isolated node are co-spectral but are not isomorphic.

To produce the (hyper) graph of graphs' spectra we need (hyper) edges and edge-weights. 
Each $v^i$ from the previous step is interpreted as a uniform distribution $\PIU{i}$ over $\XU{i} = \{v^i_k, k = 1,\mydots,\}$. 
To produce the graph of graphs' spectra, we compute a sampled version $\hat{T}^{\text{A}}$ of the matrix $T^{\text{A}} = \{\WU{i,j}\}_{i,j}$, where  $\WU{i,j}$ is the WD between $\PIU{i}$ and $\PIU{j}$ using a $\DistU{i,j}$ defined by $\DistUL{i,j}{s,t}$ $ = \lvert v^i_s - v^j_t \rvert$. 
We produce three different hyper-graph of graphs' spectra using different MMOTs.
We compute a sampled version $\hat{T}^{\text{B}}$ of the tensor $T^{\text{B}} = \{\WU{i,j,k}\}_{i,j,k}$, where $\WU{i,j,k}$ is our pairwise MMOT defined as in Def. \ref{def:pair_wise_MMOT} with $\DistU{i,j}$ defined as for $T_{\text{A}}$.
We also compute a sampled version $\hat{T}^{\text{C}}$ of the tensor $T^{\text{C}} = \{\WU{i,j,k}\}_{i,j,k}$, where $\WU{i,j,k}$ is the barycenter MMOT defined as in Remark \ref{rmk:barycenter} with the $\DistU{i,j}$ as before.
Finally, we compute a sampled version $\hat{T}^{\text{D}}$ of the tensor $T^{\text{D}}$ with $T^{\text{D}}_{i,j,k} =\WU{i,j,k}$, where $\WU{i,j,k}$ is the non-$n$-metric defined in Theorem \ref{app:th:counter_example}, but now considering points in the complex plane.
The sampled tensors $\hat{T}^{\text{B}}$, $\hat{T}^{\text{C}}$, and $\hat{T}^{\text{D}}$ are built by randomly selecting $z$ triples $(i,j,k)$, $z=100$ in the synthetic experiments and $z=600$ in the real experiments, and setting $\hat{T}^{\text{B}}_{i,j,k} = {T}^{\text{B}}_{i,j,k}$, $\hat{T}^{\text{C}}_{i,j,k} = {T}^{\text{C}}_{i,j,k}$, and $\hat{T}^{\text{D}}_{i,j,k} = {T}^{\text{D}}_{i,j,k}$. The non-sampled triples are given a very large value. The sampled matrix $\hat{T}^{\text{A}}$ is built by sampling $(3/2)\times z$ pairs $(i,j)$ and setting  $\hat{T}^{\text{A}}_{i,j} = {T}^{\text{A}}_{i,j}$, and setting a large value for non-sampled pairs. All (multi) distances $\WU{i,j}$ and $\WU{i,j,k}$ amount to solving a linear program, for which we use CVX \cite{cvx14,gb08}.

To obtain clusters, we feed the weighted (hyper) graphs specified by $\hat{T}^{\text{A}}$, $\hat{T}^{\text{B}}$, $\hat{T}^{\text{C}}$, and $\hat{T}^{\text{D}}$, to different (hyper) graph-cut algorithms.
We feed the matrix of distances $\hat{T}^{\text{A}}$ to a spectral clustering algorithm \cite{shi2000normalized} based on normalized random-walk Laplacians to produce one clustering solution, which we name $\mathcal{C}^{\text{A}}$.
We feed the multi-distances $\hat{T}^{\text{B}}$, $\hat{T}^{\text{C}}$ and $\hat{T}^{\text{D}}$ to the two hypergraph-based clustering methods NH-Cut \cite{ghoshdastidar2017consistency} and TTM \cite{ghoshdastidar2017uniform,ghoshdastidar2015provable}.
These produce different clustering solutions which we name $\mathcal{C}^{\text{B1}}$ and $\mathcal{C}^{\text{B2}}$,  $\mathcal{C}^{\text{C1}}$ and $\mathcal{C}^{\text{C2}}$, and  $\mathcal{C}^{\text{D1}}$ and $\mathcal{C}^{\text{D2}}$ respectively. 
Both NH-Cut and TTM find clusters by (approximately) computing minimum cuts of a hypergraph where each hyperedge's weight is the MMOT distance among three graphs.
Both NH-Cut and TTM require a threshold that is used to prune the hypergraph. Edges whose weight (multi-distance) is larger than a given threshold are removed. This threshold is tuned to minimize each  clustering solution error.
We code all clustering algorithms to output $N$ clusters, the correct number of clusters.

To compute the quality of each clustering solution, we compute the fraction of miss-classified graphs. In particular, if for each  $x \in \{ $ A, B1, B2,C1, C2, D1, D2 $\}$, we define $\mathcal{C}^x(i) = k$ to mean that clustering solution $\mathcal{C}^x$ assigns graph $G^i$ to cluster $k$, then the error in solution $\mathcal{C}^x$ is
\begin{equation} \label{eq:app:formula_for_clustering_error}
\min_{\sigma} \frac{1}{N}\sum^{N}_{i=1} \mathbb{I}(\mathcal{C}^{\text{ground truth}}(i) = \mathcal{C}^{x}(\sigma(i)),
\end{equation}
where $N$ is the number of clusters, and the $\min$ over all permutations $\sigma$ of the elements $\{1,\dots,N\}$ is needed because the specific cluster IDs output have no real meaning. 
This value is computed $100$ times (with all random numbers in our code being drawn independently among experiments) so that we can report an histogram of errors, or error bars, for each method.

The complete code for our experiments can be found here \url{https://github.com/bentoayr/MMOT}.

%
%
\subsection{Synthetic graphs dataset}\label{sec:exp_synthetic}
%
We generate seven synthetic clusters of ten graphs each by including in each cluster multiple random perturbations 
-- with independent edge addition/removal with $p=0.05$ -- of a complete graph, a complete bipartite graph,  a cyclic chains,  a $k$-dimensional cube,  a $K$-hop lattice,  a periodic 2D grid, or an Erdős–Rényi graph. 
A random class prediction has a $0.857$ error rate. 
To recover the class of each graph, we follow the procedure described in Section \ref{sec:multi_distance_clustering_procedure}. 
In Figure \ref{fig:num_res} we show the distribution of clustering errors using different (multi)-distances and clustering algorithms over
 $100$ independent experiments. The mean error rates are written in the legend box of each figure.
\begin{figure}[h!]
\centering
\includegraphics[width=0.32\textwidth,trim={0.65cm 0.cm 1.2cm 0.1cm},clip]{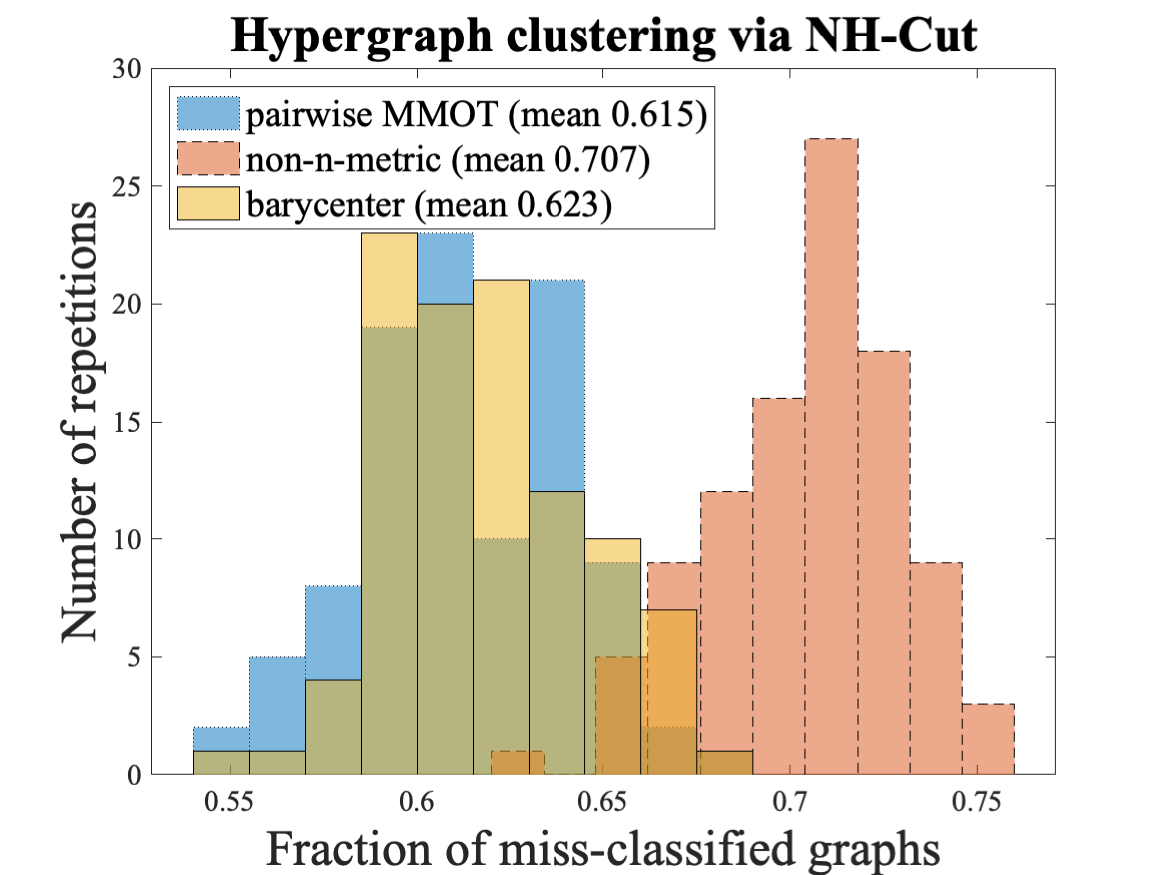}
\includegraphics[width=0.32\textwidth,trim={0.65cm 0.cm 1.2cm 0.1cm},clip]{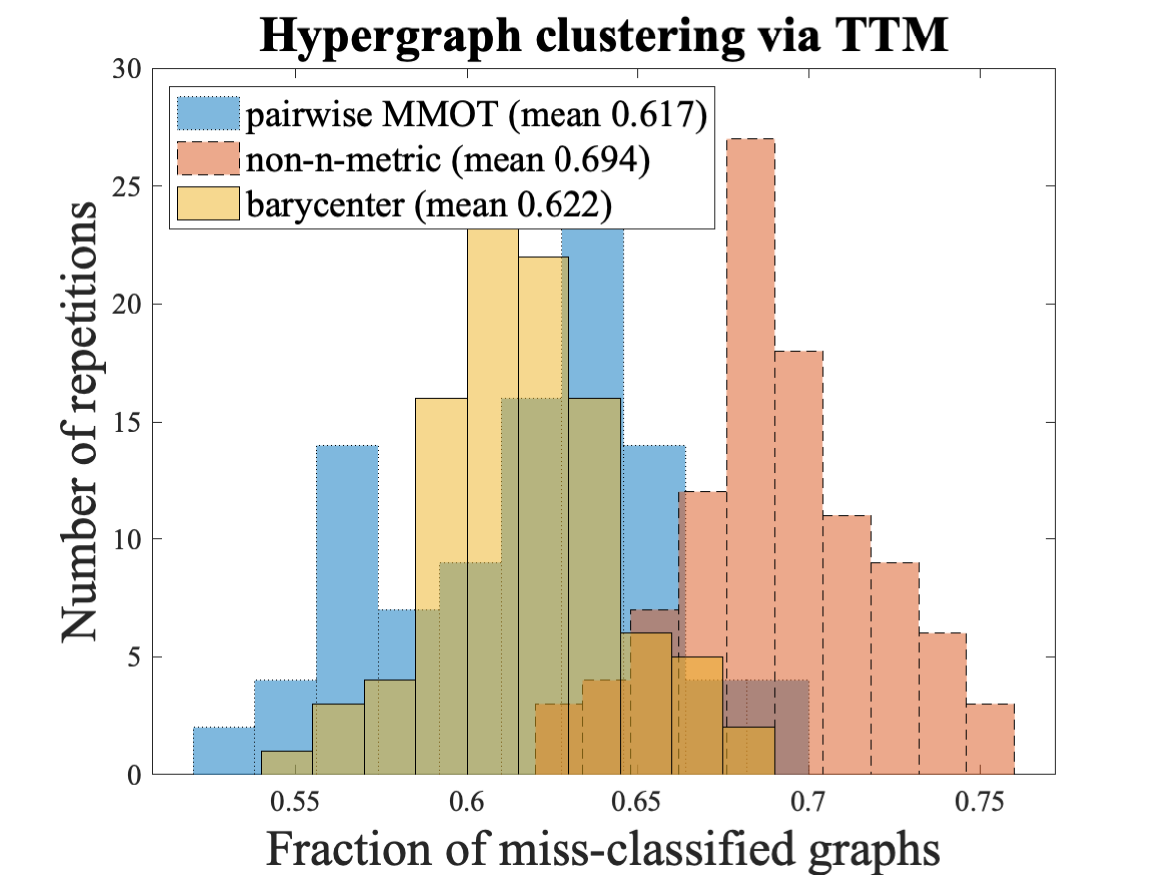}
\includegraphics[width=0.32\textwidth,trim={0.65cm 0.cm 1.2cm 0.1cm},clip]{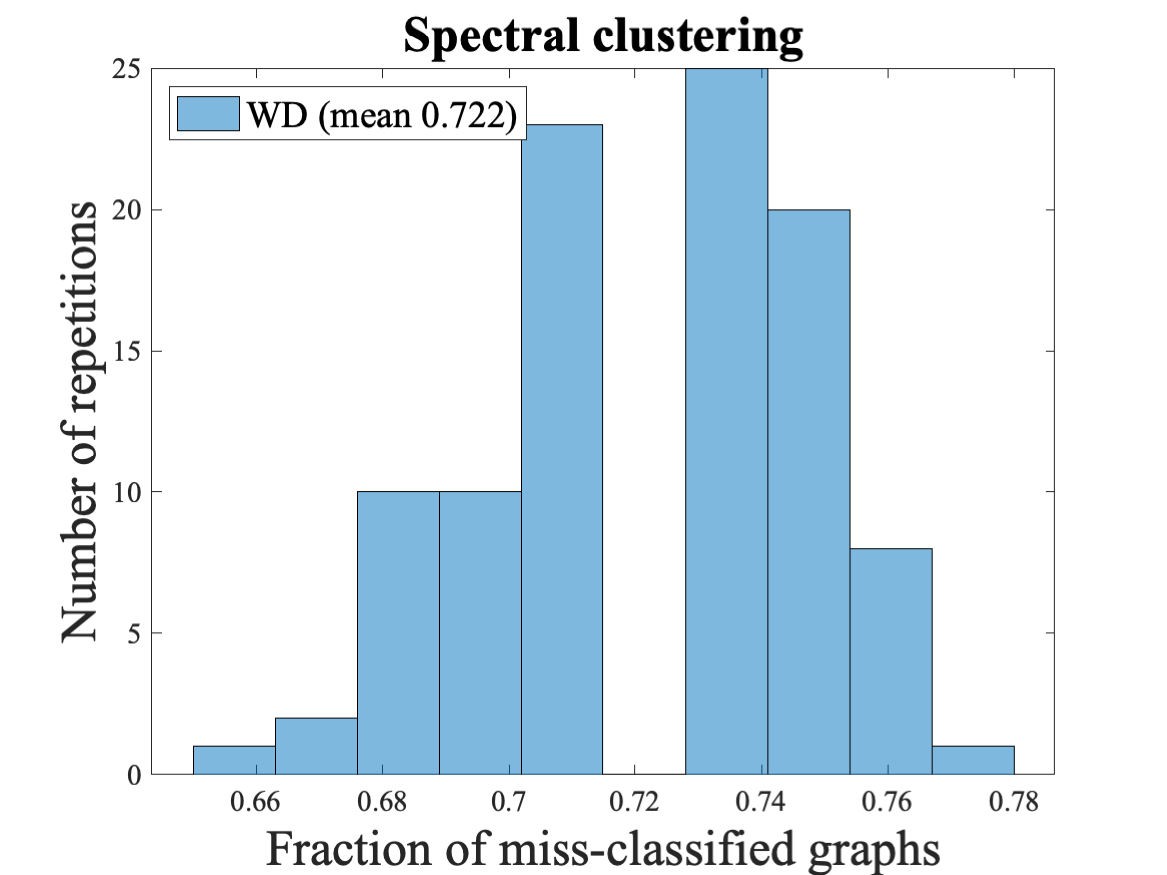}
%
%
\caption{\small Comparing the effect that different distances and metrics have on clustering \textit{synthetic graphs}. Histogram with dashed outline is for non-$n$-metric, with solid outline for barycenter, and with dotted outline for pairwise MMOT.
} 
\label{fig:num_res}
\end{figure}

Figures \ref{fig:num_res}-(left, center) show that, as expected,  both TTM and NH-Cut work better when hyperedges are computed using an $\NumDist$-metric, and that pairwise MMOT works better than WBD. 

Figure \ref{fig:num_res}-(right) shows that clustering using only pairwise relationships among graphs leads to worse accuracy than if using triple-wise relationships as in Figures \ref{fig:num_res}-(left, center). %
The results in Figure \ref{fig:num_res}-(right) are similarly bad to the results one obtains when use the MMOT explained in Remark \ref{rmk:comment_on_sum_of_inde_pairwise_sum}, which does not give joint transports and is trivially an $\NumDist$-metric. Indeed, both TTM and NH-Cut are based on spectral clustering of the hypergraph weight tensor by first reducing it to a matrix via ``summing out'' one of the dimensions and then using regular spectral clustering techniques. At the same time, if tensor a $T_{i,j,k}$ satisfies $T_{i,j,k} = \W_{i,j} + \W_{j,k} + \W_{i,k}$ for some distances $\W_{i,j}$, the matrix obtained via such reduction has spectral properties close to those of $\W_{i,j}$.

\subsubsection{Injection of triangle inequality violations} \label{app:inject_TIVs}

To double check that this difference in performance is due to the $\NumDist$-metric properties of pairwise MMOT and WBD, we perturb $\Wdist{}$ to introduce triangle inequality violations
and measure its effect on clustering accuracy.

To introduce triangle inequality violations, we perturb the tensor $\Wdist{i,j,k}$ as follows. For each set of four different graphs $(i,j,k,l)$, we find which among $\Wdist{i,j,k},\Wdist{i,j,l},\Wdist{i,l,k},\Wdist{l,j,k}$ we can change the least to produce a triangle inequality violation among these values. Let us assume that this value is $\Wdist{i,j,k}$, and that to violate $\Wdist{i,j,k} \leq \Wdist{i,j,l}  + \Wdist{i,l,k} + \Wdist{l,j,k}$, the quantity $\Wdist{i,j,k}$ needs to increase at least by $\delta$, where $\delta = \Wdist{i,j,l}  + \Wdist{i,l,k} + \Wdist{l,j,k}  - \Wdist{i,j,k}$. We then increase $\Wdist{i,j,k}$ by $1.3\times \delta$.
We repeat this procedure such that, in total, $20\%$ of the entries of the tensor $\Wdist{}$ get changed.

Table \ref{tab:adding_violations} shows the effect of adding
violations on the mean error rate for different MMOT distances.
These violations clearly affect pairwise MMOT and barycenter-MMOT (both $\NumDist$-metrics), but do not have an impact on  non $n$-metric distances. 
\begin{table}[!h]
\setlength\tabcolsep{1.5pt} 
\centering
\footnotesize{
\begin{tabular}{|c||c|c|c|c||c|c|c|c|}
\hline
\textbf{With violations?} & \textbf{Clustering} & \textbf{Pairwise} & \textbf{WBD} & \textbf{Non-$\NumDist$-metric}  & \textbf{Clustering} & \textbf{Pairwise} & \textbf{WBD} & \textbf{Non-$\NumDist$-metric} \\ \hline
\textbf{No}         & \textbf{NH-Cut}     & 0.615             & 0.623               & 0.707                         & \textbf{TTM}        & 0.617             & 0.622               & 0.694             \\ \hline
\textbf{Yes}        & \textbf{NH-Cut}     & 0.632             & 0.632               & 0.704                       & \textbf{TTM}        & 0.627             & 0.634               & 0.696                 \\ \hline
\end{tabular}
}
\caption{\small The mean error rates for clustering of synthetic graph datasets for different MMOT metric and non-metric distances, with and without injected  triangle inequality violation, are shown. Triangle inequality
violations in $\Wdist{}$ degrades
clustering performance with $\NumDist$-metrics more than with non-$\NumDist$-metrics.
\label{tab:adding_violations}
}
\end{table}

\subsubsection{Reproducibility}

Our code is fully written in Matlab 2020a. It requires installing CVX, available in \url{http://cvxr.com/cvx/download/}. 

To produce Figure \ref{fig:num_res} open Matlab and run the file \verb!run_me_for_synthetic_experiments.m!. To produce the numbers in the second row of  Table \ref{tab:adding_violations}, run the same file but with \verb!fraction_viol = 0.2; strength_of_viol = 0.3;!. Note that the numbers in the first row of Table \ref{tab:adding_violations} are the mean values in Figure \ref{fig:num_res}. The call to \verb!run_me_for_synthetic_experiments.m! takes $26$ hours to complete using 12 cores, each core from an Intel(R) Xeon(R) Processor E5-2660 v4 (35M Cache, 2.00 GHz). Except for the \verb!Weiszfeld.m! file, all of the code
was written by us and is distributed under an MIT License. This license is described in the \verb!README.txt! file at the root our repository. The license for  \verb!Weiszfeld.m! is on the header of the file itself.

\subsection{Molecular graphs dataset} \label{sec:exp_real}
This experiment is motivated by the important task in chemistry of clustering chemical compounds, represented as graphs, by their structure \cite{wilkens2005hiers,seeland2014structural,mcgregor1997clustering}.
We use the molecular dataset in the supplementary
material of \cite{sutherland2003spline}, which can be downloaded
at \url{https://pubs.acs.org/doi/abs/10.1021/ci034143r#_i21}.
It contains the adjacency
matrices of graphs corresponding to five types of compounds: cyclooxygenase-2  inhibitors ($467$ graphs), benzodiazepine receptor ligands ($405$ graphs), estrogen receptor ligands ($1009$ graphs), dihydrofolate reductase inhibitors ($756$ graphs), and monoamine oxidase inhibitors ($1641$ graphs). 

To build our clusters, we randomly
get ten graphs of each type, and prune them
so that they have no node with a degree smaller than $1$. We note that, unlike for the synthetic
data, the molecular graphs have weighted
adjacency matrices, whose entries can be $0,1$, or $2$. 
A random class prediction has $0.8$ error rate. 
To recover the class of each graph, we follow the procedure described in Section \ref{sec:multi_distance_clustering_procedure}. 
We repeat this experiment $100$ times, independently, to collect error statistics.

In Figure \ref{fig:num_res_real_graphs} 
we show the distribution of clustering errors using different (multi)-distances and clustering algorithms.
\begin{figure*}[h!]
\centering
\includegraphics[width=0.32\textwidth,trim={0.65cm 0.cm 1.2cm 0.1cm},clip]{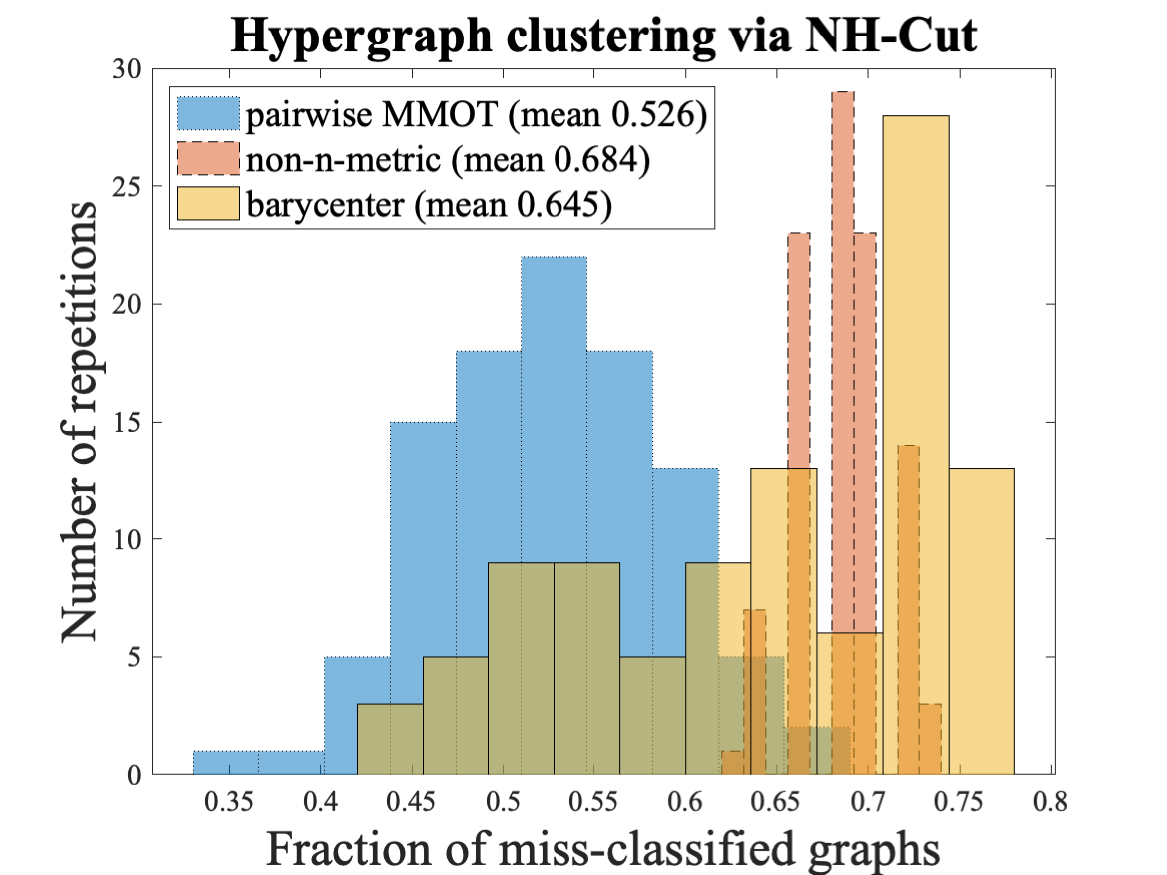}
\includegraphics[width=0.32\textwidth,trim={0.65cm 0.cm 1.2cm 0.1cm},clip]{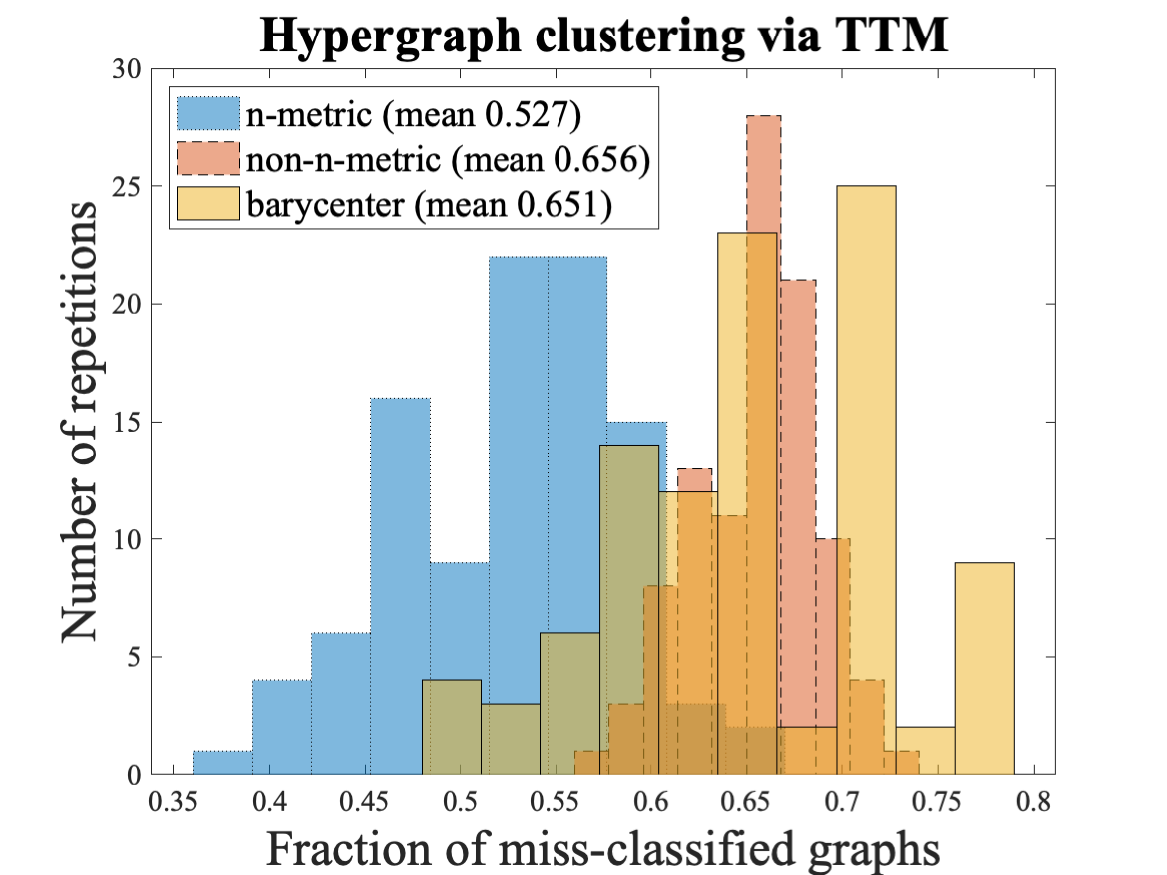}
\includegraphics[width=0.32\textwidth,trim={0.65cm 0.cm 1.2cm 0.1cm},clip]{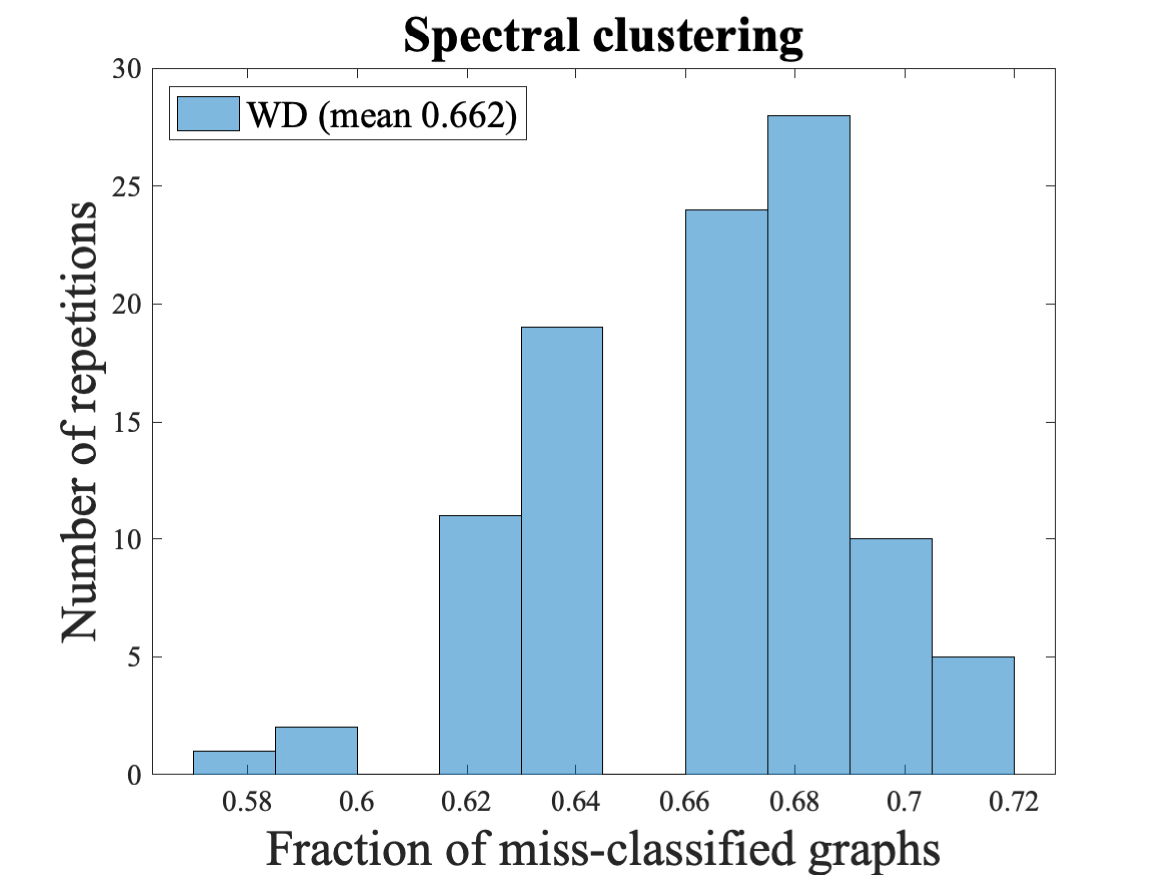}
%
%
\caption{\small Comparing the effect that different distances and metrics have on clustering \textit{molecular graphs}. Histogram with dashed outline is for non-$n$-metric, with solid outline for barycenter, and with dotted outline for pairwise MMOT.}
\label{fig:num_res_real_graphs}
\end{figure*}
Figures \ref{fig:num_res_real_graphs}-(left, center) show  that both TTM and NH-Cut work better when hyperedges are computed using $\NumDist$-metrics, and Figure \ref{fig:num_res_real_graphs}-(right) shows that clustering using pairwise relationships performs worse than using triple-wise relations. For the same reasons as explained in Section \ref{sec:exp_synthetic}, the results in Figure \ref{fig:num_res_real_graphs}-(right) are similarly bad to the results one would obtain if we used the MMOT explained in Remark \ref{rmk:comment_on_sum_of_inde_pairwise_sum}.

There is a starker difference between $\NumDist$-metrics and
 non-$\NumDist$-metrics than that seen in Figure \ref{fig:num_res_real_graphs}, which we now discuss.
 The number of possible $3$-sized hyperedges is cubic with the number of graphs being clustered. Thus, in our experiments we randomly sample $z$ triples $(i,j,k)$ and only for these we create an hyperedge with weight $\Wdist{i,j,k}$. 
 Figure \ref{fig:clustering_real_molecules} shows the effect of $z$ ($x$-axis) on performance. Comparing more graphs, i.e. increasing $z$, should improve clustering. However, for a non-$\NumDist$-metric, 
 as $z$ grows, triangle inequality violations can appear that introduce confusion: a graph can be ``close'' to two clusters that are far away, confusing TTM and NH-Cut. This compensates the benefits of a high $z$ and results in the flat curves in Figure \ref{fig:clustering_real_molecules}.

\begin{figure}[h!]
 \centering
  {\includegraphics[trim=0cm -0.cm 0cm 0cm, clip=true,width=0.6\textwidth]{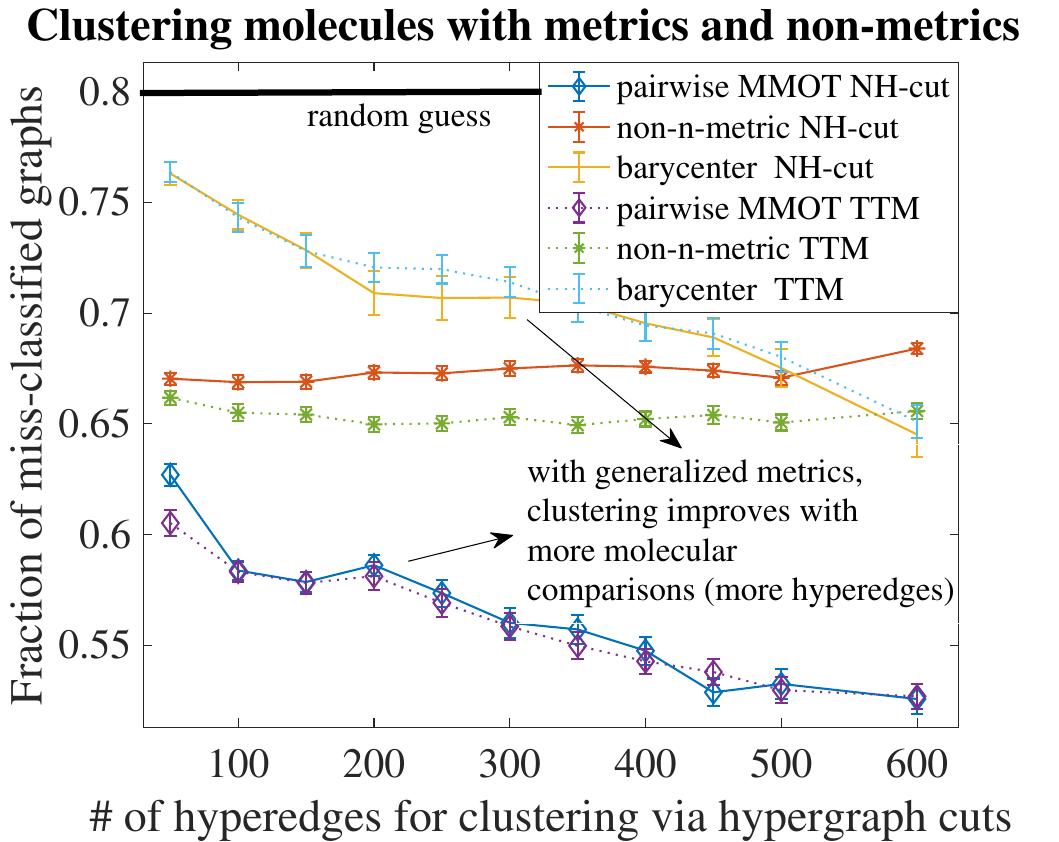}
  }
  \caption{\small  Hypergraphs are built by comparing molecules' shapes using MMOT and creating hyperedges for similarly-shaped molecule groups. With a non generalized metric, gains from using a richer hypergraph (more hyperedges) are lost due to anomalies allowed by the lack of metricity (flat curves in green and red), not so with a generalized metric (negative slope curves). The generalised metric studied (pairwise) leads to a better clustering than previously studied metrics (barycenter).  Error bars are the standard deviation of the average over $100$ independent experiments.
  }
  \label{fig:clustering_real_molecules}
\end{figure}

 \subsubsection{Reproducibility}

To produce Figure \ref{fig:num_res_real_graphs}   open Matlab and run the file \verb!run_me_for_molecular_experiments.m!. The call to \verb!run_me_for_molecular_experiments.m! takes $31$ hours to complete using 12 cores, each core from an Intel(R) Xeon(R) Processor E5-2660 v4 (35M Cache, 2.00 GHz). Except for the \verb!Weiszfeld.m! file and the molecular dataset, all of the code
was written by us and is distributed under an MIT License. This license is described in the \verb!README.txt! file at the root our repository. The license for \verb!Weiszfeld.m! is on the header of the file itself. The license for the dataset is described on the \verb!README.txt! file inside the dataset folder. 
 
We contacted that authors (via email) about the use of their dataset and they have informed
us that there are no licenses attached to it, as long as we attribute it a citation. This data contains no personally identifiable information nor offensive content.

\section{Discussion and future work}\label{sec:future}
%
In this paper, we have proved that for a general MMOT, the cost function being a generalized metric is not sufficient to guarantee that MMOT defines a generalized metric. Nevertheless, we have shown that a new family of multi-distances that generalize optimal transport to multiple distributions, the family of pairwise multi-marginal optimal transports (pairwise MMOT), leads to a multi-distance that satisfies generalized metric properties. 
This now opens the door to us using pairwise MMOT in combination with several algorithms whose good performance depends on metric properties.
In addition, we have established coefficients for the generalized triangle inequality associated with the pairwise MMOT, and  proved that these coefficients cannot be improved, up to a linear factor. 

Our results are for the pairwise MMOT family. 
In future work, we seek to find new sufficient conditions under
which other variants of MMOT lead to generalized metrics,
and, for certain families of MMOT, find necessary conditions
for these same properties to hold.

Finally, in future work we will also study how the  structure of optimal coupling among distributions induced by our MMOTs, i.e. the support of $\PIs$, compares with those of other MMOTs, such as the Barycenter MMOT.

\section*{Declarations}

\subsection*{Funding}

The authors gratefully acknowledge the support of the National Science Foundation (IIS-1741129) and the National Institutes of Health (grant 1U01AI124302).

\subsection*{Conflict of interest/Competing interests}

Not applicable

\subsection*{Ethics approval}

Not applicable

\subsection*{Consent to participate}

Not applicable

\subsection*{Consent for publication}

\subsection*{Availability of data and materials}

All used data is available via the link below.

\url{https://github.com/bentoayr/MMOT/Datasets}

\subsection*{Code availability} 

All used code is available via the link below.

\url{https://github.com/bentoayr/MMOT}

\subsection*{Authors' contributions}

Jos\'e Bento, Azadeh Sheikholeslami, and Liang Mi contributed equally in effort among the tasks of writing of the manuscript, the checking of the correctness of the proofs, and the running of the numerical experiments.
Liang Mi made the first pass on the writing of paper.
Jos\'e Bento set the goals of the project.

\noindent

\bibliography{main}

\begin{appendices}

\newpage
\clearpage

\section{Details for proof of Theorem \ref{app:th:counter_example}}\label{app:proof_details_of_counter_example}

\begin{proof}
Note that Definition \ref{def:gen_metric_d} supports using a different function $\DistU{i,j,k}$ for different product sample spaces $\XU{i}\times\XU{j}\times\XU{k}$. In the case of Theorem \ref{app:th:counter_example}, however, we only use $\X\times\X\times\X$, so, when checking the $\NumDist$-metric properties, we can drop the upper indices in $\Dist$ in Definition \ref{def:gen_metric_d}.

For simplicity, we will abuse the notation and use $\Dist(x,y,z)$ and $\DistUL{}{i,j,k}$ interchangeably, where $i$, $j$, and $k$ are the index of $x, y$, and $z$, in $\XU{}$.

Given $x,y,z,w \in \XU{}$, it is immediate to see that (i) $\Dist(x,y,z) \geq 0$, (ii) $\Dist(x,y,z)$ is permutation invariant, and that (iii)  $\Dist(x,y,z) = 0$ if and only $x=y=z$ (remember that there are no three co-linear points in $\XU{}$). 
It is also not hard to see that, $\Dist(x,y,z) \leq \Dist(x,y,w)  + \Dist(x,w,z) + \Dist(w,y,z)$. To be specific, if $\Dist(x,y,z) = 0$, then the inequality is obvious. If $\Dist(x,y,z) = \gamma$, then without loss of generality we can assume that $x = y \neq z$. In this case, if furthermore $w = x$, then $\Dist(x,w,z) = \gamma$, and the inequality follows. If $w = z$, then $\Dist(x,y,w) = \gamma$, and the inequality follows. If $w$ is different from $x,y,z$ then $\gamma \leq \Dist(x,w,z)$, and the inequality follows.
If $\Dist(x,y,z) > \gamma$, it must be that $x,y$ and $z$ are different. In which case we need do consider two special cases. If $w$ is equal to one among $x,y,z$, say $w=x$ without loss of generality, then $\Dist(x,y,z) = \Dist(y,z,w)$, and the inequality follows. If $w$ is different from $x,y,z$, then we have $\Dist(x,y,z) = \Dist(x,y,w)  + \Dist(x,w,z) + \Dist(w,y,z)$ if $w$ is contained by the triangle formed by $x, y$, and $z$, and otherwise, we have 
$\Dist(x,y,z) < \Dist(x,y,w)  + \Dist(x,w,z) + \Dist(w,y,z)$. In other words, $\Dist$ is an $\NumDist$-metric ($\NumDist = 3$).

Given a mass function $\PIU{i,j,k}$, the value $\inner{\DistUL{i,j,k}{}}{\PIU{i,j,k}}$ represents the average area of the triangle whose three vertices are sampled from $\PIU{i,j,k}$. Computing the MMOT distance $\Wdist{i,j,k}$ for the mass functions $\PIU{i}$, $\PIU{j}$, $\PIU{k}$, amounts to finding the mass function $\PIsU{i,j,k}$ with univariate marginals $\PIU{i}$, $\PIU{j}$, $\PIU{k}$ that minimizes this average area.

Now consider $\PIU{1}$, $\PIU{2}$, $\PIU{3}$, and $\PIU{4}$ as depicted in Figure \ref{app:fig:counter_example}-(left). The mass functions $\PIU{1}$, $\PIU{2}$ assign probability one to each one of the blue and red points, and zero probability to every other point in $\XU{}$. The mass functions $\PIU{3}$ and $\PIU{4}$ assign equal probability to each one of the green points, and orange points, respectively, and $0$ probability to other points in $\X$.

Now we compute the distances $\Wdist{1,2,3}$, $\Wdist{1,2,4}$, $\Wdist{1,3,4}$, and $\Wdist{2,3,4}$.
The MMOT distance $\Wdist{1,2,3}$ is equal to the average of the area of the two shaded triangles in Figure \ref{app:fig:counter_example_W123_W_124}-(left), which is $\Wdist{1,2,3} = 0.5\times(0.5) + 0.5\times(0.5) = 0.5$. The distance $\Wdist{1,2,4}$ is equal to the average of the area of the two shaded triangles in Figure \ref{app:fig:counter_example_W123_W_124}-(right), which is $\Wdist{1,2,4} = 0.5\times (0.5 \epsilon ) + 0.5\times (0.25 - 0.5\epsilon ) = 0.125$.
\begin{figure}[h]
\centering
\includegraphics[scale=0.33, trim={0cm 0cm 0cm 0cm}, clip]{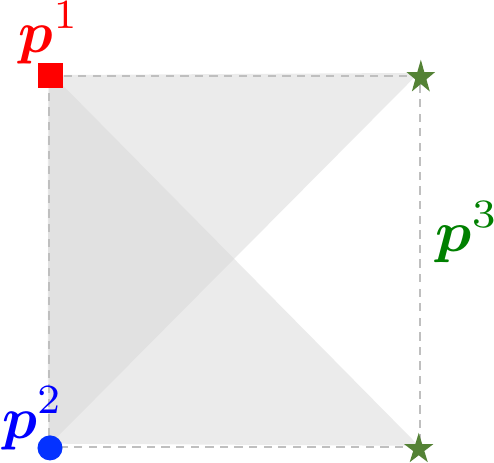}\quad\quad\quad
\includegraphics[scale=0.33, trim={0cm 0cm 0cm 0cm}, clip]{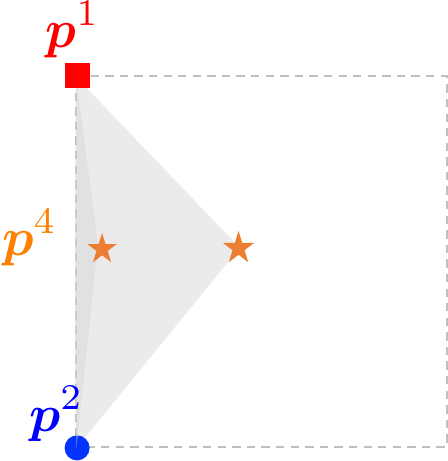}
\caption{\small (Left) Triangles associated with the optimal distribution of triples $\PIsU{1,2,3}$ associated with $\WU{1,2,3}$. Red square is ${\bm p}^1$. Blue circle is ${\bm p}^2$.  Green starts are ${\bm p}^3$. (Right) Triangles associated with the optimal distribution $\PIsU{1,2,4}$ associated with $\WU{1,2,4}$. Red square is ${\bm p}^1$. Blue circle is ${\bm p}^2$.  Yellow starts are ${\bm p}^4$.} 
\label{app:fig:counter_example_W123_W_124}
\end{figure}

The MMOT distances for $\Wdist{1,3,4}$ and $\Wdist{2,3,4}$ are the same by symmetry. We focus on the computation of $\Wdist{2,3,4}$. Since both $\PIU{4}$ and $\PIU{3}$ are uniform over their respective supports, it must be the case that $\PIsU{2,3,4}$ -- the optimal joint distribution in the computation of $\Wdist{2,3,4}$ -- has a bi-variate marginal $\PIsU{3,4}$ of the form
$$\{\{\PIsUL{3,4}{1,1},\PIsUL{3,4}{2,1}\},\{\PIsUL{3,4}{1,2},\PIsUL{3,4}{2,2}\}\} =\hspace{-0.08cm} \left\{\left\{ \alpha ,\frac{1}{2} \hspace{-0.08cm}-\hspace{-0.08cm} \alpha \right\},\left\{ \frac{1}{2} \hspace{-0.08cm}-\hspace{-0.08cm} \alpha, \alpha \right\}\right\},$$
where $\alpha \in \left[0, \frac{1}{2}\right]$. Therefore, the distance $\Wdist{2,3,4}$ is equal to the weighted average of the area of the four shaded triangles in Figure \ref{app:fig:counter_example_W234}, where we split the four triangles into two different drawings for clarity sake.
\begin{figure}[h]
\centering
\includegraphics[scale=0.33, trim={0cm 0cm 0cm 0cm}, clip]{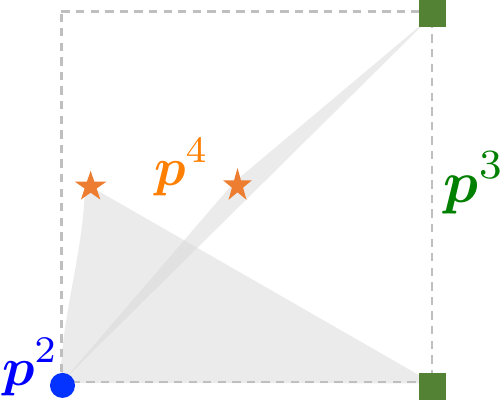}\quad\quad\quad
\includegraphics[scale=0.33, trim={0cm 0cm 0cm 0cm}, clip]{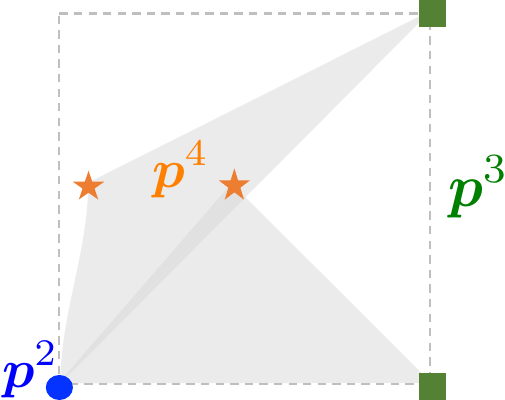}
\caption{\small Triangles associated with the optimal distribution of triples $\PIsU{2,3,4}$ associated with $\WU{2,3,4}$. Green squares are ${\bm p}^3$. Blue circle is ${\bm p}^2$.  Yellow starts are ${\bm p}^4$.} 
\label{app:fig:counter_example_W234}
\end{figure}
In other words, 
\begin{align*}
     \WU{2,3,4} & = \min_{\alpha\in[0,\frac{1}{2}]} \Big\{  \alpha\left(\frac{1}{4} \right) + \alpha\left(\frac{\epsilon}{2}\right) +  \left(\frac{1}{2} - \alpha\right)\left(\frac{1}{4} - \frac{\epsilon}{2}\right)  + \left(\frac{1}{2} - \alpha\right)\left(\frac{1}{4}\right)  \Big\} \\
&= \min \Big\{  \frac{1}{2}\left(\frac{1}{4} \right) + \frac{1}{2}\left(\frac{\epsilon}{2}\right), \frac{1}{2}\left(\frac{1}{4} - \frac{\epsilon}{2}\right) + \frac{1}{2}\left(\frac{1}{4}\right)  \Big\} = \frac{1}{8} + \frac{\epsilon}{4},
\end{align*}
where we are using the fact that $\min_{\alpha\in[0,\frac{1}{2}]}(\text{linear function of } \alpha)$ must be minimized at either extreme $\alpha = 0$ or $\alpha = \frac{1}{2}$.
It is finally easy to observe that 
\begin{align*}
    \mathcal{W}^{1,2,3} = \frac{1}{2} > \frac{1}{8} + \left(\frac{1}{8} + \frac{\epsilon}{4} \right)+ \left(\frac{1}{8} + \frac{\epsilon}{4}\right) = \mathcal{W}^{1,2,4} + \mathcal{W}^{1,3,4} + \mathcal{W}^{2,3,4}.
\end{align*}
\end{proof}

\section{Useful lemmas} \label{sec:useful_lemmas}
\begin{lemma}\label{th:marginals_of_G_map}
Let $\PI$ be as in Def. \ref{def:new_gluing_map} eq. \eqref{eq:def_of_G_1} for some $\QIU{k}$ and $\{\QIU{i\mid k}\}_{i\in[\NumDist]\backslash k}$.
Let $\PIU{i}$ and $\PIU{i,k}, i \neq k,$ be the marginals of $\PI$ over $\XU{i}$ and $\XU{i} \times \XU{k}$, respectively.
Let $\QIU{i,k} = \QIU{i\mid k}\QIU{k}$, $i \neq k$, and let $\QIU{i}$ be its marginals over $\XU{i}$. We have that $\PIU{i} = \QIU{i} \; \forall i,$ and $\PIU{i,k} = \QIU{i,k} \; \forall i \neq k$.
\end{lemma}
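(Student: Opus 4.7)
The plan is to verify both claimed identities by direct marginalization from the product factorization in Def.~\ref{def:new_gluing_map}, eq.~\eqref{eq:def_of_G_1}. The only analytical input needed is that each conditional mass $\QIU{j \mid k}$ normalizes to one when summed over its own argument, i.e., $\sum_{s_j} \QIUL{j \mid k}{s_j \mid s_k} = 1$ for every fixed $s_k$ and every $j \neq k$. This is just the definition of a conditional probability mass function applied slice-by-slice in $s_k$.

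First I would establish $\PIU{k} = \QIU{k}$ by summing the defining product over all indices other than $k$: the factor $\QIUL{k}{s_k}$ is independent of the summation variables and pulls out, while each of the $\NumDist-1$ remaining conditional factors $\QIUL{j \mid k}{s_j \mid s_k}$ collapses to $1$ by the normalization fact above, leaving $\PIUL{k}{s_k} = \QIUL{k}{s_k}$. Next, for $i \neq k$, I would compute $\PIUL{i,k}{s_i, s_k}$ by summing the defining product over the $\NumDist - 2$ indices $s_j$ with $j \notin \{i, k\}$. The factors $\QIUL{k}{s_k}$ and $\QIUL{i \mid k}{s_i \mid s_k}$ pull out of the sum, and the remaining $\NumDist - 2$ conditional factors collapse to $1$. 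The result $\QIUL{k}{s_k}\QIUL{i \mid k}{s_i \mid s_k}$ is exactly the definition of $\QIUL{i,k}{s_i, s_k}$ given in the hypothesis, establishing $\PIU{i,k} = \QIU{i,k}$.

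The remaining identity $\PIU{i} = \QIU{i}$ for $i \neq k$ then follows immediately by summing the bivariate marginal $\PIUL{i,k}{s_i, s_k}$ over $s_k$: the left-hand side becomes $\PIUL{i}{s_i}$ by the marginalization rule, and the right-hand side becomes the $\XU{i}$-marginal of $\QIU{i,k}$, which is $\QIU{i}$ by definition.

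There is no genuine obstacle here; the lemma is bookkeeping with a product distribution. The only care required is to remember that $\QIU{j \mid k}$ is a family of conditional distributions indexed by $s_k$ rather than a joint distribution, so the ``sum equals one'' identity must be applied to the argument $s_j$ with $s_k$ held fixed, and that in the statement $\QIU{i,k}$ is \emph{defined} to be $\QIU{i \mid k}\QIU{k}$ (not an independently given joint), so the final matches with $\QIU{i,k}$ and $\QIU{i}$ are by construction.
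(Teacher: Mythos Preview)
Your proposal is correct and is essentially the explicit unpacking of the paper's very terse argument: the paper simply notes that the factorization in eq.~\eqref{eq:def_of_G_1} makes the non-$k$ variables conditionally independent given the $k$th and says ``the result follows,'' whereas you carry out the corresponding marginalizations by hand. Both are the same idea; your version spells out the computation that the paper leaves implicit.
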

\begin{proof}
Think of $\PI$ as describing $\NumDist$ discrete random variables (r.v.'s). 
It follows from the factorisation in \eqref{eq:def_of_G_1} that conditioned on the $k$th r.v.
the other r.v.'s are independent. The result follows.
\end{proof}
\begin{lemma}\label{th:triang_ineq_for_inner_terms}
Let $\Dist$ be a metric and $\PI$ a mass over $\XU{1}\times \mydots \times\XU{\NumDist}$. Let $\PIU{i,j}$ be the marginal of $\PI$ over $\XU{i}\times \XU{j}$. Define $w_{i,j} = \inner{\DistU{i,j}}{\PIU{i,j}}_{\ell}^{\frac{1}{\ell}}$. For any $i,j,k\in [\NumDist]$ and $\ell \in \mathbb{N}$ we have that $w_{i,j} \leq w_{i,k} + w_{k,j}$.
\end{lemma}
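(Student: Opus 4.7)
The plan is to recognize that $w_{i,j}$ is just the $L^\ell$-norm of the random variable $D_{i,j} \triangleq \DistU{i,j}(X_i,X_j)$ where $(X_1,\dots,X_\NumDist)\sim\PI$, because $\inner{\DistU{i,j}}{\PIU{i,j}}_\ell = \sum_{s,t}(\DistUL{i,j}{s,t})^\ell\PIUL{i,j}{s,t} = \mathbb{E}_{\PI}[D_{i,j}^\ell]$. Once this is made explicit, the inequality $w_{i,j}\leq w_{i,k}+w_{k,j}$ becomes the composition of two standard facts: the pointwise triangle inequality $D_{i,j}\leq D_{i,k}+D_{k,j}$ (from $\Dist$ being a metric) and Minkowski's inequality for $\|\cdot\|_\ell$ with respect to the probability measure $\PI$. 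No use of optimality or of the Gluing/hashing machinery developed earlier is required; the lemma is purely a fact about a \emph{single given} joint $\PI$.

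First I would lift everything to a common trivariate marginal. Let $\PIU{i,j,k}$ be the marginal of $\PI$ over $\XU{i}\times\XU{j}\times\XU{k}$. Then
\[
w_{i,j}^\ell = \sum_{s,t}(\DistUL{i,j}{s,t})^\ell\PIUL{i,j}{s,t} = \sum_{s,t,l}(\DistUL{i,j}{s,t})^\ell\PIUL{i,j,k}{s,t,l},
\]
and analogously $w_{i,k}^\ell$ and $w_{k,j}^\ell$ can be written as sums over the same index set $(s,t,l)$ against the same measure $\PIU{i,j,k}$. This reduction is where the assumption that all three $w$'s come from the \emph{same} underlying $\PI$ is used: it guarantees that the bivariate marginals $\PIU{i,j}$, $\PIU{i,k}$, $\PIU{k,j}$ are automatically compatible (no Gluing Lemma needed), so the $L^\ell$-norm interpretation with respect to one common measure is legitimate.

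Second, I would apply the metric triangle inequality of $\Dist$ pointwise, i.e.\ $\DistUL{i,j}{s,t}\leq \DistUL{i,k}{s,l}+\DistUL{k,j}{l,t}$ for every $(s,t,l)$, raise both sides to the $\ell$th power (both sides are nonnegative and $x\mapsto x^\ell$ is monotone on $\mathbb{R}^{\geq 0}$), and average against $\PIU{i,j,k}$ to obtain
\[
w_{i,j}^\ell \;\leq\; \sum_{s,t,l}\bigl(\DistUL{i,k}{s,l}+\DistUL{k,j}{l,t}\bigr)^\ell\PIUL{i,j,k}{s,t,l}.
\]
Third, I would invoke Minkowski's inequality on the right-hand side, viewing it as the $L^\ell$-norm (to the $\ell$th power) of a sum of two nonnegative functions with respect to the probability measure $\PIU{i,j,k}$, yielding
\[
w_{i,j}\leq \bigl(\mathbb{E}_{\PIU{i,j,k}}\DistU{i,k}(X_i,X_k)^\ell\bigr)^{1/\ell}+\bigl(\mathbb{E}_{\PIU{i,j,k}}\DistU{k,j}(X_k,X_j)^\ell\bigr)^{1/\ell}=w_{i,k}+w_{k,j},
\]
where in the last equality I marginalize $\PIU{i,j,k}$ back down to $\PIU{i,k}$ and $\PIU{k,j}$.

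The only step with any subtlety is the trivial case $\ell=1$, which requires no Minkowski at all, and the boundary case where $k\in\{i,j\}$; both are easily dispatched separately (if $k=i$ then $w_{i,k}=0$ and the inequality is an equality, and symmetrically for $k=j$). I do not expect a real obstacle: the argument is essentially the classical proof of the $L^\ell$-triangle inequality for expected $\ell$th powers, and none of the generalized-metric, hashing, or gluing difficulties highlighted earlier in the paper arise because we are working downstream of a fixed joint $\PI$ rather than trying to combine several joints.
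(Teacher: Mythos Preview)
Your proposal is correct and follows essentially the same route as the paper's proof: lift to the trivariate marginal $\PIU{i,j,k}$, apply the pointwise triangle inequality of $\Dist$, then invoke Minkowski's inequality in the $L^\ell$ space with measure $\PIU{i,j,k}$, and marginalize back. The paper's version is simply a terser rendition of exactly these steps.
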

\begin{proof}
Let $\PIU{i,j,k}$ be the marginal of $\PI$ over $\XU{i} \times \XU{j}\times\XU{k}$. Write
\begin{align*}
 w_{i,j} = \left(\inner{\DistU{i,j}}{\PIU{i,j}}_{\ell}\right)^{{1}/{\ell}} = \left(\sum_{s,t,r} (\DistUL{i,j}{s,t})^{\ell} \PIUL{i,j,k}{s,t,r}\right)^{{1}/{\ell}} \leq \left(\sum_{s,t,r} (\DistUL{i,k}{s,r} + \DistUL{k,j}{r,t})^{\ell} \PIUL{i,j,k}{s,t,r}\right)^{{1}/{\ell}}.   
\end{align*}
 Use Minkowski's inequality on a $L_{\ell}$ space with measure $\PIU{i,j,k}$ to bound this by
 \begin{align*}
   \left(\sum_{s,t,r} (\DistUL{i,k}{s,r})^{\ell} \PIUL{i,j,k}{s,t,r}\right)^{{1}/{\ell}} + \left(\sum_{s,t,r} (\DistUL{k,j}{r,t})^{\ell} \PIUL{i,j,k}{s,t,r}\right)^{{1}/{\ell}} = w_{i,k} + w_{k,j}  
 \end{align*}.
\end{proof}
\section{Proof of Theorem \ref{th:n_metric}}\label{app:proof_of_n_metric}

We will need the following hash function in this proof, as well as the useful Lemmas in Appendix \ref{sec:useful_lemmas}.

\subsection{Special hash function} 

\begin{definition}
The map $\HU{\NumDist}$ transforms a tuple $(i,j),  1 \leq i < j\leq \NumDist$, into either $2$, $3$ or $4$ triples according to
\begin{equation}
(i,j) \mapsto \HU{\NumDist}(i,j) = \HUL{\NumDist}{1}(i,j) \oplus \HUL{\NumDist}{2}(i,j),   
\end{equation}
where two tuples (respectively triples) are assumed duplicates iff all of their components agree and
\begin{equation}
\HUL{\NumDist}{1}(i,j) = \left\{
\begin{array}{ll}
      \{(i,\NumDist+1, h(i))\} & \hspace{-0.3cm}, \text{if } j = \NumDist \land i = 1\\
      \{(i,j,h(i)),(j,\NumDist+1,h(i))\} & \hspace{-0.3cm}, \text{if } \text{otherwise}
\end{array}, 
\right. \nonumber
\end{equation}
and
\begin{equation}
\HUL{\NumDist}{2}(i,j) = \left\{
\begin{array}{ll}
      \{(j,\NumDist+1, h(j))\} & \hspace{-0.3cm}, \text{if } i = j -1 \\
      \{(i,j,h(j)),(i,\NumDist+1,h(j))\} & \hspace{-0.3cm}, \text{if } i < j - 1
\end{array}.
\right. \nonumber
\end{equation}
$h(\cdot)$ is also a function of $\NumDist$ but for simplicity we omit it in the notation. $h(\cdot)$ is defined as
\begin{equation} \label{eq:def_h_for_C_one}
h(i) = 1 + ((i - 2) \text{ mod } n).
\end{equation}
\end{definition}

In what follows, the symbol $\oplus$ denotes a list join operation  with \emph{no duplicate removal}, e.g. $\{x,y\}\oplus\{x,z\}=\{x,y,x,z\}$.

\begin{lemma}\label{th:collision_simple_map}
Let $(a,b,c) \in \HU{\NumDist}(i,j)$ for $1 \leq i < j \leq \NumDist$. Then, $1 \leq a < b \leq \NumDist+1$, $ 1 \leq c \leq n$, and $c \notin \{a,b\}$.
Furthermore, the set
\begin{equation}\label{eq:set_H}
\bigoplus_{ 1 \leq  i < j\leq \NumDist} \HU{\NumDist}(i,j)
\end{equation}
has no duplicates.
\end{lemma}
\begin{proof}
The fact that $1\leq a < b \leq \NumDist +1$ and that $ 1 \leq c \leq n$ is immediate. To see that $c\notin\{a,b\}$, we just need to notice that $h(i) \notin \{i,\NumDist+1\}$ for $i\in[\NumDist]$. The fact that $h(i) \neq \NumDist + 1$ follows the range of $h$ being $[\NumDist]$. If we had $h(i) = i$, then we would have $(i-2) \text{ mod } \NumDist = i-1$, which is not possible. To see that  \eqref{eq:set_H} does not have duplicates, we  need to see that, starting from two different tuples, the different expressions that define the triples that go into \eqref{eq:set_H} can never be equal.

Given $1 \leq i < j \leq \NumDist, 1 \leq i' < j' \leq \NumDist, (i,j) \neq (i',j')$ we will show that
\begin{enumerate}
\item $\HU{\NumDist}(i,j)$ does not have duplicates;
\item $\HU{\NumDist}(i,j)$ and $\HU{\NumDist}(i',j')$ do not have overlaps, that is, $\HUL{\NumDist}{1}(i,j)$, $\HUL{\NumDist}{2}(i,j)$, $\HUL{\NumDist}{1}(i',j')$, and $\HUL{\NumDist}{2}(i',j')$ do not have overlaps with each other.
\end{enumerate}

It is obvious that $\HUL{\NumDist}{1}(i,j)$ does not have duplicates and nor does $\HUL{\NumDist}{2}(i,j)$ according to their definitions. Because $i \neq j$, it is also trivial to show that $\HUL{\NumDist}{1}(i,j)$ and $\HUL{\NumDist}{2}(i,j)$ do not have overlaps, based on their definitions. Therefore, the first claim is indeed true. The burden now is to verify the second claim.

We show that the four sets have no overlaps with each other. We show this two sets at a time, there are in total 6 pairs to consider. As an immediate result of the discussion in the first claim, the following four combinations do not have overlaps: $\HUL{\NumDist}{1}(i,j)$ vs. $\HUL{\NumDist}{2}(i,j)$, $\HUL{\NumDist}{1}(i',j')$ vs. $\HUL{\NumDist}{2}(i',j')$, $\HUL{\NumDist}{1}(i,j)$ vs. $\HUL{\NumDist}{1}(i',j')$, $\HUL{\NumDist}{2}(i,j)$ vs. $\HUL{\NumDist}{2}(i',j')$. The two combinations left are $\HUL{\NumDist}{1}(i,j)$ vs $\HUL{\NumDist}{2}(i',j')$ and $\HUL{\NumDist}{1}(i',j')$ vs $\HUL{\NumDist}{2}(i,j)$. We notice that they are symmetric and, because the choice of the tuples $(i,j), (i', j')$ is arbitrary, we only need to show that $\HUL{\NumDist}{1}(i,j)$ and $\HUL{\NumDist}{2}(i',j')$ do not have overlaps, given $(i,j) \neq (i', j')$.

$\HUL{\NumDist}{1}(i,j)$ and $\HUL{\NumDist}{2}(i',j')$ each have two possibilities for the form of their output. Thus, together, there are four possibilities to consider. None of them have an overlap, which we show by contradiction. 
\begin{enumerate}
    \item $\HUL{\NumDist}{1}(i,j) =  \{(i,\NumDist+1,h(i))\}$ and $\HUL{\NumDist}{2}(i',j') = \{(j',\NumDist+1, h(j'))\}$. If these single-element sets have an overlap, that implies that $i=j'$, but, according to the definition, $i=1$ and $i'=j'-1$ which implies $j'>1$.
    \item $\HUL{\NumDist}{1}(i,j) =  \{(i,\NumDist+1,h(i))\}$ and $\HUL{\NumDist}{2}(i',j') = \{(i',j',h(j')), (i',\NumDist+1,h(j'))\}$. For them to have an overlap, $h(i) = h(j')$. That requires $i = j'$ which contradictory to $i=1$ and $i' < j' - 1$ at the same time.
    \item $\HUL{\NumDist}{1}(i,j) =  \{(i,j,h(i)),(j,\NumDist+1,h(i))\}$ and $\HUL{\NumDist}{2}(i',j') = \{(i',j',h(j')), (i',\NumDist+1,h(j'))\}$. For the first two components to equal, $i=i'$, $ j=j'$, and $i=j'$, which is contradictory to $i' < j'-1$. For the second two components to equal, $j=i'$ and $i=j'$, which is contradictory to $i < j$ or $i' < j'$. Because of the existence of ``$n+1$'', the components at different positions cannot collide.
    \item $\HUL{\NumDist}{1}(i,j) =  \{(i,j,h(i)),(j,\NumDist+1,h(i))\}$ and $\HUL{\NumDist}{2}(i',j') =  \{(j',\NumDist+1, h(j'))\}$. This implies $j'=j$ and $j'=i$, which is contradictory to $i < j$.
\end{enumerate}
%
\end{proof}
For example, if $\NumDist = 3$, then the possible tuples $(1,2)$, and $(1,3)$, and $(2,3)$, get mapped respectively to $(1,2,3)$, $(2,4,3)$, $(2,4,1)$, and $(1,4,3)$, $(1,3,2)$, $(1,4,2)$, and $(2,3,1)$, $(3,4,1)$, $(3,4,2)$, all of which are different and satisfy the claims in Lemma \ref{th:collision_simple_map}.

We now prove the four metric properties in order. It is trivial to prove the first three properties given the definition of our distance function for the transport problem. Then, we provide a detailed proof for the triangle inequality.
\subsection{Non-Negativity} 
\begin{proof}
The non-negativity of $\DistU{i,j}$ and $\RIU{i,j}$, implies that 
$\inner{\DistU{i,j}}{\RIU{i,j}}_{\ell} \geq 0$, and hence that $\W \geq 0$.
\end{proof}
\subsection{Symmetry} 
\begin{proof}
Recall that the computation of $\W(\PIU{i_{1:\NumDist}})$ involves a set of distances $\{{\Dist}^{a,b}\}_{a,b}$.
Consider a generic permutation map $\sigma$, and let $\sigma^{-1}$ be its inverse. Let $\sigma$ and $\sigma^{-1}$ apply component-wise to its arguments.
The computation of $\W(\PIU{\sigma(i_{1:\NumDist})})$ involves a set of distances $\{\tilde{\Dist}^{a,b}\}_{a,b}$ that satisfy $\tilde{\Dist}^{i,j} = {\Dist}^{\sigma^{-1}(i,j)}$.
Therefore, each term $\inner{\tilde{\Dist}^{i,j}}{\RIU{i,j}}_{\ell}$ involved in the computation of $\W(\PIU{\sigma(i_{1 : \NumDist})})$, can be rewritten as 
$\inner{{\Dist}^{\sigma^{-1}(i,j)}}{\RIU{i,j}}_{\ell}$, where a simple reindexing of the summation $\sum_{i<j}$ allow us to write as $\inner{{\Dist}^{i,j}}{\RIU{\sigma(i,j)}}_{\ell}$. Since the mass function $\RI$ has as supporting sample space 
$\XU{\sigma(i_1)}\times\mydots\times\XU{\sigma(i_{\NumDist})}$, the marginal $\RIU{\sigma(i,j)}$ can be seen as the marginal $\QIU{i,j}$ of a mass function $\QI$ with support 
$\XU{i_1}\times\mydots\times\XU{i_{\NumDist}}$.
Therefore, minimizing $\sum_{i<j} (\inner{\tilde{\Dist}^{i,j}}{\RIU{i,j}}_{\ell})^{1/\ell}$ for $\RI$ over  
$\XU{\sigma(i_1)}\times\mydots\times\XU{\sigma(i_{\NumDist})}$ is the same as minimizing $\sum_{i<j} (\inner{{\Dist}^{i,j}}{\QIU{i,j}}_{\ell})^{1/\ell}$ for $\QI$ over $\XU{i_1}\times\mydots\times\XU{i_{\NumDist}}$.
\end{proof}
\subsection{Identity} 

\begin{proof}
We prove each direction of the equivalence separately.
Recall that $\{\PIU{i}\}$ are given, they are the masses for which we want to compute the pairwise MMOT.

``$\Longleftarrow$'': If for each $i,j\in [\NumDist]$
we have $\XU{i} = \XU{j}$, then $\SizeDistU{i} = \SizeDistU{j}$, and there exists a bijection $b^{i,j}(\cdot)$ from $[\SizeDistU{i}]$ to $[\SizeDistU{j}]$ such that $\XUL{i}{s} = \XUL{j}{b^{i,j}(s)}$ for all $s$. If furthermore $\PIU{i} = \PIU{j}$, we can define a $\RI$ for $\XU{1}\times\XU{\NumDist}$ such that its univariate marginal over $\XU{i}$, $\RIU{i}$, satisfies $\RIU{i} = \PIU{i}$, and such that its bivariate marginal over $\XU{i}\times\XU{j}$, $\RIU{i,j}$, satisfies $\RIUL{i,j}{s,t} = \PIUL{i}{s}$, if $t  = b^{i,j}(s)$, and zero otherwise. Such a $\RI$ achieves an objective value of $0$ in \eqref{eq:main_def_of_W}, the smallest value possible by the first metric property (already proved). Therefore, $\WU{1,\mydots,\NumDist} = 0$.

``$\Longrightarrow$'': Now let $\RIs$ be a minimizer of \eqref{eq:main_def_of_W} for $\WU{1,\mydots,n}$.
Let $\{\RIsU{i}\}$ and $\{\RIsU{i,j}\}$ be its univariate and bivariate marginals respectively.
If $\WU{1,\mydots,n} = 0$ then $\inner{\DistU{i,j}}{\RIsU{i,j}}_{\ell} = 0$ for all $i,j$. 
Let us consider a specific pair $i,j$, and, without loss of generality, let us assume that $\SizeDistU{i} \leq \SizeDistU{j}$.
Since, by assumption, we have that $\RIsUL{i}{s} = \PIUL{i}{s} > 0$ for all $s\in [\SizeDistU{i}]$, and $\RIsUL{j}{s} = \PIUL{j}{s} > 0$ for all $s\in [\SizeDistU{j}]$, there exists an injection
$b^{i,j}(\cdot)$ from $[\SizeDistU{i}]$ to $[\SizeDistU{j}]$
such that $\RIsUL{i,j}{s,b^{i,j}(s)} > 0$
for all $s\in [\SizeDistU{i}]$. 
Therefore, $\inner{\DistU{i,j}}{\RIsU{i,j}}_{\ell} = 0$ implies that  $\DistUL{i,j}{s,b^{i,j}(s)} = 0$ for all $s\in [\SizeDistU{i}]$.
Therefore, since $\Dist$ is a metric, it must be that $\XUL{i}{s} = \XUL{j}{b^{i,j}(s)}$ for all $s\in [\SizeDistU{i}]$.
Now lets us suppose that there exists an $r\in [\SizeDistU{j}]$ that is not in the range of $b^{i,j}$. Since, by assumption, all of the elements of the sample spaces are different, it must be that $\DistUL{i,j}{s,r} > 0$ for all $s\in [\SizeDistU{i}]$. Therefore, since $\inner{\DistU{i,j}}{\RIsU{i,j}}_{\ell} = 0$, it must be that $\RIsUL{i,j}{s,r} = 0$ for all $s\in [\SizeDistU{i}]$. This contradicts the fact that  $\sum_{s\in [\SizeDistU{i}]} \RIsUL{i,j}{s,r} = \RIsUL{j}{r} = \PIUL{j}{r}> 0$ (the last inequality being true by assumption). Therefore, $\SizeDistU{i} = \SizeDistU{j}$, and the existence of $b^{i,j}$ proves that $\XU{i} = \XU{j}$. At the same time, since $\DistUL{i,j}{s,t} > 0$ for all $t \neq b^{i,j}(s)$, it must be that $\RIsUL{i,j}{s,t} = 0$ for all $t \neq b^{i,j}(s)$. Therefore, $\PIUL{i}{s} = \PIUL{j}{b^{i,j}(s)}$ for all $s$, i.e. $\PIU{i} = \PIU{j}$.
\end{proof}
\subsection{Generalized Triangle Inequality}
\begin{proof}
Let $\PIs$ be a minimizer for (the optimization problem associated with) $\WU{1,\mydots,\NumDist}$, and let $\PIsU{i,j}$ be the marginal induced by $\PIs$ for the sample space $\XU{i}\times \XU{j}$. We would normally use $\RIs$ for this minimizer, but, to avoid confusions between $\RI$ and $r$, we avoid doing so. 
We can write that
\begin{equation}\label{eq:proof_of_th_1_first_step}
\WU{1,\mydots,\NumDist} = \sum_{ 1 \leq i < j \leq \NumDist-1} \inner{\DistU{i,j}}{\PIsU{i,j}}_{\ell}^{\frac{1}{\ell}}.
\end{equation}
For $r\in[\NumDist]$, let $\PI^{(\optsymbol r)}$ be a minimizer for $\WU{1,\mydots,r-1,r+1,\mydots,\NumDist+1}$. We would normally use $\RI^{(\optsymbol r)}$ for this minimizer, but, to avoid confusions between $\RI$ and $r$, we avoid doing so. 
For $i,j\in [\NumDist+1]\backslash\{r\}$, let ${\PI^{(\optsymbol r)}}^{i,j}$ be the marginal of $\PI^{(\optsymbol r)}$ for the sample space $\XU{i}\times\XU{j}$. Recall that since $\PI^{(\optsymbol r)}$ satisfies the constraints in \eqref{eq:main_def_of_W}, its marginal for the sample space $\XU{i}$ is $\PIsU{i}$, which is given in advance.

Let $h(\cdot)$ be the map defined as \eqref{eq:def_h_for_C_one}. 
Define the following mass function for $\XU{1}\times\mydots\times\XU{\NumDist+1}$,
%
%
\begin{equation}\label{eq:proof_of_th_1_p_aux}
\QI = \Gmap{\PIsU{\NumDist+1},\{{\PI^{(\optsymbol h(i))}}^{i \mid \NumDist+1}\}_{i \in [\NumDist]}},
\end{equation}
where ${\PI^{(\optsymbol h(i))}}^{i \mid \NumDist+1}$ is defined as the mass function that satisfies ${\PI^{(\optsymbol h(i))}}^{i \mid \NumDist+1} \PIsU{\NumDist+1} = {\PI^{(\optsymbol h(i))}}^{i, \NumDist+1}$.
Notice that since $h(i) \notin \{i,\NumDist+1\}$, the probability ${\PI^{(\optsymbol h(i))}}^{i, \NumDist+1}$ exists for all $i \in [\NumDist]$.

Let $\QIU{1,\mydots,\NumDist}$ be the marginal of $\QI$ for sample space $\XU{1}\times\mydots\times\XU{\NumDist}$, and $\QIU{i,j}$ be the marginal of $\QI$ for $\XU{i}\times\XU{j}$.

By Lemma \ref{th:marginals_of_G_map}, we know that the $i$th univariate marginal of $\QI$ is $\PIU{i}$ (given) and hence $\QIU{1,\mydots,\NumDist}$ satisfies the constraints associated with $\WU{1,\mydots,\NumDist}$. Therefore, we can write that 
\begin{equation}\label{eq:proof_of_th_1_second_step}
\sum_{ 1 \leq i < j \leq \NumDist} \inner{\DistU{i,j}}{\PIsU{i,j}}_{\ell}^{\frac{1}{\ell}} \leq \sum_{ 1 \leq i < j \leq \NumDist} \inner{\DistU{i,j}}{\QIU{i,j}}_{\ell}^{\frac{1}{\ell}}.
\end{equation}
By Lemma \ref{th:triang_ineq_for_inner_terms}, inequality \eqref{eq:proof_th_1_ineq_1} below holds; because $\Dist$ is symmetric, \eqref{eq:proof_of_th_1_eq_1} below holds; by the definition of $\QI$, \eqref{eq:proof_of_th_1_eq_2} below follows. Therefore,
\begin{align}
\inner{\DistU{i,j}}{\QIU{i,j}}_{\ell}^{\frac{1}{\ell}} &\labelrel\leq{eq:proof_th_1_ineq_1} \inner{\DistU{i,\NumDist+1}}{\QIU{i,\NumDist+1}}_{\ell}^{\frac{1}{\ell}} + \inner{\DistU{\NumDist+1,j}}{\QIU{\NumDist+1,j}}_{\ell}^{\frac{1}{\ell}}\label{}\\
&\labelrel={eq:proof_of_th_1_eq_1} \inner{\DistU{i,\NumDist+1}}{\QIU{i,\NumDist+1}}_{\ell}^{\frac{1}{\ell}} + \inner{\DistU{j,\NumDist+1}}{\QIU{j,\NumDist+1}}_{\ell}^{\frac{1}{\ell}} \nonumber \\
& \labelrel={eq:proof_of_th_1_eq_2} \inner{\DistU{i,\NumDist+1}}{{\PI^{(\optsymbol h(i))}}^{i, \NumDist+1}}_{\ell}^{\frac{1}{\ell}} + \inner{\DistU{j,\NumDist+1}}{{\PI^{(\optsymbol h(j))}}^{j, \NumDist+1}}_{\ell}^{\frac{1}{\ell}}.\label{eq:proof_of_th_1_next_step}
\end{align}

Let $w_{(i,j)}$ denote each term on the r.h.s. of \eqref{eq:proof_of_th_1_first_step}, and $w_{(i,j,r)}$ denote $\inner{\DistU{i,j}}{{\PI^{(\optsymbol r)}}^{i, j}}_{\ell}^{\frac{1}{\ell}}$. 
Combining \eqref{eq:proof_of_th_1_second_step} - \eqref{eq:proof_of_th_1_next_step}, we have
\begin{align}\label{eq:proof_of_th_1_intermediate_ineq}
    \sum_{ 1 \leq i < j \leq \NumDist-1} \hspace{-0.3cm} w_{(i,j)} \leq \hspace{-0.3cm} \sum_{ 1 \leq i < j \leq \NumDist-1} w_{(i,\NumDist,h(i))} + w_{(j,\NumDist,h(j))}.
\end{align}

Finally, we write 
\begin{equation}\label{eq:proof_of_th_1_def_rhs_triangle}
\sum^{\NumDist}_{r = 1} \WU{1,\mydots,r-1,r+1,\mydots,\NumDist+1} =  \sum^{\NumDist}_{r = 1} \sum_{i,j\in [\NumDist+1] \backslash \{r\}, i < j} w_{(i,j,r)},
\end{equation}
and show that \eqref{eq:proof_of_th_1_def_rhs_triangle} upper-bounds the r.h.s of \eqref{eq:proof_of_th_1_intermediate_ineq}.

First, by Lemma \ref{th:triang_ineq_for_inner_terms} and the symmetry of $d$, we have
\begin{equation}
  w_{(i,n,h(i))} \leq  w_{(i,j,h(i))} + w_{(j,n,h(i))}, \label{eq:proof_of_th_1_last_ineq_1}
\end{equation}
and,
\begin{equation}
  w_{(j,n,h(j))} \leq  w_{(i,j,h(j))} + w_{(i,n,h(j))}, \label{eq:proof_of_th_1_last_ineq_2}
\end{equation}
as long as for each triple $(a,b,c)$ in the above expressions, $c \notin \{a,b\}$.
We will use these inequalities to upper bound some of the terms on the r.h.s. of \eqref{eq:proof_of_th_1_intermediate_ineq}, which can be further upper bounded by \eqref{eq:proof_of_th_1_def_rhs_triangle}. 
In particular, we will apply inequalities \eqref{eq:proof_of_th_1_last_ineq_1} and \eqref{eq:proof_of_th_1_last_ineq_2} such that the terms $w_{a,b,c}$ that we get   after their use have triples $(a,b,c)$ that match the triples obtained via the map $\HU{ \NumDist}$ defined in Appendix \ref{sec:useful_lemmas}.
To be concrete, for example, if $\HU{\NumDist}$ maps $(i,j)$ to $\{(i,\NumDist+1,h(i)),(j,\NumDist+1,h(j))\}$, then we do not apply 
\eqref{eq:proof_of_th_1_last_ineq_1} and \eqref{eq:proof_of_th_1_last_ineq_2}, and we leave $w_{(i,\NumDist+1,h(i))} + w_{(j,\NumDist+1,h(j))}$ as is on the r.h.s. of \eqref{eq:proof_of_th_1_intermediate_ineq}. If, for example, $\HU{\NumDist}$ maps $(i,j)$ to $\{(i,\NumDist+1,h(i)),(i,j,h(j)),(i,\NumDist+1,h(j))\}$, then we leave the first term in $w_{(i,\NumDist+1,h(i))} + w_{(j,\NumDist+1,h(j))}$ in the r.h.s. of \eqref{eq:proof_of_th_1_intermediate_ineq} untouched, but we upper bound the second term using \eqref{eq:proof_of_th_1_last_ineq_2} to get $w_{(i,\NumDist+1,h(i))} + w_{(i,j,h(j))} + w_{(j,\NumDist+1,h(j))}$. 

After proceeding in this fashion, and by Lemma \ref{th:collision_simple_map}, we know that all of the terms $w_{(a,b,c)}$ that we obtain  have triples $(a,b,c)$ with $c\neq \{a,b\}$, with $c\in [\NumDist]$, and $1\leq a < b \leq \NumDist+1$. Therefore, these terms appear in \eqref{eq:proof_of_th_1_def_rhs_triangle}. Also by Lemma \ref{th:collision_simple_map}, we know that we do not get each triple more than once. Therefore, the upper bound that we just constructed with the help of $\HU{\NumDist}$ for the r.h.s of \eqref{eq:proof_of_th_1_intermediate_ineq} can be upper bounded by \eqref{eq:proof_of_th_1_def_rhs_triangle}.
\end{proof}
\section{Proof of  Theorem \ref{th:n_metric_c}}\label{app:proof_of_n_metric_C_upper_bound}
The proof of Theorem \ref{th:n_metric_c} requires  a special hash function, discussed next, as well as the useful Lemmas in Appendix \ref{sec:useful_lemmas}.
\subsection{Special hash function}
To prove that \ref{th:n_metric_iv}. in Def. \ref{def:gen_metric_prop_W} holds with $C(\NumDist) \geq (\NumDist-1)/5$, $\NumDist > 7$, we define the following special hash function.
In what follows, the symbol $\oplus$ denotes a list join operation  with \emph{no duplicate removal}, e.g. $\{x,y\}\oplus\{x,z\}=\{x,y,x,z\}$.

\begin{definition}\label{def:complex_map_H_prime}
The map $\HU{'\NumDist}$ transforms a triple $(i,j,r), 1\leq i < j \leq \NumDist, r \in {\color{black}[\NumDist - 1]}$ to \ignore{a list of }either $2$, $3$, or $4$ triples according to 
\begin{equation}
(i,j,r) \mapsto {\color{black}\HU{'\NumDist}(i,j,r)} = \HUL{'\NumDist}{1}(i,j,r) \oplus \HUL{'\NumDist}{1}(j,i,r),
\end{equation}
\begin{equation}
\HUL{'\NumDist}{1}(i,j,r) = \left\{ 
\begin{array}{ll}
      \{(i,r, h'(i,r))\} & , \text{if } j = h'(i,r) \\
      \{(i,j,h'(i,r),(j,r,h'(i,r))\} & , \text{if } j \neq h'(i,r)
\end{array},
\right. 
\end{equation}
\begin{equation}\label{eq:def_h_prime_for_C_n}
h'(i,r) = \left\{
\begin{array}{ll}
    1 + ((i + r - 1) \mod\NumDist) &, \text{if } i < \NumDist\\
    1 + (r \mod (\NumDist - 1))  &, \text{if } i = \NumDist 
\end{array}. 
\right. 
\end{equation}
We assume that the first two components of each output triple are ordered. For example, $(i, r, h'(j,r)) \equiv (\min\{i,r\}, \max\{i,r\}, h'(j,r))$.
\end{definition}

The following property of $\HU{'\NumDist}$ is critical to lower bound $C(\NumDist)$. 

\begin{lemma}\label{th:collision_complex_map}
Let $(a,b,c) \in \HU{'\NumDist}(i,j,r)$, $1 \leq i < j \leq \NumDist$, $r \in [\NumDist-1]$. Then, $1 \leq a \leq b \leq \NumDist$, $ 1 \leq c \leq n$, and $c \notin \{a,b\}$.
Furthermore,
\begin{equation}\label{eq:H_p_set}
\bigoplus_{ 1 \leq  i < j\leq \NumDist} \HU{'\NumDist}(i,j,r)
\end{equation}
has at most $5$ copies of each triple, where two triples are equal iff they agree component-wise.
\end{lemma}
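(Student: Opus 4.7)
The plan is to establish the two assertions of the lemma separately. For the component bounds and the non-membership $c \notin \{a,b\}$, I would inspect Definition~\ref{def:complex_map_H_prime} directly. Every output of $\HUL{'\NumDist}{1}(x,y,r)$ has one of the forms $(\min(x,r),\max(x,r),h'(x,r))$ (from the branch $y=h'(x,r)$, where $y$ itself is the third entry), $(\min(x,y),\max(x,y),h'(x,r))$, or $(\min(y,r),\max(y,r),h'(x,r))$ (from the branch $y\neq h'(x,r)$), with the first two entries explicitly ordered. Since $x,y\in[n]$, $r\in[n-1]$, and $h'(\cdot,\cdot)\in[n]$, the bounds $1\leq a\leq b\leq n$ and $c\in[n]$ are immediate. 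The condition $c\notin\{a,b\}$ reduces to the auxiliary fact $h'(x,r)\notin\{x,r\}$ for all $x\in[n]$, $r\in[n-1]$, which follows from the modular definition: for $x<n$, $h'(x,r)=x$ would force $r\equiv 0\pmod n$ and $h'(x,r)=r$ would force $x\equiv 0\pmod n$, both impossible in the stated ranges; for $x=n$, $h'(n,r)\in[n-1]$ excludes $h'(n,r)=n$, and a small case check on $r$ rules out $h'(n,r)=r$. The residual constraint $c\neq y$ in the $y\neq h'(x,r)$ branch is automatic by the branch assumption.

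For the counting, I would fix $r$ and a target triple $(a,b,c)$ and enumerate matches in the union, using the equivalent form $\bigoplus_{1\leq i<j\leq n}\HU{'\NumDist}(i,j,r) = \bigoplus_{(x,y):\,x\neq y}\HUL{'\NumDist}{1}(x,y,r)$. I would classify each matching output into three types: (A) the lone output in the $y=h'(x,r)$ branch, matching iff $\{x,r\}=\{a,b\}$ and $y=c$; (B1) the first output in the other branch, matching iff $\{x,y\}=\{a,b\}$ and $h'(x,r)=c$; and (B2) the second output in the other branch, matching iff $\{y,r\}=\{a,b\}$ and $h'(x,r)=c$. A second auxiliary fact I would verify from the definition of $h'$ is that for fixed $r\in[n-1]$ and any $c$, the equation $h'(x,r)=c$ has \emph{at most two} solutions $x\in[n]$ (at most one from the $x<n$ branch and at most one from $x=n$).

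The count then splits on whether $r\in\{a,b\}$. If $r\notin\{a,b\}$, types (A) and (B2) are vacuous, and (B1) contributes at most two matches (one per ordering of $\{a,b\}$), for a total of $\leq 2$. If $r\in\{a,b\}$, type (A) contributes $\leq 1$ (its $(x,y)$ is pinned to the other element of $\{a,b\}$ paired with $c$), type (B1) contributes $\leq 2$ (two orderings of $\{a,b\}$), and type (B2) contributes $\leq 2$ (the at-most-two preimages of $c$ under $h'(\cdot,r)$, minus the excluded value $y$). Summing gives $\leq 5$.

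The main obstacle will be bookkeeping potential coincidences of outputs inside a single call when $x=r$. The two outputs in the $y\neq h'(x,r)$ branch coincide precisely when $x=r$, so the call $\HUL{'\NumDist}{1}(a,b,a)$ with $r=a$ and $h'(a,a)=c$ contributes two identical copies of $(a,b,c)$ that must be correctly attributed as one (B1) match and one (B2) match from the same call (rather than double-counted or missed). Ensuring consistent accounting across the three types, together with tracking when the $x=n$ branch of $h'$ supplies a second preimage of $c$, is the delicate bookkeeping step that makes the $\leq 5$ bound hold in the worst case.
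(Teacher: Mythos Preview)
Your argument is correct for the lemma exactly as stated (with $r$ held fixed throughout the multiset union), and it is genuinely different from the paper's route. The paper does not use your preimage observation about $h'(\cdot,r)$; instead it writes down ten ``scenarios'' describing every possible way an input could hit a target $(a,b,c)$, proves twelve pairwise incompatibilities among these scenarios by ad hoc modular arithmetic, and then observes that the incompatibility graph has maximum independent set of size five. Your classification into types (A), (B1), (B2) together with the at-most-two-preimages fact for $h'(\cdot,r)$ gives the same bound $1+2+2=5$ by a much shorter and more structural count, and as a bonus it shows the bound drops to $2$ whenever $r\notin\{a,b\}$.

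One caveat worth knowing: the paper's proof actually lets $r$ vary along with $(i,j)$ --- the scenarios carry indices $r_1,\dots,r_{10}$ that are pinned down by $(a,b,c)$ and are generally distinct --- so what it really establishes is that $\bigoplus_{r\in[n-1]}\bigoplus_{1\le i<j\le n}\mathcal{H}'^n(i,j,r)$ has at most five copies of each triple. That stronger statement is what Section~\ref{sec:proof_ideas} invokes when it bounds the double sum $\sum_{r=1}^{n-1}\sum_{1\le i<j\le n}$ by $5\sum_{r=1}^{n}\sum_{i<j}v_{(i,j,r)}$. Your fixed-$r$ decomposition does not immediately give this: summing your type bounds over the two relevant values $r\in\{a,b\}$ already yields $2+2+4=8$, and closing the gap to $5$ requires cross-$r$ exclusions of the sort the paper handles via its incompatibility graph. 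So your proof settles the lemma as written, but if you need it for the application in Section~\ref{sec:proof_ideas} you would have to strengthen the counting step.
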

\begin{figure}[h!]
\centering
\includegraphics[width=0.17\textwidth,trim={0.0cm -0.4cm 0cm 0.cm},clip]{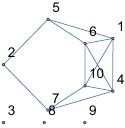}
\caption{\small Graph whose edges are pairs of scenarios that cannot both hold. Any maximum independent set has size $5$, which is used below to prove Lemma \ref{th:collision_complex_map}.}
\label{app:fig:collision_graph}
\end{figure}
\begin{proof}

Recall the definitions:
\begin{equation}
\HUL{'\NumDist}{1}(i,j,r) = \left\{ 
\begin{array}{ll}
      \{(i,r,h'(i,r))\} & , \text{if } j = h'(i,r) \\
      \{(i,j,h'(i,r),(j,r,h'(i,r))\} & , \text{if } j \neq h'(i,r)
\end{array}, 
\right. \nonumber
\end{equation}
\begin{equation}
h'(i,r) = \left\{
\begin{array}{ll}
    1 + ((i + r - 1) \mod \NumDist) &, \text{if } i < \NumDist\\
    1 + (r \mod (\NumDist - 1))  &, \text{if } i = \NumDist 
\end{array}.
\right.  \nonumber
\end{equation}
The fact that $1 \leq a \leq b \leq \NumDist$ and that $ 1 \leq c \leq n$ is immediate. 
The fact that $c \neq \{a,b\}$ amounts to checking that $h'(i,r) \notin  \{i,r\}$ for all $i\in [\NumDist]$, and $r\in [\NumDist-1]$. This can be checked directly from \eqref{eq:def_h_prime_for_C_n}. E.g. $h'(i,r) = i$ would imply either  $(i = \NumDist) \land (r \text{ mod } \NumDist -1 = i-1)$, or $(i < \NumDist) \land ((i+r-1)\text{ mod } \NumDist = i-1$), both of which are impossible. 
The rest of the proof amounts to checking that if $(a,b,c)$ is in the output of $\HU{'\NumDist}$, then there are at most 5 different inputs that lead to $(a,b,c)$.
There are 10 possible candidate input triples that lead to output $(a,b,c)$. Namely,
\begin{enumerate}[]
\item $(a, b, c) = (i_{1}, r_{1}, h'(i_{1}, r_{1})) = \HU{'\NumDist}(i_{1}, j_{1}, r_{1}),$
if $j_{1} = h'(i_{1}, r_{1})$  and $i_{1} < r_{1}$,
\item $(a, b, c) = (r_{2}, i_{2}, h'(i_{2}, r_{2}))=\HU{'\NumDist}(i_{2}, j_{2}, r_{2}),$
if $j_{2} = h'(i_{2}, r_{2})$ and $r_{2} < i_{2}$,
\item $(a, b, c) = (i_{3}, j_{3}, h'(i_{3}, r_{3}))= \HU{'\NumDist} (i_{3}, j_{3}, r_{3}),$
if $j_{3} \neq h'(i_{3}, r_{3})$,
\item $(a, b, c) = (j_{4}, r_{4}, h'(i_{4}, r_{4}))= \HU{'\NumDist}(i_{4}, j_{4}, r_{4}),$
if $j_{4} \neq h'(i_{4}, r_{4})$ and $j_{4} < r_{4}$,
\item $(a, b, c) = (r_{5}, j_{5}, h'(i_{5}, r_{5}))= \HU{'\NumDist}(i_{5}, j_{5}, r_{5}),$
if $j_{5} \neq h'(i_{5}, r_{5})$ and $r_{5} < j_{5}$,
\item $(a, b, c) = (j_{6}, r_{6}, h'(j_{6}, r_{6}))= \HU{'\NumDist}( (i_{6}, j_{6}, r_{6}),$
if $i_{6} = h'(j_{6}, r_{6})$ and $j_{6} < r_{6}$,
\item $(a, b, c) = (r_{7}, j_{7}, h'(j_{7}, r_{7}))= \HU{'\NumDist} (i_{7}, j_{7}, r_{7}),$
if $i_{7} = h'(j_{7}, r_{7})$ and $r_{7} < j_{7}$,
\item $(a, b, c) = (i_{8}, j_{8}, h'(j_{8}, r_{8}))= \HU{'\NumDist} (i_{8}, j_{8}, r_{8}),$
if $i_{8} \neq h'(j_{8}, r_{8})$,
\item $(a, b, c) = (i_{9}, r_{9}, h'(j_{9}, r_{9}))= \HU{'\NumDist} (i_{9}, j_{9}, r_{9}),$
if $i_{9} \neq h'(j_{9}, r_{9})$ and $i_{9} < r_{9}$,
\item $(a, b, c) = (r_{10}, i_{10}, h'(j_{10}, r_{10}))= \HU{'\NumDist} (i_{10}, j_{10}, r_{10}),$
if $i_{10} \neq h'(j_{10}, r_{10})$ and $r_{10} < i_{10}$.
\end{enumerate}
Twelve pairs of the 10 scenarios above cannot simultaneously hold. The 12 pairs of scenarios that cannot both hold are displayed as edges in a graph in Figure \ref{app:fig:collision_graph}. E.g. edge $(4,10)$ represents that scenarios 4 and 10 that cannot both hold. The proof that the pairs represented by edges in Figure \ref{app:fig:collision_graph} cannot both hold is done below.
A maximum independent set of this graph is $\{3,8,9,2,x\}$, where $x \in \{1,4,6,10\}$. Thus at most 5 input scenarios lead to a given output triple.

What remains to be proved is that several pairs of the 10 scenarios described above cannot both hold.

Recall that for any input triple $(i,j,r)$ we always have $1 \leq i < j \leq \NumDist$, and $r\in [\NumDist-1]$.

Scenarios 1 and 5 cannot both hold, because that would imply $r_5=i_1, j_5=r_1$, which would imply $h'(r_5,j_5)=h'(i_1,r_1)=h'(i_5,r_5)$, which since $r_5,i_5<\NumDist$ would imply $i_5=j_5$, contradicting $i_5 < j_5$.

Scenarios 1 and 6 cannot  both hold, because that would imply $a = i_{1} < j_{1} = h'(i_{1}, r_{1}) = c = h'(j_{6}, r_{6}) = i_{6} < j_{6} = a$.

Scenarios 2 and 7 cannot both hold, because that would imply that $j_2 = h'(i_2,r_2) = h'(j_7,r_7) = i_7 < j_7 = i_2$,  contradicting $i_2 < j_2$.

Scenarios 1 and 4 cannot both hold, because that would imply
$j_4=i_1<\NumDist$ and $r_1 = r_4$, which would imply $h'(j_4,r_4)=h'(i_1,r_1)=h'(i_4,r_4)$, which since $i_4,j_4 < \NumDist$ would imply $j_4 = i_4$, contradicting $i_4 < j_4$.

Scenarios 2 and 5 cannot both hold, because that would imply 
$j_5=i_2<\NumDist, r_5=r_2$, which would imply $h'(j_5,r_5)=h'(i_2,r_2)=h'(i_5,r_5)$, which since $j_5<\NumDist$ would imply $i_5=j_5$, contradicting $i_5 < j_5$.

Scenarios 6 and 10 cannot both hold, because that would imply $r_{10} = j_6 < r_6 = i_{10} < \NumDist$, which would imply $h'(r_{10},i_{10})=h'(j_6,r_6)=h'(j_{10},r_{10})$. This in turn would imply one of two things. If $j_{10} < \NumDist$, then $h'(r_{10},i_{10})=h'(j_{10},r_{10})$ would imply $j_{10}=i_{10}$, contradicting $i_{10} < j_{10}$. If on the other hand $j_{10} = \NumDist$, then $h'(r_{10},i_{10})=h'(j_{10},r_{10})$ would imply $1+(r_{10} + i_{10} - 1 \text{ mod } \NumDist) = 1 + (r_{10} \text{ mod } \NumDist -1)$. Recalling that $r_{10} < r_6 \leq \NumDist-1$, we would get $i_{10} - 1 = 0 \text{ mod } \NumDist$. This would imply $i_{10} = 1$, contradicting $i_{10} > r_6 \geq 1$.

Scenarios 7 and 10 cannot both hold, because that would imply $r_7=r_{10}<j_7=i_{10}\leq \NumDist-1$, which would imply $h'(i_{10},r_{10})=h'(j_7,r_7)=h'(j_{10},r_{10})$. This in turn would imply one of two things. If $j_{10} < \NumDist$, then $h'(i_{10},r_{10})=h'(j_{10},r_{10})$ would imply $i_{10}=j_{10}$, contradicting $i_{10} < j_{10}$. If, on the other hand, $j_{10} = \NumDist$, then $h'(i_{10},r_{10})=h'(j_{10},r_{10})$ would imply $1+(i_{10} + r_{10} - 1 \text{ mod } \NumDist) = 1 + (r_{10} \text{ mod } \NumDist -1)$. Recalling that $r_{10} < j_7 =i_{10}\leq \NumDist-1$, we would get $i_{10} - 1 = 0 \text{ mod } \NumDist$. This would imply $i_{10} = 1$, contradicting $i_{10} > r_{10} \geq 1$.

Scenarios 5 and 6 cannot both hold, because that would imply $j_6 = r_5 < j_5 = r_6 \leq \NumDist-1$, and $h'(i_5, r_5)=h'(j_6,r_6)$, which would imply $h'(i_5,j_6) = h'(j_6,j_5)$, which since $j_6 \leq \NumDist-2$ would imply $i_5=j_5$, contradicting $i_5< j_5$.

Scenarios 1 and 10 cannot both hold, because that would imply $r_1=i_{10} > i_1 = r_{10} \geq 1$, which would imply $h'(r_{10},i_{10}) = h'(i_1,r_1) = h'(j_{10},r_{10})$. This would imply one of two things. If $j_{10}< \NumDist$, and, recalling that $r_{10}\leq \NumDist$, this would imply $i_{10} = j_{10}$, contradicting $i_{10} < j_{10}$. If on the other hand, $j_{10} =  \NumDist$, this would imply $1 + (r_{10} \text{ mod } \NumDist - 1) = 1 + (r_{10}  +i_{10} - 1 \text{ mod } \NumDist)$, which would imply $i_{10} = 1$, contradicting $i_{10} > 1$.

Scenarios 4 and 6 cannot both hold, because that would imply $j_4 = j_6 < r_4 = r_6 < \NumDist$, which would imply $h'(i_4,r_4)=h'(j_6,r_6) = h'(j_4,r_4)$, which recalling that $j_4 < \NumDist$ would imply $i_4 = j_4$, contradicting  $i_4 < j_4$.

Scenarios 4 and 7 cannot both hold, because that would imply $j_4 = r_7 < j_7 = r_4 < \NumDist$, which would imply $h'(i_4,r_4)=h'(r_7,j_7) = h'(j_4,r_4)$, which recalling that $j_4 < \NumDist$ would imply $i_4 = j_4$, contradicting  $i_4 < j_4$.

Scenarios 4 and 10 cannot both hold, because that would imply $i_4 < j_4 = r_{10} < r_4 = i_{10} < j_{10}$, which would imply $h'(i_4,i_{10}) = h'(i_4,r_{4}) = h'(j_{10},r_{10}) =  h'(j_{10},j_{4})$. This would imply one of two things. If $j_4 < \NumDist$, this would imply $1 + ( i_4 + i_{10} + 1\text{ mod } \NumDist) = 1 + ( j_4 + j_{10} + 1\text{ mod } \NumDist)$, which would imply $(j_4 - i_4)+ (j_{10} - i_{10}) = 0 \text{ mod } \NumDist$, which since $j_4 > i_4,j_{10} > i_{10}$ would imply $(j_4 - i_4)+ (j_{10} - i_{10}) = \NumDist$. This in turn would imply $j_{10}= \NumDist + (i_{10} - j_4)+ i_4 > \NumDist + i_4 > \NumDist$, since $i_{10} - j_4 > 0$, and $i_4 > 0$, contradicting $j_{10} \leq \NumDist$. If on the other hand $j_4 = \NumDist$, this would imply $1 + ( i_4 + i_{10} + 1\text{ mod } \NumDist) = 1 + ( j_4 \text{ mod } \NumDist - 1)$, which since $j_4  = r_{10} < r_4 \leq \NumDist -1$ would imply $(j_4 - i_4 - i_{10} + 1) \text{ mod \NumDist} = 0$, which imply either $j_4 - i_4 - i_{10} + 1 = 0$ or $j_4 \hspace{-0.1cm}-\hspace{-0.1cm} i_4 \hspace{-0.1cm}-\hspace{-0.1cm} i_{10} + 1  = -\NumDist$. The 1st option would imply $j_4 = i_{10} \hspace{-0.1cm}+\hspace{-0.1cm} i_4 \hspace{-0.1cm}-\hspace{-0.1cm} 1 \hspace{-0.1cm}\geq\hspace{-0.1cm} i_{10}$, contradicting $j_4 \hspace{-0.1cm}<\hspace{-0.1cm} i_{10}$. The 2nd option would imply $i_{10} \hspace{-0.1cm}=\hspace{-0.1cm} \NumDist \hspace{-0.1cm}+\hspace{-0.1cm} 1 \hspace{-0.1cm}+\hspace{-0.1cm} j_4 \hspace{-0.1cm}-\hspace{-0.1cm} i_4$ \hspace{-0.1cm}>\hspace{-0.1cm} \NumDist, contradicting $i_{10} \hspace{-0.1cm}< \NumDist$.
\end{proof}
\begin{remark}
Note that we might have $a = b$ in an triple $(a,b,c)$ output by $\HU{'\NumDist}$. For example, if $n = 4$, all $5$ triples $(1,2,3)$, $(1,3,2)$, $(2,3,2)$, $(2,3,3)$, and $(2,3,4)$ map to $(2,3,1)$. Also, both $(1,2,1)$ and $(1,4,1)$ map to $(1,1,2)$ whose first two components equal.
\end{remark}
\subsection{Proof of lower bound on $C(n)$}
We will show 
\begin{align*}
    (\NumDist-1) \WU{1,\mydots,\NumDist} \leq 
5\sum^{\NumDist}_{r = 1} \WU{1,\mydots,r-1,r+1,\mydots,\NumDist+1}.
\end{align*}
For $r\in[\NumDist]$, let $\PI^{(\optsymbol r)}$ be a minimizer that leads to $\WU{1,\mydots,r-1,r+1,\mydots,\NumDist+1}$. We would normally use $\RI^{(\optsymbol r)}$ for this minimizer, but, to avoid confusions between $\RI$ and $r$, we avoid doing so. 
For $i,j\in [\NumDist+1]\backslash\{r\}$, let ${\PI^{(\optsymbol r)}}^{i,j}$ be the marginal of $\PI^{(\optsymbol r)}$ for the sample space $\XU{i}\times\XU{j}$. Since $\PI^{(\optsymbol r)}$ satisfies the constraints in \eqref{eq:main_def_of_W}, its marginal over $\XU{i}$ equals $\PIU{i}$.

Let $h'(\cdot,\cdot)$ be the map in \eqref{eq:def_h_prime_for_C_n}. 
For each $r \in [\NumDist-1]$, define the mass function over $\XU{1}\times\mydots\times\XU{\NumDist}$
\begin{equation} 
\QI^{(r)} = \Gmap{\PIU{r},\{{\PI^{(\optsymbol h'(i,r))}}^{i \mid r}\}_{i \in [\NumDist] \backslash r}}, \end{equation}
where ${\PI^{(\optsymbol h'(i,r))}}^{i \mid r}$ satisfies ${\PI^{(\optsymbol h'(i,r))}}^{i \mid r} \PIU{r} = {\PI^{(\optsymbol h'(i,r))}}^{i, r}$.
Note that $h'(i,r) \notin \{i,r\}, \forall\ 1\leq i\leq \NumDist$, and $r \in [\NumDist-1]$. Thus, ${\PI^{(\optsymbol h'(i,r))}}^{i, r}$ and ${\PI^{(\optsymbol h'(i,r))}}^{i \mid r}$ exist. Let ${\QIU{(r)}}^{i}$ be the marginal of $\QIU{(r)}$ over $\XU{i}$, and ${\QIU{(r)}}^{i,j}$ over $\XU{i} \times \XU{j}$.

By Lemma \ref{th:marginals_of_G_map}, we know that ${\QIU{(r)}}^{i}$ equals $\PIU{i}$ (given) for all $i \in [\NumDist]$, and hence $\QIU{(r)}$ satisfies the optimization constraints in \eqref{eq:main_def_of_W} for $\WU{1,\mydots,\NumDist}$. 
Therefore, we can write 
\begin{align}\label{eq:proof_of_th_2_second_step}
(\NumDist-1) \WU{1,\mydots,\NumDist}=\sum^{\NumDist-1}_{r = 1} \sum_{ 1 \leq i < j \leq \NumDist} \inner{\DistU{i,j}}{\PIsU{i,j}}_{\ell}^{\frac{1}{\ell}} \leq \sum^{\NumDist-1}_{r = 1} \sum_{ 1 \leq i < j \leq \NumDist} \inner{\DistU{i,j}}{{\QIU{(r)}}^{i,j}}_{\ell}^{\frac{1}{\ell}}, 
\end{align}
where $\PIsU{i,j}$ is the bivariate marginal over $\XU{i}\times\XU{j}$ of the minimizer $\PIs$ for $\WU{1,\mydots,\NumDist}$.

We now bound each term in the inner most sum on the r.h.s. of \eqref{eq:proof_of_th_2_second_step} as 
\begin{align}
\inner{\DistU{i,j}}{{\QIU{(r)}}^{i,j}}_{\ell}^{\frac{1}{\ell}} 
&\labelrel\leq{eq:proof_th_2_ineq_1} \inner{\DistU{i,r}}{{\QIU{(r)}}^{i,r}}_{\ell}^{\frac{1}{\ell}}
+ \inner{\DistU{r,j}}{{\QIU{(r)}}^{r,j}}_{\ell}^{\frac{1}{\ell}}\label{eq:proof_of_th_2_eq_tri_1} \\
&\labelrel={eq:proof_of_th_2_eq_1} \inner{\DistU{i,r}}{{\QIU{(r)}}^{i,r}}_{\ell}^{\frac{1}{\ell}} + \inner{\DistU{j,r}}{{\QIU{(r)}}^{j,r}}_{\ell}^{\frac{1}{\ell}}\\
&\labelrel={eq:proof_of_th_2_eq_2} \inner{\DistU{i,r}}{{\PI^{(\optsymbol h'(i,r))}}^{i, r}}_{\ell}^{\frac{1}{\ell}} + \inner{\DistU{j,r}}{{\PI^{(\optsymbol h'(j,r))}}^{j, r}}_{\ell}^{\frac{1}{\ell}},\label{eq:proof_of_th_2_eq_tri_3}
\end{align}
where $i\neq r$,  $r\neq j$, and: \eqref{eq:proof_th_2_ineq_1}
holds by Lemma \ref{th:triang_ineq_for_inner_terms}; \eqref{eq:proof_of_th_2_eq_1} holds because $\Dist$ is symmetric; and \eqref{eq:proof_of_th_2_eq_2} holds because, by Lemma \ref{th:marginals_of_G_map}, ${\QIU{(r)}}^{i,r}={\PI^{(\optsymbol h'(i,r))}}^{i, r}$ and 
${\QIU{(r)}}^{j,r}={\PI^{(\optsymbol h'(j,r))}}^{j, r}$.

Bounding the r.h.s. of \eqref{eq:proof_of_th_2_second_step} using \eqref{eq:proof_of_th_2_eq_tri_1} - \eqref{eq:proof_of_th_2_eq_tri_3}, we re-write the resulting inequality using the notation
\begin{align}\label{eq:proof_of_th_2_intermediate_ineq}
     (\NumDist-1) \WU{1,\mydots,\NumDist} = \sum^{\NumDist-1}_{r=1}\sum_{ 1 \leq i < j \leq \NumDist} w_{(i,j,r)} 
     \leq  \sum^{\NumDist-1}_{r=1} \sum_{ 1 \leq i < j \leq \NumDist} v_{(i,r,h'(i,r))} + v_{(j,r,h'(j,r))}, 
\end{align}
where (a) each $w_{(i,j,r)}$ represents one $\inner{\DistU{i,j}}{\PIsU{i,j}}_{\ell}^{\frac{1}{\ell}}$ on the l.h.s. of \eqref{eq:proof_of_th_2_second_step}. Since $h'(i,r) \notin \{i,r\}$, when $i \neq r$ the mass ${\PI^{(\optsymbol h'(i,r))}}^{i,r}$ exists; (b) each $v_{(s,t,l)}$ represents  $\inner{\DistU{s,t}}{{\PI^{(\optsymbol l)}}^{s,t}}_{\ell}^{\frac{1}{\ell}}$ if $s \neq t$, and is zero if $s = t$; and (c) we are implicitly assuming that the first two components of each triple on the r.h.s. of \eqref{eq:proof_of_th_2_intermediate_ineq} are ordered, i.e. if e.g. $r < i$ then $(r,i,h'(i,r))$ should be red as $(i,r,h'(i,r))$.

Finally, using this same compact notation, we write 
%
%
\begin{equation}\label{eq:proof_of_th_2_def_rhs_triangle}
5\sum^{\NumDist}_{r = 1} \WU{1,\mydots,r-1,r+1,\mydots,\NumDist+1} = 5 \sum^{\NumDist}_{r = 1} \sum_{i,j\in [\NumDist+1] \backslash \{r\}, i < j} v_{(i,j,r)},
\end{equation}
and now we will show that \eqref{eq:proof_of_th_2_def_rhs_triangle} upper-bounds the r.h.s. of \eqref{eq:proof_of_th_2_intermediate_ineq}, finishing the proof.

First, by Lemma \ref{th:triang_ineq_for_inner_terms} and the symmetry of $d$, observe that the following inequalities are true
\begin{align}
v_{(i,r,h'(i,r))} &\leq  v_{(i,j,h'(i,r))} + v_{(j,r,h'(i,r))}, \label{eq:proof_of_th_2_last_ineq_1} \end{align}
and,
\begin{align}
v_{(j,r,h'(j,r))} &\leq  v_{(i,j,h'(j,r))} + v_{(i,r,h'(j,r))},\label{eq:proof_of_th_2_last_ineq_2}
\end{align}
as long as for each triple $(a,b,c)$ in the above expressions, $c \notin \{a,b\}$.
We will use inequalities \eqref{eq:proof_of_th_2_last_ineq_1} and \eqref{eq:proof_of_th_2_last_ineq_2} to upper bound some of the terms on the r.h.s. of \eqref{eq:proof_of_th_2_intermediate_ineq}, and then we will show that the resulting sum can be upper bounded by \eqref{eq:proof_of_th_2_def_rhs_triangle}.
In particular, for each $(i,j,r)$ considered in the r.h.s. of \eqref{eq:proof_of_th_2_intermediate_ineq}, we will apply inequalities \eqref{eq:proof_of_th_2_last_ineq_1} and \eqref{eq:proof_of_th_2_last_ineq_2} such that  the terms $v_{(a,b,c)}$ that we get after their use have triples $(a,b,c)$ that match the triples in ${\HU{'}}^{\NumDist}(i,j,r)$, defined in Def. \ref{def:complex_map_H_prime}.
To be concrete, for example, if ${\HU{'}}^{\NumDist}$ maps $(i,j,r)$ to $\{(i,r,h'(i,r)),(r,j,h'(j,r))\}$, then we do not apply 
\eqref{eq:proof_of_th_2_last_ineq_1} and \eqref{eq:proof_of_th_2_last_ineq_2}, and we leave $v_{(i,r,h'(i,r))} + v_{(r,j,h'(j,r))}$ as is on the r.h.s. of \eqref{eq:proof_of_th_2_intermediate_ineq}. If, for example, ${\HU{'}}^{\NumDist}$ maps $(i,j,r)$ to $\{(i,r,h'(i,r)),(i,j,h'(j,r)),(i,r,h'(j,r))\}$, then we leave the first term in $v_{(i,r,h'(i,r))} + v_{(r,j,h'(j,r))}$ in the r.h.s. of \eqref{eq:proof_of_th_2_intermediate_ineq} untouched, but we upper bound the second term using \eqref{eq:proof_of_th_2_last_ineq_2} to get $v_{(i,r,h'(i,r))} + v_{(i,j,h'(j,r))} + v_{(i,r,h'(j,r))}$. 

After proceeding in this fashion, and by Lemma \ref{th:collision_complex_map}, we know that all of the terms $v_{(a,b,c)}$ that we obtain have triples $(a,b,c)$ with $c\neq \{a,b\}$, $c\in [\NumDist-1]$, and $1\leq a \leq b \leq \NumDist$. Therefore, these terms are either zero (if $a=b$) or appear in \eqref{eq:proof_of_th_2_def_rhs_triangle}. Also because of Lemma \ref{th:collision_complex_map}, each triple $(a,b,c)$ with non-zero $v_{(a,b,c)}$ will not appear more than $5$ times. Therefore, the upper bound we build with the help of $h'$ for the r.h.s of \eqref{eq:proof_of_th_2_intermediate_ineq} can be upper bounded by \eqref{eq:proof_of_th_2_def_rhs_triangle}.

\subsection{Proof of upper bound on $C(n)$}

Consider the following setup. Let $\SizeDistU{i} = \SizeDist$ for all $i \in [\NumDist]$, and $\XUL{i}{s} \in \mathbb{R}$ for all $i \in [\NumDist]$, $s \in [\SizeDist]$. Define $\Dist$ such that $\DistUL{i,j}{s,t}$ is $\vert\XUL{i}{s} - \XUL{j}{t}\vert$, if $s = t$, and infinity otherwise. Let $\PIUL{i}{s} = \frac{1}{\SizeDist}$ for all  $i \in [\NumDist]$, $s \in [\SizeDist]$. 

Any optimal solution $\RIs$ to the pairwise MMOT problem must have bivariate marginals that satisfy $\RIsUL{i,j}{s,t} = \frac{1}{\SizeDist} \delta_{s,t}$, and thus
$\inner{\DistU{i,j}}{\RIsU{i,j}}_{\ell}^{\frac{1}{\ell}}= \frac{1}{\SizeDist^\ell}\| \XU{i} - \XU{j}\|_{\ell}$,
where we interpret $\XU{i}$ has a vector in $\mathbb{R}^{\SizeDist}$, and $\|\cdot\|_{\ell}$ is the vector $\ell$-norm.
Therefore, ignoring the factor $\frac{1}{\SizeDist^\ell}$, we only need to prove that \ref{th:n_metric_iv}. in Def. \ref{def:gen_metric_prop_W} holds with $C(\NumDist) = \NumDist - 1$ when 
$\WU{1:\NumDist}$ is defined as $\sum_{1\leq i<j\leq \NumDist} \|\XU{i} - \XU{j}\|_{\ell}$.
This in turn is a standard result, whose proof (in a more general form) can be found e.g. in Example 2.4 in \cite{kiss2018generalization}.

\end{appendices}

\end{document}